\documentclass[manuscript,screen]{acmart}
\AtBeginDocument{%
  }

\setcopyright{acmlicensed}
\copyrightyear{2026}
\acmYear{2026}
\acmJournal{TKDD}
\makeatletter
\def\@journalName{ACM Transactions on Knowledge Discovery from Data}%
\def\@journalNameShort{ACM Trans. Knowl. Discov. Data}%
\makeatother
\acmDOI{}

\usepackage[nameinlink,capitalize,noabbrev]{cleveref}
\usepackage{mathtools}  
\crefname{figure}{Fig.}{Figs.}
\Crefname{figure}{Fig.}{Figs.}

\crefname{table}{Tab.}{Tabs.}
\Crefname{table}{Tab.}{Tabs.}

\crefname{proposition}{Prop.}{Props.}
\Crefname{proposition}{Prop.}{Props.}

\crefname{section}{Sec.}{Secs.}
\Crefname{section}{Sec.}{Secs.}

\crefname{subsection}{Sec.}{Secs.}
\Crefname{subsection}{Sec.}{Secs.}

\crefname{equation}{Eq.}{Eqs.}
\Crefname{equation}{Eq.}{Eqs.}

\crefname{theorem}{Thm.}{Thms.}
\Crefname{theorem}{Thm.}{Thms.}

\crefname{lemma}{Lem.}{Lems.}
\Crefname{lemma}{Lem.}{Lems.}

\crefname{appendix}{App.}{Apps.}
\Crefname{appendix}{App.}{Apps.}

\crefformat{figure}{#2Fig.~#1#3}
\Crefformat{figure}{#2Fig.~#1#3}
\crefformat{table}{#2Tab.~#1#3}
\Crefformat{table}{#2Tab.~#1#3}
\crefformat{section}{#2Sec.~#1#3}
\Crefformat{section}{#2Sec.~#1#3}

\usepackage{wrapfig}
\usepackage{todonotes}
\usepackage{hyperref}
\usepackage{url}
\usepackage{tikz}
\usetikzlibrary{graphs}
\usepackage[T1]{fontenc}    
\usepackage{url}            
\usepackage{booktabs}       
\usepackage{nicefrac}       
\usepackage{microtype}      
\usepackage{multirow}
\usepackage[normalem]{ulem}
\usepackage{breqn} 

\usepackage{ulem}
\usepackage{graphicx} 
\usepackage{outlines} 
\usepackage{nomencl}
\usepackage{comment}
\usepackage{booktabs}
\usepackage{natbib}
\usepackage{caption}
\usepackage{subcaption}
\usepackage{float}
\usepackage{comment}
\usepackage{wrapfig}
\usepackage{enumitem}
\usepackage{makecell} 
\usepackage{wrapfig}  
\usepackage[T1]{fontenc}    %
\usepackage{url}            %
\usepackage{booktabs}       %
\usepackage{nicefrac}       %
\usepackage{microtype}      %
\usepackage{tablefootnote}
\usepackage{etoolbox} %
\usepackage{graphicx}
\usepackage{subcaption}
\usepackage{listings}
\usepackage{float}
\usepackage{enumitem}
\usepackage{xspace}
\usepackage{multirow}
\usepackage{array}
\usepackage{tabularx}
\usepackage{booktabs}
\usepackage{xcolor,colortbl}
\usepackage{pifont} 
\usepackage{soul}
\usepackage{xfrac}
\usepackage{wrapfig}
\usepackage{mdwlist}
\usepackage{arydshln}
\usepackage{enumitem}
\usepackage{adjustbox}

\usepackage[ruled,vlined]{algorithm2e}

\definecolor{Apricot}{rgb}{0.98, 0.81, 0.69}
\usepackage{algorithm2e}   
\usepackage{hyperref}      
\usepackage{cleveref}      

\crefname{algocf}{Algorithm}{Algorithms}
\Crefname{algocf}{Algorithm}{Algorithms}

\newcommand{\cmark}{{\color{darkgreen}\ding{51}\xspace}}

\definecolor{darkgreen}{RGB}{0,153,0}
\definecolor{royalgreen}{RGB}{11,102,35}

\definecolor{blue}{RGB}{0,0,0}
\definecolor{royalgreen}{RGB}{0,0,0}
\definecolor{red}{RGB}{0,0,0}

\newcommand{\bymark}{{\color{black}\ding{51}\xspace}}  

\usepackage[T1]{fontenc}
\usepackage{graphicx}
\usepackage{booktabs}
\usepackage[misc]{ifsym}

\usepackage{hyperref}
\usepackage{enumitem}
\usepackage{mwe}
\usepackage{comment}
\usepackage[table]{xcolor}
\usepackage{adjustbox}
\usepackage{cite} 
\usepackage{graphicx} 
\usepackage[dvipsnames]{xcolor}

\usepackage[normalem]{ulem} 
\usepackage{multirow}
\usepackage{booktabs}    
\usepackage{pifont}      
\usepackage{subcaption}
\usepackage{wrapfig}
\usepackage{caption}
\usepackage{amsmath}
\usepackage{wrapfig}   
\usepackage{booktabs}  
\usepackage{graphicx}

\usepackage[dvipsnames]{xcolor}

\newcommand{\method}{\textsc{Bridge-LP}\xspace}

\newcommand{\mechanismOne}{Jumping Connection of Textual Proximity}
\newcommand{\mechanismTwo}{Compressed Embedding}
\newcommand{\mechanismThree}{Cache Embedding Training Strategy}

\newcommand{\signaltwo}{Semantic proximity\xspace}
\newcommand{\signalone}{Structural proximity\xspace}
\newcommand{\signalthree}{Propagated proximity\xspace}

\newcommand{\modify}[1]{{\textsf{\textcolor{red}{#1}}}}

\newcommand{\mechanismone}{jumping connection of textual proximity}
\newcommand{\mechanismtwo}{compressed embedding}
\newcommand{\mechanismthree}{cache embedding training strategy}

\newcommand{\Cone}{Common Neighbor Embedding }
\newcommand{\Ctwo}{Semantic Feature Proximity }
\newcommand{\Cthree}{Aggregated Proximity Feature }
\crefname{algorithm}{Algorithm}{Algorithms}

\begin{document}
\title[\method:  A Theoretically Grounded Framework for Link Prediction on TAGs]{Bridging Semantics and Structure: \\ A Theoretically Grounded Framework for Link Prediction on TAGs}

\author{Chen Shao}
\email{chen.shao2@kit.edu}
\orcid{0000-0002-7807-1090}
\affiliation{%
  \institution{Karlsruhe Institute of Technology (KIT)}
  \city{Karlsruhe}
  \country{Germany}
}
\author{Ruirui Wang}
\email{ruirui.wang@uzh.ch}
\orcid{0009-0001-4378-4920}
\affiliation{%
  \institution{University of Zurich}
  \city{Zurich}
  \state{Zurich}
  \country{Switzerland}
}
\author{Tobias Käfer}
\email{tobias.kaefer@kit.edu}
\orcid{0000-0003-0576-7457}
\affiliation{%
  \institution{Karlsruhe Institute of Technology (KIT)}
  \city{Karlsruhe}
  \country{Germany}
}
\author{Danai Koutra}
\email{dkoutra@umich.edu}
\orcid{0000-0002-3206-8179}
\affiliation{%
  \institution{University of Michigan}
  \city{Ann Arbor}
  \state{Michigan}
  \country{USA}
}

\renewcommand{\shortauthors}{Chen et al.}


\begin{abstract}
Link prediction on text-attributed graphs (TAGs) is a fundamental problem with applications in knowledge discovery, recommendation and beyond. Recent progress has highlighted the promise of combining Graph Neural Networks (GNNs) with Language Models (LMs) in order to leverage the complementary structural and semantic signals. However, due to the distinct inductive biases of GNNs and LMs, the design of an optimal fusion strategy remains an open research question. In this work, we present \method, an intuitive, simple, efficient and theoretically grounded framework that unifies graph and language representations. 
It is built around three theoretically-justified design components: (D1) \mechanismone, which enables robust LM training; (D2) \mechanismtwo, which encodes neighborhood information long exploited by classical link prediction heuristics, thereby enriching structural signals in a learnable way; and (D3) \mechanismthree, which reduces computational complexity. Together, these mechanisms establish an end-to-end cross-modal framework that scales to large graphs while capturing richer graph–text interactions. We conduct an extensive empirical evaluation, comparing against 40 existing models across 9 diverse datasets. Experiments show that \method consistently outperforms state-of-the-art approaches, delivering up to a 43.06\% improvement in Mean Reciprocal Rank (MRR) and setting a new benchmark for language–graph integration. 

\end{abstract}

\begin{CCSXML}
<ccs2012>
   <concept>
       <concept_id>10010147.10010257.10010321</concept_id>
       <concept_desc>Computing methodologies~Machine learning algorithms</concept_desc>
       <concept_significance>500</concept_significance>
       </concept>
 </ccs2012>
\end{CCSXML}

\ccsdesc[500]{Computing methodologies~Machine learning algorithms}
\keywords{
Link Prediction,
Graph Neural Networks,
Hybrid graph-language model}

\maketitle

\newcommand\blfootnote[1]{%
  \begingroup
  \renewcommand\thefootnote{}\footnote{#1}%
  \addtocounter{footnote}{-1}%
  \endgroup
}
\blfootnote{This paper was accepted for publication in ACM Transactions on Knowledge Discovery from Data (TKDD) on 2 September 2026. This is the authors' preprint version; the final version will appear in the ACM Digital Library.}

\section{Introduction}

Link Prediction (LP), which estimates the likelihood of an edge between two nodes in a graph, underpins applications such as protein-protein interaction prediction \citep{10.1093/nar/gky1131}, recommendation systems \citep{10.1145/1065385.1065415} and knowledge graph completion~\citep{10.1145/3366423.3380027}. Formally, LP is often cast as a ranking problem: given a candidate edge $e_{ij}$ between nodes $i$ and $j$ in a graph with adjacency matrix $\mathbf{A}$, the objective is to assign a score reflecting how likely it is that $i$ and $j$ are connected. The dominant paradigm follows an encoder-decoder design:  vector representations of nodes $i$ and $j$ are learned first and then a decoder (e.g., dot product) is used to score candidate links.   Early approaches for LP relied on heuristic measures such as common neighbors and shortest paths \citep{ADAMIC2003211,Zhou_2009}. Modern methods are dominated by GNNs, which aggregate node features across graph topology to produce embeddings in an end-to-end manner. A vanilla GNN approach to LP employs a GNN backbone for node embeddings, followed by a decoder applied to embedding pairs, such as dot product or hyperbolic distance \citep{liu_hyperbolic_nodate,Zhu2024OnTI}. To capture richer pairwise structural dependencies,  SEAL \citep{zhang_link_2018} and BUDDY  \citep{chamberlain_graph_2023} introduce techniques for aggregating local structural embeddings, including relative distance and Jaccard similarity.  

\begin{wrapfigure}{r}{0.45\textwidth}
    \centering
    \includegraphics[width=\linewidth]{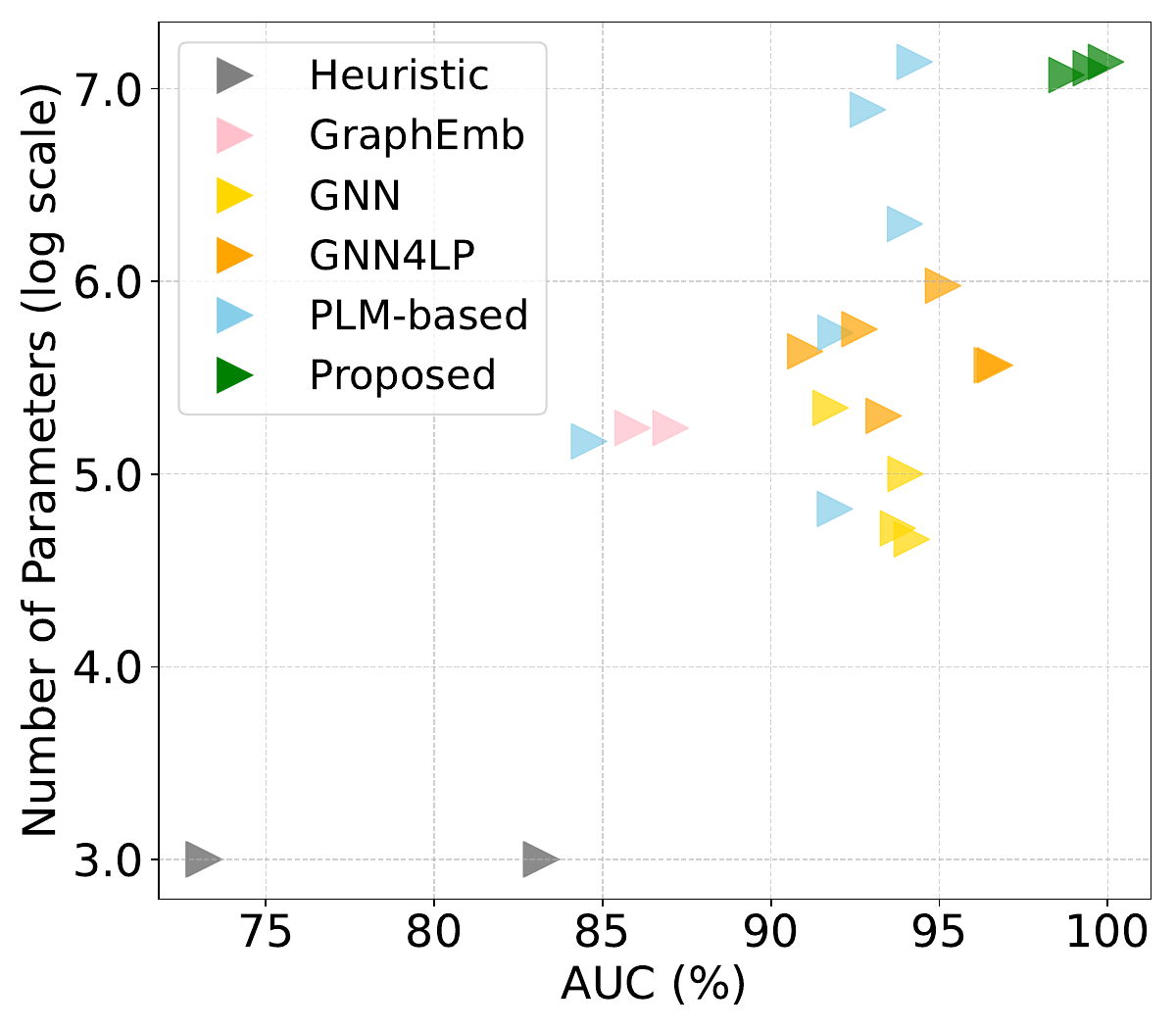}
    \caption{
    \method maintains accuracy (x-axis) while improving efficiency (y-axis)
    by 5\% over our most competitive baseline on GNN4LP and PLM-based baselines.
    \S \Cref{sec:benchmark}}
    \label{fig:benchmark_auc}
\end{wrapfigure}

In many real-world graphs, nodes are attributed with textual information, forming \textit{text-attributed graphs} (TAGs), which provide complementary semantics beyond structural properties. For example, in e-commerce graphs, user reviews or product descriptions serve as textual attributes, while edges may indicate co-purchase, co-viewed or other relations~\citep{yan2023a, nerrise2025grafs}. Yet standard GNN pipelines underutilize such textual information by simply representing it as a histogram of keywords or shallow semantic embeddings \citep{mikolov2013efficientestimationwordrepresentations}.

To better exploit the semantic potential of TAGs, recent studies have started to fuse GNNs with advanced language models (LMs) to integrate graph topology and semantics for link prediction. For instance, abstracts and titles in academic citation networks can be better embedded by sentence-embedding models such as MiniLM-L6-v2~\citep {Reimers2019SentenceBERTSE}, yielding more discriminative, robust and context-sensitive sentence representations than traditional static encoders \citep{mikolov2013efficientestimationwordrepresentations}. 

This research suggests the promise of a cross-modal framework, but the results are generally inconclusive due to two challenges. First, the performance gains rely on impractically expensive models that require jointly encoding pairs of texts with deep LMs and GCNs, greatly increasing the computational cost of inference. For instance, it could range from a few seconds to several hours and scaling from $10^3$ to $10^7$ parameters on the same dataset \citep{duan2024simteg, Chien2021NodeFE} (~\Cref{fig:benchmark_auc}). Second, recent studies consistently demonstrate that the cross-modal framework relying on a large LM does not necessarily improve performance over GNN alone; some preliminary studies also fail to outperform the current state-of-the-art technologies such as BUDDY and NCNC \citep{Chen2024TextspaceGF, Zhang2024TAGATG, yan2023a}.

In this work, we introduce \method, a novel cross-modal architecture for link prediction that simultaneously maximizes prediction accuracy (e.g., MRR, AUC, Hits@k) and maintains parameter efficiency and robustness (\Cref{fig:benchmark_auc,tab:compare-gcn,tab:compare-plm,tab:llm_result}). Our approach fuses three complementary graph signals through an MLP: 
(1) \textbf{\signalone}, capturing pairwise neighborhood connectivity; 
(2) \textbf{\signaltwo}, encoding edge-level textual or feature semantics; and 
(3) \textbf{\signalthree}, reflecting information aggregated from multi-hop neighbors through message passing. We fuse them through an MLP for link likelihood prediction and further propose three key designs that not only enhance cross-modal learning but also improve model expressiveness:
\textbf{(Design 1}) \mechanismOne \xspace for robust cross-modal training;
\textbf{(Design 2}) \mechanismTwo \xspace for compressing hidden embeddings; and
\textbf{(Design 3}) \mechanismThree \xspace for reducing training memory requirements. While variants of these components have been integrated in prior frameworks~\citep{Xu2018RepresentationLO, Wang2023NeuralCN, Chen2022CacheAugmentedII},
we are the first to theoretically justify their joint necessity within cross-modal iterative frameworks and  validate their effectiveness empirically across diverse datasets.

By leveraging these designs, \method is (i)~\textbf{effective},  seamlessly integrating text encoders and structural representations end-to-end; (ii)~\textbf{principled}, with theoretical guarantees supporting its design principles (D1–D3); and 
(iii)~\textbf{general}, functioning as a plug-and-play alternative to GCNs that integrates trivially with any PLM at no additional computational cost. Our contributions are as follows:\raggedbottom
\begin{enumerate}[left=0pt, labelsep=0.2em, itemsep=0.0em]

\item \textbf{\method: An End-to-End Cross-Modal Framework.} 
We propose \method, a theoretically motivated and computationally efficient framework that integrates graph and text modalities.  
Our method integrates three key design components, which together enable richer cross-modal representation learning than existing fusion strategies. 
\hfill (\S\Cref{sec:method}, ~\Cref{img:method}) 
\item \textbf{Theoretical Justification.}  
We provide theoretical justification for the effectiveness of our designs, which collectively enhance learning stability and efficiency, leading to more effective cross-modal representations. Furthermore, we show that they not only improve the performance of \method, but also serve as plug-in modules that consistently enhance existing GCN and PLM architectures. \hfill (\S ~\Cref{theorem:d1}, ~\Cref{prop:compact}, ~\Cref{thm:d3})
\item \textbf{Comprehensive Evaluation and Insights.}  We empirically evaluate \method against over 40 baselines on 9 TAG datasets under consistent experimental settings, establishing a standardized benchmark for link prediction. \method achieves state-of-the-art performance across all datasets, yielding up to 43.06\% improvement in MRR. Our results highlight the generalization limits of GCNs and demonstrate the benefits of incorporating PLMs. All code, data and benchmarks are publicly available at 
\href{https://github.com/ChenS676/TAG4LP}{[TAG4LP]}.  
\hfill (\S\Cref{sec:benchmark}, ~\Cref{tab:compare-gcn}, ~\Cref{tab:compare-plm})
\color{black}
\end{enumerate}

\begin{figure}
    \centering
    \includegraphics[width=\linewidth]{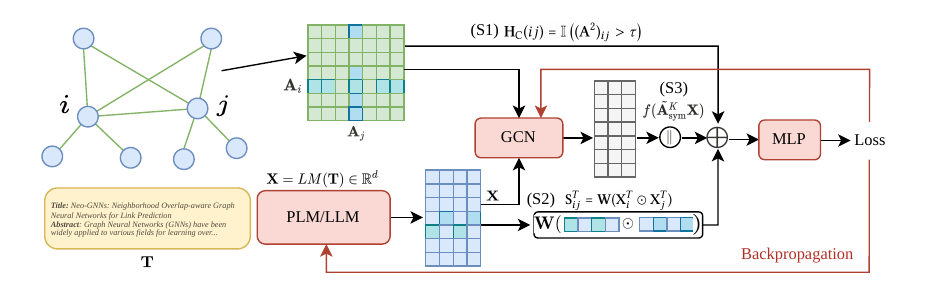}
    \caption{\small
     Overview of \method.
     The framework consists of three graph signals:
     \textbf{S1} \signalone \xspace \modify{from the graph structure (common neighbors, Eq.~\ref{equ:C2})};
     \textbf{S2} \signaltwo \xspace \modify{from PLM text embeddings};
     \textbf{S3} \signalthree from GCNs;
     and the final step concatenates \textbf{S1-3} through an MLP to perform link prediction.}
    \label{img:method}
\end{figure}


\section{Related Work}
\label{sec:related work}

Our work is related to approaches that fuse LLMs and GNNs for text-attributed graphs, as well as methods for link prediction. We will review these directions next.

\noindent \textbf{LLM-GCN Fusion Approaches for TAGs.} Recent research on TAGs has increasingly focused on bridging the gap between the semantic richness of LLMs and the structural modeling capabilities of GCNs. We categorize current approaches based on their text encoders, decoders (e.g., MLP, GCN) and optimization frameworks (e.g., Zero-Shot, Fine-Tune), as summarized in~\Cref{tab:comparison-related-work}.  

\textit{Text encoders} fall into two broad categories based on parameter scale: (1)~Shallow text embeddings: Early preprocessing methods transformed text attributes into statistical features and shallow representations~\citep{Harris1954DistributionalS, mikolov2013efficientestimationwordrepresentations}. (2)~PLM-based Encoders: These models capture higher-order semantics and generate sentence-level node embeddings. They are typically optimized using robust objectives such as contrastive fine-tuning with hard negatives and domain-adaptive pretraining, which allows for reliable transfer to specialized corpora with minimal supervision.  

PLM-based approaches can be grouped into three categories based on their \textit{optimization strategy}: (a)~Zero-Shot (ZS): The PLM is frozen and its pre-generated text embeddings are used as fixed inputs for graph aggregation (e.g., SimTeG~\citep{duan2024simteg}, GAINT~\citep{Chien2021NodeFE}, TAPE~\citep{he2023harnessing}). 
(b)~Fine-Tuned (FT): The PLM is integrated with the GCN and optimized end-to-end on a specific dataset, as seen in Graphformer~\citep{Yang2021GraphFormersGT} and Engine~\citep{Zhu2024EfficientTA}. 
(c)~Instruction-Tuned: Structural knowledge is incorporated into LLMs via prompt- or instruction-based tuning, as in GraphGPT~\citep{Tang2023GraphGPTGI} and LinkGPT~\citep{He2024LinkGPTTL}. \modify{A recent line of work also explores injecting graph structure into LLMs at the tokenizer level rather than through prompt/instruction tuning, e.g., the anchor-based node tokenizer of \citet{AnchorsToAnswers2025}.}
Additionally, \citet{Chen2024TextspaceGF} observed that naively combining LLMs and GCNs in an iterative architecture without task-specific design fails to yield meaningful performance gains.
\textit{We address this gap with \method, the first framework to theoretically unify PLM-based semantics with multi-level structural signals, providing a principled and computationally efficient solution. }
\modify{A related but distinct line of work targets \emph{dynamic} text-attributed graphs, where node/edge structure and text evolve over time: \citet{Yu2023DyGLib} propose a unified architecture and library for dynamic graph learning, \citet{Zhang2024DTGB} introduce DTGB, a benchmark specifically for dynamic TAGs, and \citet{Xu2025DynamicTAGMultiModal} study multi-modal representation learning on dynamic TAGs. \method{} targets the static setting; we discuss how our three signals could be extended to this dynamic setting in \S\ref{sec: conclusion}.}

\begin{wraptable}{r}{0.4\textwidth}
\footnotesize
\caption{Comparison with existing methods. \bymark: public benchmark, \cmark: benchmarked model, Zero-Shot: ZS, Fine-Tune: FT}
\label{tab:comparison-related-work}
\begin{tabular}{lcccc}
    \toprule 
    \multirow{2}{*}{\textbf{Works}} & \multicolumn{2}{c}{\textbf{ZS}} & \multicolumn{2}{c}{\textbf{FT}} \\ 
    & \textbf{MLP} & \textbf{GCN} & \textbf{MLP} & \textbf{GCN} \\ 
    \midrule
    Graphformer \citep{Yang2021GraphFormersGT} & & & & \bymark \\
    GAINT \citep{Chien2021NodeFE} & \bymark & \bymark & & \\
    SimTeG \citep{duan2024simteg} & \bymark & \bymark & & \\
    TAPE \citep{he2023harnessing} & & \bymark & & \\
    LEADING \citep{Xue2023EfficientEL} & & \bymark & & \\
    ENGINE \citep{Zhu2024EfficientTA} & & & & \bymark \\
    TEG-DB \citep{li2024tegdb} & \bymark & \bymark & & \\
    \midrule
    Ours & \cmark & \cmark & \cmark & \cmark \\
    \bottomrule
\end{tabular}
\end{wraptable}

\noindent \textbf{Message Passing in Link Prediction.} GCNs are traditionally used to generate initial node embeddings for link prediction, followed by a dot-product decoder \citep{Wang2023NeuralCN, 10.1145/3637528.3672025, li_evaluating_nodate-1}. They operate via a message-passing mechanism, in which node representations are iteratively refined by aggregating features from local neighborhoods. However, GCNs have inherent theoretical limitations in distinguishing structurally identical nodes and exhibit limited expressivity in capturing critical pairwise relationships \citep{srinivasan_equivalence_2020, Zhang2020LabelingTA}. Methods such as SEAL and NBFNet enable message passing to learn pairwise information specific to each target link \citep{Zhang2020LabelingTA, chamberlain_graph_2023}. Nevertheless, these approaches incur high training costs, as they require constructing a separate local subgraph and extracting features for every target link. To address this issue, recent methods decouple message passing from pairwise information modeling. Representative examples include NCNC \citep{Wang2023NeuralCN}, which exploits common-neighbor information, as well as BUDDY \citep{chamberlain_graph_2023}, HL-GNN \citep{zhang2024heuristiclearninggraphneural} and Neo-GNN \citep{yun2021neognns}, which incorporate heuristic global structural information. \modify{Beyond text-attributed benchmarks, link prediction has also been extended to modality-rich recommendation settings, e.g., \citet{FROG2025} recently proposed a modality-aware friend-recommendation framework that couples user preferences with heterogeneous signals.} \textit{Our work aims to enhance the efficiency of feature learning between textual and structural information, without increasing the overall expressivity of existing link prediction frameworks.}
\color{black}

\section{\method: A New Efficient, Simple and Theoretically Justified Model}
\label{sec:method}
In this section, we present our framework, \method. We begin by introducing the three core data signals that underpin \method: \signalone, \signaltwo, and \signalthree (\Cref{subsec:data-signals}). Next, in \Cref{subsec:Vallina Framework}, we describe our efficient iterative architecture and its three complementary design mechanisms (\Cref{img:method}), which we justify both theoretically and empirically. Finally, in \Cref{sec:complexity}, we analyze the computational complexity of \method and highlight its efficiency advantages over existing approaches.


\subsection{Data Signals in \method}
\label{subsec:data-signals}
The data signals in our framework are motivated by the \emph{latent space model}, a foundational paradigm that has been widely adopted across disciplines such as sociology and statistics \citep{Hoff01122002}.
The latent space model provides a principled interpretation of graph formation, where edge probabilities are governed by distances between nodes embedded in an unobserved metric space. Within this framework, prior work establishes quantitative links between observable graph signals and latent distances. In particular, \citet{Hoff01122002} shows that the number of common neighbors between two nodes tightly bounds their latent distance. More recently, \citet{mao2023revisiting} demonstrates that when higher- and lower-order structural signals are absent, node feature similarity, quantified by feature dot products, dominates latent distance estimation. In this regime, they further prove that the common-neighbor count becomes, with high probability, negatively correlated with feature proximity, revealing an intrinsic tension between structural and feature-based signals in latent graph models.

Guided by these theoretical connections, we define three complementary data signals that serve as proxies for latent relationships: Our approach fuses three complementary graph signals through an MLP: (S1) \signalone, capturing local graph structure;  (S2) \signaltwo, encoding node-level textual semantics; and (S3) \signalthree, which goes beyond 1-hop neighborhoods and measures how similarity diffuses through multi-hop connections. We present each one in more detail below.

\noindent  {\textbf{(S1) \signalone }} \color{black} We utilize \modify{a thresholded indicator of} Common Neighbors, a first-order structural feature that solely utilizes graph topology:
\begin{equation}
\label{equ:C2}
\mathbf{H}_{\text{C}}(ij)=  \mathbb{I}\left((\mathbf{A}^2)_{ij} > {\color{blue}\tau} \right),
\end{equation}
where \( \mathbf{A} \in \{0, 1\}^{n \times n} \) is the binary adjacency matrix, $\mathbb{I}$ is the indication 
function, and \modify{\( \tau \)} is a hard threshold to retain stronger connections \citep{Newman2001ClusteringAP, Wang2023NeuralCN}. 
\modify{$\tau$ denotes the CN threshold introduced.}
\modify{We follow \citet{Wang2023NeuralCN} in adopting the thresholded indicator form. Thresholding to a binary indicator (i) helps remove noisy
edges introduced by spurious connections, and (ii) removes the hub bias that arises when two nodes are connected only through a shared global hub, which would otherwise mislead the prediction. We report its sensitivity in \Cref{sec:ablation study}.}

\noindent {\textbf{(S2) \signaltwo}.} 
The second signal captures the {semantic proximity} between nodes based on their textual content alone. The intuition is that similar nodes exhibit high similarity in their language representations, even without explicitly considering the graph structure. Let $\mathbf{T}$, $\mathbf{X}^T = \text{PLM}(\mathbf{T})$ represent the raw text and embedded text features from a pretrained language model. PLM utilizes Transformer backbone to extract deep linguistic features by modeling intra-node dependencies. The intuition is that similar nodes exhibit high similarity in their language representations, even without explicitly considering the graph structure. The semantic proximity between node pair $(i, j)$ is defined as:  
\begin{equation}
    \mathbf{S}_{ij}^{T} = \mathbf{W}(\mathbf{X}^T_i \odot \mathbf{X}^T_j),
\end{equation}
where \( \mathbf{W} \) is a learned weight matrix and the operator $\odot$ denotes the Hadamard product. Compared to fixed similarity measures such as cosine similarity \citep{yang2016revisitingsemisupervisedlearninggraph}, this learnable formulation allows the model to adaptively weight semantic dimensions that are most predictive of latent relationships\citep{wang2021heterogeneousgraphattentionnetwork}. 

\noindent {\textbf{(S3) \signalthree}.}
While structural and semantic (feature) proximity primarily capture local or pairwise relationships, they may overlook higher-order dependencies that emerge through multi-hop interactions. To address this, we introduce the \emph{propagated proximity} signal, which captures how node similarity evolves as information diffuses through the graph. This signal reflects the intuition that even if two nodes are not directly connected, their embeddings may become more aligned after several rounds of message passing, analogous to reduced latent distances through indirect ties. It is especially useful in capturing the information of similar neighbors when the structure exhibits homophily \citep{lee2024netinfof, Kipf2016SemiSupervisedCW}.

\modify{Empirically, S3 remains consistently beneficial across all our TAG benchmarks --- every Hits@50 / MRR / AUC cell in \Cref{tab:ablation_study} improves when S3 is included; see \S\ref{sec:ablation study} for the full comparison of \method with and without S3 across all nine metrics.}
\color{black}
Formally, we propagate the input features over $K$ steps using a symmetrically normalized adjacency matrix with self-loops:
\begin{equation}
\label{equ: aggregated feat}
\mathbf{H}^{K} = f(\tilde{\mathbf{A}}_{\text{sym}}^K\mathbf{X}),
\end{equation}
where $\mathbf{H}^{K}$ represents the final embedding from GCN in iteration $K$,   $\tilde{\mathbf{A}}_{\text{sym}}=(\mathbf{D+I})^{-\frac{1}{2}}(\mathbf{A} + \mathbf{I}) (\mathbf{D+I})^{-\frac{1}{2}}$ is the symmetrically normalized adjacency matrix with self-loops, $\mathbf{I}$ is the identity matrix and $\mathbf{D}$ is diagonal degree matrix.

\subsection{\method: Efficient Iterative Architecture}
\label{subsec:Vallina Framework}
\color{black}
\method{} is a lightweight iterative architecture that integrates the three complementary signals introduced in the previous section through a unified cross-modal framework. To effectively exploit these signals while maintaining efficiency, \method{} incorporates three design mechanisms (D1-D3) that enhance the base architecture. We first present the vanilla framework that forms the backbone of the model, and subsequently describe how these designs further improve its effectiveness and efficiency.

The vanilla \method{} framework combines the three data signals using simple linear transformations followed by component-wise nonlinearities, avoiding heavy fusion mechanisms. Concretely, the final prediction is computed as
\begin{equation}
\label{equ:final embed}
\mathbf{Y} = \mathrm{MLP}\left( [ \mathbf{H}^K \,;\, \beta \mathbf{H}_C \,;\, \mathbf{S}_T ] \right),
\end{equation}
where $[\,\cdot\,;\,\cdot\,]$ denotes concatenation and $\mathbf{Y}$ represents the model output, $\beta$ is the weight for threshold common neighbor. Figure~\ref{img:method} provides a high-level illustration of the overall architecture, while~\Cref{algo:method} summarizes the corresponding inference procedure. Notably, the architecture is intentionally simple: each signal is computed independently and fused only at the final stage, which avoids expensive intermediate interactions and enables efficient deployment.
\color{black}
Building upon this backbone, we introduce three complementary designs (D1-D3) that further improve robustness across diverse settings while substantially reducing memory overhead.

\subsubsection{\bf D1: \mechanismOne.}  
\label{subsec: efficient design}
We propose design D1, a bypass connection that routes textual information directly to the final embeddings, as illustrated in \Cref{img:method}. \color{black} By acting as a semantic highway, D1 preserves the raw nuances of the PLM and ensures that the supervision signal from the loss function is effectively back-propagated to the language model. {This design enables the PLM to be optimized more robustly and efficiently within the cross-modal framework by maintaining a direct gradient flow from the loss objective to the textual representations.} 

To justify this design theoretically, we utilize a quantitative measure, the \emph{influence score} (\Cref{def:influence_score}), which characterizes how the PLM parameters affect the final likelihood objective. This notion is inspired by influence functions in statistics \citep{koh2020understandingblackboxpredictionsinfluence}. Using this metric, we analyze the sensitivity of the link likelihood objective with respect to the PLM parameters, and show that D1 systematically strengthens this sensitivity.

\noindent
\begin{definition}[Influence Score]
\label{def:influence_score}
Let $\mathbf{x}$ denote the input text feature, $\mathbf{z}$ the hidden representation at an intermediate layer and 
$\mathbf{y} \in \mathbb{R}^{|\mathcal{V}| \times d}$ the final node embeddings before the decoder. Let $\mathbf{h}^{(0)}$ and $\mathbf{h}^{(k)}$ denote the node representations at the input and the $k$-th layer, respectively.  We define the \emph{influence score} of the initial text embedding of node $x$ on the final embedding of node $y$ as
\[ I_x(y) = \frac{\mathbf{e}^\top \left| \frac{\partial \mathbf{h}^{(k)}_y}{\partial \mathbf{h}^{(0)}_x} \right| \mathbf{e} }{ \sum_{z \in \mathcal{V}} \mathbf{e}^\top \left| \frac{\partial \mathbf{h}^{(k)}_z}{\partial \mathbf{h}^{(0)}_x} \right| \mathbf{e} }, \]
where $|\cdot|$ denotes element-wise absolute value and $\mathbf{e}$ is the all-ones vector of compatible dimension.
\end{definition}

\noindent \textbf{Theoretical Justification.}  
We show that integrating D1: \mechanismone \xspace in the cross-modal framework, which directly links PLM text embeddings to the final loss, leads to a higher influence score compared to a configuration without it. We analytically compare the influence scores of the PLM parameters with and without D1 in \Cref{theorem:d1}. The complete proof is provided in \Cref{app:d1}. 
\begin{theorem}[Influence Enhancement]
\label{theorem:d1}
Let $\mathbf{h}^{(0)}_v \in \mathbb{R}^d$ denote the initial embedding of node $v$ produced by the PLM, and $\mathbf{h}^{(k)} = f(\mathbf{h}^{(0)}, \mathcal{G})$ denote the output of a $k$-layer GCN, where $f$ is continuously differentiable. Then, for any nodes $x, y \in \mathcal{V}$, the influence score defined in \Cref{def:influence_score} satisfies $I_{\mathrm{with}\;D_1}(x,y) \;=\; I_{\mathrm{without}\;D_1}(x,y)\quad\text{for } y\neq x,$, while on the PLM-direct (self) path
$$I_{\mathrm{with}\;D_1}(x,x)\ge I_{\mathrm{without}\;D_1}(x,x)$$
 , with a strict increase of $\|B\|_1>0$ whenever $B\neq 0$ and the bypass does not cancel the existing self-path (entrywise sign-compatibility $J_iB_i\ge0$, where $J=\partial\mathbf{h}^{(k)}_x/\partial\mathbf{h}^{(0)}_x$ and $B$ is the trainable bypass matrix of design~D1). See \Cref{prop:d1-influence} in Appendix~\ref{app:d1}. 
\end{theorem}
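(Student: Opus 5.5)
The plan is to model D1 explicitly as an additive bypass on the output of the GCN. With the bypass, the final representation of node $v$ is
\[
\tilde{\mathbf{h}}_v=\mathbf{h}^{(k)}_v+B\,\mathbf{h}^{(0)}_v ,
\]
where $B\in\mathbb{R}^{d\times d}$ is the trainable bypass matrix. The bypass acts only node-wise: the text embedding of node $v$ is routed to its own final embedding and to no other node. Since $f$ is continuously differentiable, the Jacobians exist, and linearity of differentiation gives
\[
\frac{\partial\tilde{\mathbf{h}}_y}{\partial\mathbf{h}^{(0)}_x}=\frac{\partial\mathbf{h}^{(k)}_y}{\partial\mathbf{h}^{(0)}_x}+\delta_{xy}B .
\]
The whole argument then reduces to comparing the entrywise $\ell_1$ masses $\mathbf{e}^\top|\cdot|\mathbf{e}$ of these Jacobians.

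\textbf{Off-diagonal case, $y\neq x$.} Here $\delta_{xy}=0$, so the Jacobian is unchanged and the numerator of \Cref{def:influence_score} is the same with and without D1. I will read the theorem's $I(x,y)$ as the unnormalized numerator, i.e.\ the raw sensitivity mass $\mathbf{e}^\top|\partial\mathbf{h}_y/\partial\mathbf{h}^{(0)}_x|\mathbf{e}$. Under that reading, equality for $y\neq x$ is immediate.

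I have to flag this choice. If the normalization of \Cref{def:influence_score} were kept, the denominator would grow by the self-term, and the normalized off-diagonal scores would strictly decrease rather than stay equal. So the stated equality, together with the claimed increment of exactly $\|B\|_1$, only holds for the unnormalized score. That is the version I will prove.

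\textbf{Self case, $y=x$.} Write $J=\partial\mathbf{h}^{(k)}_x/\partial\mathbf{h}^{(0)}_x$. The quantity to compare is
\[
\mathbf{e}^\top|J+B|\mathbf{e}=\sum_i|J_i+B_i|
\quad\text{versus}\quad
\sum_i|J_i| ,
\]
with entries indexed by $i$. Under the sign-compatibility assumption $J_iB_i\ge0$, each entry satisfies $|J_i+B_i|=|J_i|+|B_i|$, because adding two same-signed reals adds their magnitudes. Summing over $i$ gives
\[
I_{\mathrm{with}}(x,x)=I_{\mathrm{without}}(x,x)+\|B\|_1 .
\]
This is $\ge I_{\mathrm{without}}(x,x)$, with a strict increase of exactly $\|B\|_1>0$ whenever $B\neq0$.

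\textbf{Main obstacle.} The difficulty is the possible cancellation between $J$ and $B$. Without sign compatibility, the triangle inequality only gives $|J_i+B_i|\ge\bigl||J_i|-|B_i|\bigr|$, so the influence could drop. This is why the theorem carries the hypothesis $J_iB_i\ge0$, which I will state explicitly as a modelling assumption about a non-cancelling bypass. I will also note why it is natural: at initialization with $B$ aligned to $J$ (e.g.\ $B=\alpha\,\mathrm{sign}(J)$), or whenever the self-loop weight of $\tilde{\mathbf{A}}_{\mathrm{sym}}$ keeps the diagonal of $J$ positive and $B$ is positive diagonal, the condition holds.

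Finally, I will connect this to the PLM parameters $\theta$ through the chain rule, $\partial\tilde{\mathbf{h}}_x/\partial\theta=(J+B)\,\partial\mathbf{h}^{(0)}_x/\partial\theta$. The enlarged self-Jacobian then gives the loss a larger direct gradient path into the PLM.
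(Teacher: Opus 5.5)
Your proposal is correct and follows the paper's proof essentially step for step. The paper also models D1 as $\tilde{\mathbf{h}}^{(k)}_x=\mathbf{h}^{(k)}_x+B\,\mathbf{h}^{(0)}_x$, notes that the Jacobian changes only by $B$ on the self block, and uses $|J_i+B_i|=|J_i|+|B_i|$ under $J_iB_i\ge 0$ to obtain the exact increment $\|B\|_1$; your caveat that the equality and the exact increment hold only for the unnormalized score (with the normalized off-diagonal influence decreasing) matches the paper's own ``scope of the claim'' remark.
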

\begin{figure}[htbp]
    \centering
    \includegraphics[width=0.6\linewidth]{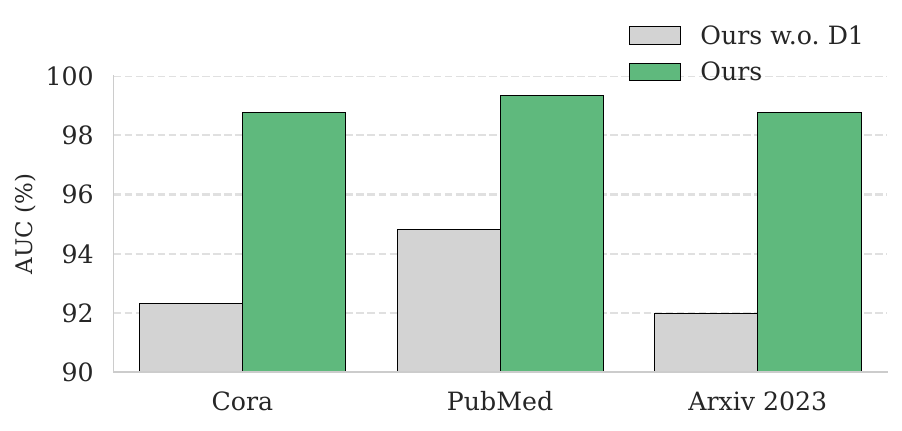}
    \caption{\textbf{Ablation study of D1.} Comparison of link prediction performance (AUC) with and without the proposed bypass connection. Across three benchmark datasets, the absence of D1 results in a performance degradation of 4\%–6.8\%, validating its role in facilitating effective gradient flow and preserving PLM semantic nuances.}
    \label{fig:d1_ablationstudy}
\end{figure}
We show that incorporating D1, bypass connection strictly increases the influence term $I$ of the final link prediction loss on the PLM embeddings.  
The increased influence score ensures that the PLM receives sufficiently strong gradients to adapt its semantic knowledge specifically for the link prediction task.

\noindent \textbf{Observations.}
To empirically validate the effectiveness of design D1, we conduct an ablation study of \method{} with and without D1, as illustrated in \Cref{fig:d1_ablationstudy}.  We observe that \method{} (i.e., models equipped with design D1) consistently achieves higher link prediction performance than those without it. It increases the AUC by up to 6\% on Cora and by 4\% and 6.8\% on Pubmed and Arxiv\_2023 respectively. Although \Cref{theorem:d1} is derived under specific assumptions, our empirical results demonstrate that its conclusions hold broadly in practice.  

\subsubsection{\textbf{D2: \mechanismTwo.}} 
The core objective of design D2 is to mitigate the high-dimensionality requirement inherent in the vanilla framework (\Cref{subsec:Vallina Framework}), where the hidden dimension $d$ scales linearly with the number of vertices ($|\mathcal{V}|$). The underlying principle of D2 lies in the near-orthogonality between the structural proximity signal \textbf{S1} ($\mathbf{H}_C$) and the semantic proximity signal \textbf{S2} ($\mathbf{S}_T$), where their non-zero elements exhibit minimal overlap \citep{mao2023revisiting}. Given that the non-zero entries of $\mathbf{H}_C$ and $\mathbf{S}_T$ are largely disjoint, transitioning from the high-dimensional concatenation used in the vanilla framework to an additive form is effectively lossless. D2 reduces the hidden dimension from $d = |\mathbf{H}^K| + |\mathcal{T}| + |\mathcal{V}|$ (as introduced in vanilla framework \Cref{sec:complexity}) to $d = |\mathbf{H}^K| + 1$. Summing these signals is approximately equivalent to blending information from separate channels into a single dense tensor without significant signal interference.

\noindent \textbf{Theoretical Justification.}
\modify{We provide a theoretical explanation for why the structural signal $\mathbf{H}_C$ and the 
semantic signal $\mathbf{S}_T$ can be compressed into a substantially lower-dimensional representation while preserving the 
information relevant for link prediction. 
Our analysis is conducted under a standard \emph{latent space model} \citep{Sarkar2011theoretical}, where each of the $N$ nodes is associated with an $r$-dimensional latent position. We assume that link formation is governed by two complementary mechanisms. If two nodes are sufficiently close in the latent structural space, their connection probability is primarily determined by structural proximity, resulting in similar common-neighbor patterns. Otherwise, when structural proximity is weak, semantic proximity becomes the dominant factor, allowing structurally distant but semantically similar nodes to form links. This assumption is consistent with empirical observations on widely used benchmark graphs \citep{mao2023revisiting}. Under this setting, we show that the joint structural-semantic signal admits a compact representation with negligible information loss.}

\begin{proposition}[Lossless Additive Compression of $\mathbf{H}_C$ and $\mathbf{S}_T$]
\label{prop:compact}
For each node pair \(a\), let \(\mathbf h^a=[\mathbf H_C]_a\) and \(\mathbf s^a=[\mathbf S_T]_a\), with \(z^a_{\mathrm{cat}}=(\mathbf h^a,\mathbf s^a)\) and \(z^a_{\mathrm{sum}}=\mathbf h^a+\mathbf s^a\). For any node pairs \(a,b\),
\[
\big|\langle z^a_{\mathrm{sum}}, z^b_{\mathrm{sum}}\rangle - \langle z^a_{\mathrm{cat}}, z^b_{\mathrm{cat}}\rangle\big| \;\le\; \big|\langle \mathbf{h}^a, \mathbf{s}^b\rangle\big| + \big|\langle \mathbf{s}^a, \mathbf{h}^b\rangle\big|.
\]
Thus, the Gram matrices induced by summation and concatenation coincide when the structural and semantic signals have disjoint supports, making the compression lossless for inner-product-based predictors. More generally, the distortion is controlled by their cross-signal overlap. Since structural and semantic proximity rarely co-occur strongly \citep{Sarkar2011theoretical, mao2023revisiting}, this overlap is typically small in practice (\Cref{fig:d2_clarification}), motivating additive compression with negligible information loss.
\end{proposition}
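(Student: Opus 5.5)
The plan is to expand both inner products using bilinearity, then bound the difference with the triangle inequality. The only care needed is to make sure $z^a_{\mathrm{sum}}$ and $z^a_{\mathrm{cat}}$ are compared in compatible spaces. We treat $\mathbf h^a$ and $\mathbf s^a$ as vectors in a common ambient space $\mathbb{R}^m$, padding with zeros if necessary, so that the sum $\mathbf h^a+\mathbf s^a$ is well defined. This is implicit in the additive design D2, where both signals are mapped to a shared hidden dimension before summation. The concatenation $z^a_{\mathrm{cat}}=(\mathbf h^a,\mathbf s^a)$ then lives in $\mathbb{R}^{2m}$ and carries the standard block inner product.

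\textbf{Step 1 (concatenation).} By the block structure of the inner product,
\[
\langle z^a_{\mathrm{cat}}, z^b_{\mathrm{cat}}\rangle=\langle \mathbf h^a,\mathbf h^b\rangle+\langle \mathbf s^a,\mathbf s^b\rangle .
\]

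\textbf{Step 2 (summation).} By bilinearity,
\[
\langle z^a_{\mathrm{sum}}, z^b_{\mathrm{sum}}\rangle=\langle \mathbf h^a,\mathbf h^b\rangle+\langle \mathbf s^a,\mathbf s^b\rangle+\langle \mathbf h^a,\mathbf s^b\rangle+\langle \mathbf s^a,\mathbf h^b\rangle .
\]

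\textbf{Step 3 (difference).} Subtracting Step 1 from Step 2, the diagonal terms cancel and only the two cross terms remain:
\[
\langle z^a_{\mathrm{sum}}, z^b_{\mathrm{sum}}\rangle-\langle z^a_{\mathrm{cat}}, z^b_{\mathrm{cat}}\rangle=\langle \mathbf h^a,\mathbf s^b\rangle+\langle \mathbf s^a,\mathbf h^b\rangle .
\]
Applying the triangle inequality for absolute values gives the stated bound.

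\textbf{Step 4 (corollary).} Suppose the structural and semantic signals have disjoint supports, meaning that for every pair of node pairs $a,b$ the coordinates where $\mathbf h^a$ is nonzero never coincide with those where $\mathbf s^b$ is nonzero. Then each cross inner product is a sum of products in which at least one factor is zero, so both cross terms vanish. Hence the two Gram matrices agree entrywise, and any predictor depending only on inner products, such as a dot-product decoder or a kernel method, cannot distinguish the summed representation from the concatenated one.

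The argument is elementary. The only genuine obstacle is interpretive: disjoint supports must be understood uniformly across pairs, as coordinate sets $\operatorname{supp}(\mathbf h^\cdot)\cap\operatorname{supp}(\mathbf s^\cdot)=\emptyset$, rather than per pair. Per-pair disjointness would only kill the diagonal cross terms $\langle\mathbf h^a,\mathbf s^a\rangle$, not the mixed terms $\langle\mathbf h^a,\mathbf s^b\rangle$ with $a\neq b$. We therefore state the uniform assumption explicitly and note that under it the claimed losslessness follows directly.

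The remark that the overlap is small in practice rests on the latent-space argument cited from \citet{mao2023revisiting} and is an empirical or probabilistic statement, not part of the formal proof. We would present it only as motivation, pointing to \Cref{fig:d2_clarification}.
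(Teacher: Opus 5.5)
Your proof is correct and matches the paper's argument: expand $\langle \mathbf h^a+\mathbf s^a,\mathbf h^b+\mathbf s^b\rangle$ by bilinearity, identify the concatenated Gram entry as the diagonal part, bound the two cross terms by the triangle inequality, and note that disjoint supports make every coordinate product $h^a_k s^b_k$ vanish. Your remark that disjointness must hold uniformly over coordinates of $\mathbf H_C$ and $\mathbf S_T$, rather than pair by pair, is exactly the reading the paper's appeal to ``disjoint coordinate support'' implicitly relies on.
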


\begin{figure}[h]
    \centering
    \includegraphics[width=0.9\linewidth]{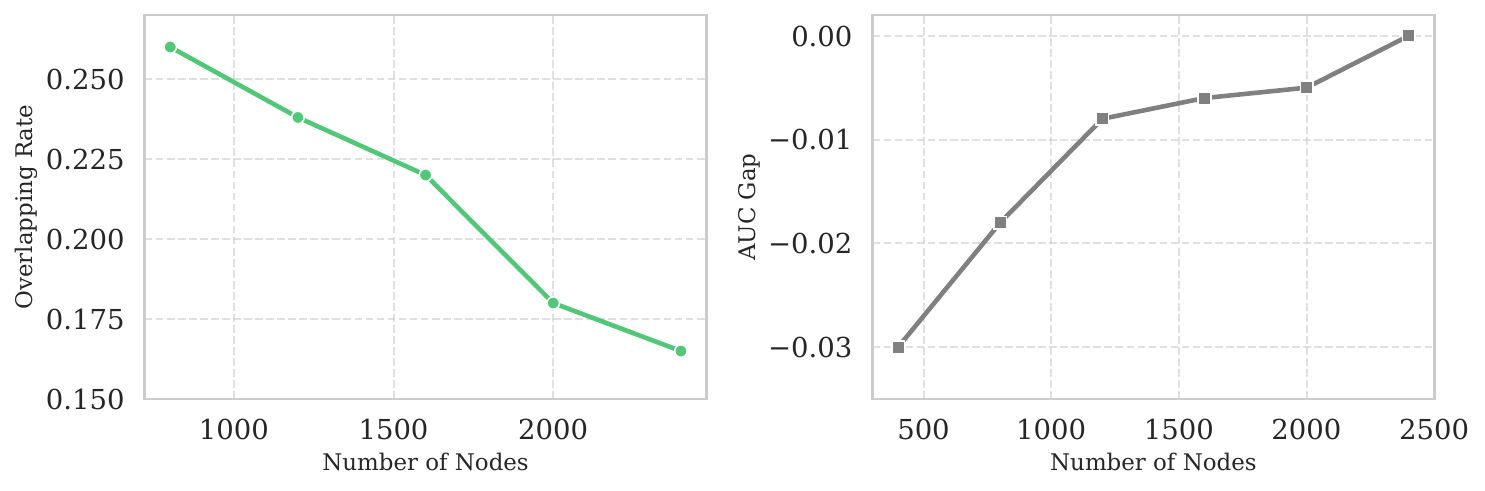}  
    \caption{\textbf{Left:} Overlapping rate between $\mathbf{H}_C$ and $\mathbf{S}_T$; \textbf{Right:} AUC gap between models with and without compression.
    The results show that $\mathbf{H}_C$ and $\mathbf{S}_T$ are approximately compressive and that the framework equipped with D2 converges to a similar performance level as the non-compressed variant.}
    \label{fig:d2_clarification}
\end{figure}

\noindent \textbf{Observations.} {In our empirical analysis, we define the \textit{overlapping rate} to quantify the overlap of non-zero elements. As illustrated in \Cref{fig:d2_clarification} (left), overlapping rate decays approximately linearly as a function of the number of sampled vertices $|\mathcal{V}_s|$. Furthermore, we evaluate the performance gain of the D2 component by defining the \textit{AUC gap}: $\Delta_{\text{AUC}} = \text{AUC}(\mathcal{M}_{\text{with D2}}) - \text{AUC}(\mathcal{M}_{\text{without D2}})$. As shown in \Cref{fig:d2_clarification} (right), $\Delta_{\text{AUC}}$ converges toward zero as $|\mathcal{V}_s|$ increases. This indicates that the proposed summation achieves equivalent predictive performance as the vanilla framework, but with substantially reduced computational complexity.}

\subsubsection{\bf D3: \mechanismThree.} 
The third design, D3, addresses the significant computational and memory overhead inherent in generating initial node features for GCN-based architectures. In a joint training setting, these features act as a critical representational bridge between the PLM and the GCN. As established in our complexity analysis, encoding these features requires $O(|\mathcal{V}| \times |\mathcal{T}| \times d)$ per training step. Unlike traditional approaches that keep the PLM frozen to bypass this cost, our method maintains the trainability of both components, necessitating a more efficient integration strategy. {
In D3, we adopt a dynamic cache-embedding strategy. For each minibatch of edges, we utilize the PLM to encode only the vertices involved in those specific edges, while replacing the features of all other nodes with cached embeddings. Mechanically, we first initialize a global feature matrix $\mathbf{X} \in \mathbb{R}^{|\mathcal{V}| \times d}$ by pre-computing all node embeddings via the PLM and storing them in a persistent cache. During each training iteration, we perform a live re-encoding \emph{only} for the textual descriptions of the active nodes $(i, j)$ in the current mini-batch. The input feature matrix for the GCN is then constructed by substituting the stale cached vectors with these fresh, gradient-carrying embeddings:}
\begin{equation}
    \mathbf{X}_{in} = \text{Concat}(\mathbf{X}_{cache \setminus \{i,j\}}, \mathbf{h}_i, \mathbf{h}_j)
\end{equation}
where $\mathbf{h}_i$ and $\mathbf{h}_j$ represent the real-time PLM outputs. This strategy reduces the per-batch encoding complexity from $O(|\mathcal{V}||\mathcal{T}|d)$ to $O(b|\mathcal{T}|d)$ (where $b$ is the batch size), effectively decoupling the PLM's computational load from the total graph size. While cached embeddings have been explored in general GNN contexts \citep{Zhu2020GeneralizingGN, Wang2023NeuralCN}, our implementation specifically optimizes this mechanism within the D3 architecture to facilitate scalable, end-to-end PLM-GCN integration.

\noindent \textbf{Theoretical Justification.} The validity of D3 rests on the fact that while cached embeddings are technically "stale" (i.e., computed using PLM parameters from previous iterations), they remain a high-fidelity approximation of the current PLM state. We show that D3 provides a provably controlled approximation to full-parameter PLM updates by bounding the \emph{staleness error}. Specifically, for any node pair $(i,j)$, the discrepancy between the true gradient (obtained via full re-encoding) and the D3 gradient depends on how recently the nodes in $\mathcal{V} \setminus \{i,j\}$ were updated in the cache. \modify{The resulting cache error is governed by how far the PLM parameters have drifted since each node was last refreshed; it stays small when late-training updates are small or the cache is periodically refreshed, so the cached feature matrix $\mathbf{X}$ closely tracks the representation that the current PLM parameters would produce.} We theoretically justify this in \Cref{thm:d3}. 

\begin{theorem}[Gradient Staleness Bound for D3]
\label{thm:d3}
\modify{Let $g^{\mathrm{full}}_t$ and $g^{\mathrm{cache}}_t$ denote the gradients with respect to the GCN/predictor parameters $\phi$ under full PLM re-encoding and the D3 cache-embedding scheme at iteration $t$. Under a Lipschitz PLM encoder and a standard smoothness assumption on the feature-to-gradient map (assumptions (A1)--(A2) in Appendix~\ref{app:d3}),
\[
\|g_t^{\mathrm{full}}-g_t^{\mathrm{cache}}\|
\le C\sum_{v\in\mathcal V}
\|\theta_t-\theta_{r_v(t)}\|,
\]
where $r_v(t)$ is the last iteration at which node $v$ was refreshed and $C$ is a constant. Hence the error is uniformly bounded under a bounded step size with periodic refresh, and vanishes as training converges (summable updates) provided every node keeps being refreshed. See \Cref{thm:d3-staleness} in Appendix~\ref{app:d3}}
\end{theorem}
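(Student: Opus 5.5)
The argument chains two Lipschitz estimates: first from gradients to input features, then from input features to PLM parameters. We work at a fixed iteration $t$ with fixed GCN/predictor parameters $\phi_t$ and a fixed minibatch. The full-re-encoding feature matrix is $\mathbf{X}^{\mathrm{full}}_t$, whose row $v$ is $E_{\theta_t}(\mathbf{T}_v)$ for the PLM encoder $E_\theta$. The D3 input matrix is $\mathbf{X}^{\mathrm{cache}}_t$. Its rows for the active nodes $i,j$ equal the fresh rows $E_{\theta_t}(\mathbf{T}_i),E_{\theta_t}(\mathbf{T}_j)$. Every other row $v$ equals the cached vector $E_{\theta_{r_v(t)}}(\mathbf{T}_v)$. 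Both gradients then take the form $g_t = G(\mathbf{X};\phi_t)$ for one common feature-to-gradient map $G$: the GCN propagation $\tilde{\mathbf{A}}_{\text{sym}}^K$, the signals S1--S3, the MLP, and the loss, differentiated in $\phi$.

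The first step uses assumption (A2). We take it to mean that $G(\cdot;\phi)$ is Lipschitz in the features, $\|G(\mathbf{X};\phi)-G(\mathbf{X}';\phi)\|\le L_G\|\mathbf{X}-\mathbf{X}'\|$, uniformly over the relevant $\phi$. Applying it gives
\[
\|g^{\mathrm{full}}_t-g^{\mathrm{cache}}_t\|\le L_G\|\mathbf{X}^{\mathrm{full}}_t-\mathbf{X}^{\mathrm{cache}}_t\|.
\]
To obtain a sum over nodes, we measure the feature discrepancy in the row-wise $\ell_{2,1}$ norm $\sum_v\|\mathbf{X}_v-\mathbf{X}'_v\|$, or simply bound the Frobenius norm by this sum. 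Rows $i,j$ contribute zero, since they are identical in both matrices. Their terms are then trivially $\le\|\theta_t-\theta_{r_v(t)}\|$ with $r_v(t)=t$, so we may keep the sum over all of $\mathcal V$, as in the statement.

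The second step uses assumption (A1), that the PLM is Lipschitz in its parameters: $\|E_\theta(\mathbf{T}_v)-E_{\theta'}(\mathbf{T}_v)\|\le L_E\|\theta-\theta'\|$ uniformly over texts. It gives $\|\mathbf{X}^{\mathrm{full}}_{t,v}-\mathbf{X}^{\mathrm{cache}}_{t,v}\|\le L_E\|\theta_t-\theta_{r_v(t)}\|$. Combining the two steps yields the claimed bound with $C=L_GL_E$.

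The corollaries then follow from the triangle inequality along the trajectory, $\|\theta_t-\theta_{r_v(t)}\|\le\sum_{s=r_v(t)}^{t-1}\eta_s\|u_s\|$, where $\eta_s$ is the step size and $u_s$ the update direction. Suppose the step sizes and updates are bounded, $\eta_s\|u_s\|\le \eta U$, and every node is refreshed at least once every $P$ iterations. Then $t-r_v(t)\le P$, and the error is at most $C|\mathcal V|P\eta U$, which is uniformly bounded. Now suppose instead that the updates are summable and every node keeps being refreshed, so $r_v(t)\to\infty$. Each term is then a tail of a convergent series, hence tends to $0$, and the finite sum over $\mathcal V$ vanishes.

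The main obstacle is justifying (A2) uniformly. The map $G$ involves the $\phi$-Jacobian of an MLP composed with GCN layers, evaluated at the features. A single Lipschitz constant therefore requires the activations to have bounded, Lipschitz derivatives (for example tanh or smooth ReLU approximations), $\phi_t$ to stay in a bounded set, and the features to stay in a bounded region. The last condition itself follows from (A1) if $\theta$ stays bounded. I would state these as the content of (A2), rather than derive them, and note that $\|\tilde{\mathbf{A}}_{\text{sym}}\|_2\le 1$ keeps the propagation factor from inflating the constant with $K$.
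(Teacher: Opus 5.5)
Your proposal is correct and follows the paper's proof essentially step for step. Assumption (A2) reduces the gradient gap to the feature discrepancy, and the Frobenius norm is bounded by the sum of row norms, with the freshly encoded endpoints contributing zero. Assumption (A1) bounds each stale row by $L_E\|\theta_t-\theta_{r_v(t)}\|$, giving $C=L_JL_E$ (your $L_GL_E$). The two consequences then follow by telescoping under bounded steps with periodic refresh, and by the tail-of-a-convergent-series argument when updates are summable and $r_v(t)\to\infty$.

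Like the paper, you take (A2) as an assumption rather than deriving its constant. One caveat on your side remark: the per-layer operator also contains the weight matrices, so $\|\tilde{\mathbf{A}}_{\mathrm{sym}}\|_2\le 1$ alone does not stop the constant from growing with $K$. The paper's sufficient condition for (A2) carries a factor $\rho^{K}$ for exactly this reason.
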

\noindent \textbf{Remark.} \modify{The discrepancy is governed by cache staleness, the parameter drift since each node's last refres, rather than by node coverage alone: it stays controlled when late-training updates shrink or the cache is periodically refreshed, so after sufficient training the D3 gradient is practically indistinguishable from the full re-encoding gradient and the optimization trajectory remains stable despite stale embeddings.}

\color{black}
\subsection{Time Complexity of \method} 
\label{sec:complexity}
\raggedbottom  
The overall computational complexity of \method{} is determined by two components: the {Transformer backbone (detailed in S2)} and the graph message-passing module.
For the Transformer, each of the $l$ layers consists of a feed-forward sub-layer with cost $O(|\mathcal{T}| d^2)$ and a self-attention sub-layer with cost $O(|\mathcal{T}|^2 d)$, where $|\mathcal{T}|$ is the token length per node and $d$ is the hidden dimension, giving a total of $O\!\left(l\left(|\mathcal{T}| d^2 + |\mathcal{T}|^2 d\right)\right)$. The graph message passing adds $O(d|\mathcal{E}|)$, where $|\mathcal{E}|$ is the number of edges. Thus, the overall runtime complexity is 
\[
O(d^2 |\mathcal{T}| + d|\mathcal{T}|^2 + d|\mathcal{E}|).
\]
The memory footprint follows a similar decomposition: storing feed-forward activations requires $O(l|\mathcal{T}| d)$, attention matrices require $O(l(|\mathcal{T}|^2 + |\mathcal{T}| d))$ and message passing requires $O(d|\mathcal{E}|)$, leading to a total memory complexity of 
\[
O\!\left(l\left(|\mathcal{T}| d + |\mathcal{T}|^2\right) + d|\mathcal{E}|\right).
\]
In both cases, the dominant terms depend on the trade-off between token length and graph sparsity: the quadratic attention term dominates for long token sequences, whereas the message-passing term dominates for dense graphs.

\color{black}

\section{Empirical Evaluation}
\label{sec:benchmark}
In our empirical analysis, we evaluate the performance of \method{} on nine text-attributed graphs to address the following research questions:
\begin{enumerate}[topsep=0pt, itemsep=0pt, parsep=0pt, partopsep=0pt]
    \item[\textbf{RQ1:}] On widely-used TAG benchmarks, where all existing GCN-based approaches are fairly compared, does \method{} with integrated text modality improve link prediction performance?
    \item[\textbf{RQ2:}] Compared with existing PLM-based approaches, does \method{} equipped with D1--D3 enhance graph structure learning?
    \item[\textbf{RQ3:}] 
    Does \method{} scale efficiently while consistently improving performance?
    \item[\textbf{RQ4:}] What is the impact of three data signals, LLM choices and LLM scales?
\end{enumerate}

\begin{table}[ht]
    \centering
    \color{blue}
    \caption{Graph statistics of the benchmark datasets across
    different scales. We use commonly used datasets Cora, Pubmed and Arxiv\_2023 for RQ1-3, and larger datasets for RQ4. The last two rows show the minimum and maximum text lengths (in tokens) and the total number of data points (in millions) for each dataset.
}
    \begin{adjustbox}{width=\textwidth}
    \begin{tabular}{lccccccccc}
        \toprule
        & \textbf{Pwc\_small} & \textbf{Cora} & \textbf{PubMed} & \textbf{Arxiv\_2023} & \textbf{History} & \textbf{Photo} & \textbf{Pwc\_medium} & \textbf{Ogbn\_arxiv} & \textbf{Citationv8}\\
        \midrule
        \textbf{\#Nodes $|\mathcal{V}|$} & 140 & 2708 & 19717 & 46198 & 36655 & 47420 & 86795 & 169343 & 1106759\\
        \textbf{\#Edges $|\mathcal{E}|$} & 798 & 10858 & 88648 & 77726 & 496248 & 872602 & 933411 & 2315598 & 12227452 \\
        \textbf{File Size (Mb)} & 0.139 & 2.57 & 31.01 & 57.26 & 57.12 & 37.44 & 99.58 & 190.27 & 1034.18 \\
        \textbf{\#Split} & R & R & R & R & R & R & R & R & R\\
        \midrule
        \textbf{Min-Max Length} & 79--293 & 8--929 & 4--994 & 7--1617 & 42--329 & 4--7281 & 16--610 & 20--2214 & 4--774\\
        \textbf{Data Points (M)} & 0.139 & 2.57 & 31.01 & 57.26 & 42.16 & 37.44 & 99.58 & 190.27 & 1034.18\\
        \bottomrule
    \end{tabular}
    \end{adjustbox}
    \label{tab:graph-simple-stats}
    \color{black}
\end{table}

\subsection{RQ1: Comparison with GCN-based Approaches}
\label{subsec: RQ1}
First, we assess the performance of \method and compare it to strong baselines of structure-agnostic approaches on three widely-used text-attributed graphs. 

\subsubsection{Empirical Setup for RQ1}
\noindent \textbf{Datasets.} We begin with a thorough benchmark on Cora, Pubmed and Arxiv\_2023 \citep{Hu2021OGBLSCAL} (see ~\Cref{tab:compare-gcn}) to provide a fair comparison with existing GCN-based approaches.

\noindent \textbf{Benchmarked Approaches.} 
We consider all methods listed in \Cref{tab:compare-gcn}, which encompass five major categories: \uline{1) Graph Heuristics:} Local heuristics leverage shared neighborhood information, including Common Neighbor (CN), Adamic-Adar (AA) and Resource Allocation (RA), while global heuristics consider all paths between connected nodes, such as Katz, Shortest Path and symmetric PageRank \citep{ADAMIC2003211, ilprints422}. \uline{2) Knowledge Graph Embeddings:} Approaches such as RotatE \citep{sun2019rotateknowledgegraphembedding}, TransE \citep{Zhang2022TranSTK}, ComplEx \citep{trouillon2016complexembeddingssimplelink} and DistMult \citep{yang2015embeddingentitiesrelationslearning} represent entities and relations as embeddings in a continuous vector space, where relational reasoning is performed through translation or bilinear transformations. \uline{3) Structure-based Methods:} These focus on proximity-preserving embeddings that model neighborhood contexts via random walks, including DeepWalk, Node2Vec and LINE \citep{perozzi_deepwalk_2014, tang_line_2015, grover_node2vec_2016}. \uline{4) Aggregation-based Methods:} Under the GCN paradigm, we consider GCN \citep{Kipf2016SemiSupervisedCW}, GAT \citep{Velickovic2017GraphAN}, SAGE \citep{Hamilton2017InductiveRL} and GIN \citep{Xu2018HowPA}. \uline{5) GCN4LP:} SEAL, BUDDY and ELPH are specifically designed for link prediction, enhancing GCNs’ ability to capture pairwise local structures through various labeling techniques and structural embeddings as auxiliary node features \citep{Zhang2020LabelingTA, zhang_link_2018, chamberlain_graph_2023}. 
NCNC further leverages triadic closure to mitigate distribution shifts introduced by feature-specific data splitting methods \citep{Wang2023NeuralCN}.

\noindent \textbf{Hyperparameter Settings.} We conduct an extensive hyperparameter tuning based on hierarchical grid search across \textit{all models} listed in  \Cref{tab:compare-gcn}. Tuning parameters include all existing parameters in the optimizer and GCN, namely learning rate, weight decay, the number of convolution layers, MLP layers, the number of heads in GAT, hidden neurons and beyond. We provide more details about hyperparameter tuning in the \Cref{app:hyper_tune}.  

\setlength{\tabcolsep}{3pt}
\begin{table*}[b]
  \centering
  \tiny
  \caption{\textbf{Comparison with GCN-agnostic approaches}. Mean $\pm$ Standard deviation of \method for Hits@50, Hits@100, and MRR metrics across 5 random splits on three commonly used datasets. The top 1-2 ranked models are highlighted in emerald. For more results, please refer to the \Cref{sec:app-full-result}.}
  \label{tab:compare-gcn}
  \footnotesize
  \arrayrulecolor{black}
  \begin{adjustbox}{width=\textwidth}
  \begin{tabular}{llccccccccc c}
    \toprule
     \multirow{2}{*}{Category} &\multirow{2}{*}{Models} & \multicolumn{3}{c}{Cora} & \multicolumn{3}{c}{PubMed} & \multicolumn{3}{c}{Arxiv 2023} & \\
     & & Hits@50 & Hits@100 & MRR & Hits@50 & Hits@100 & MRR & Hits@50 & Hits@100 & MRR & Rank \\
    \midrule
    \multirow{4}{*}{Embedding}  
    & RotatE   & 27.51{\scriptsize$\pm$6.39} & 56.60{\scriptsize$\pm$6.29} & 2.98{\scriptsize$\pm$0.41} & 26.10{\scriptsize$\pm$2.89} & 38.58{\scriptsize$\pm$2.59} & 4.41{\scriptsize$\pm$1.24} & 24.76{\scriptsize$\pm$1.31} & 34.57{\scriptsize$\pm$0.84} & 5.65{\scriptsize$\pm$1.20} & 23\\
    & TransE   & 32.17{\scriptsize$\pm$6.52} & 57.00{\scriptsize$\pm$7.06} & 3.97{\scriptsize$\pm$0.90} & 38.58{\scriptsize$\pm$3.64} & 50.04{\scriptsize$\pm$2.63} & 7.74{\scriptsize$\pm$1.71} & 21.27{\scriptsize$\pm$1.81} & 30.22{\scriptsize$\pm$1.96} & 5.11{\scriptsize$\pm$1.16} & 22\\
    & ComplEx  & 62.53{\scriptsize$\pm$1.71} & 73.59{\scriptsize$\pm$0.90} & 17.82{\scriptsize$\pm$2.58} & 38.14{\scriptsize$\pm$3.55} & 53.87{\scriptsize$\pm$3.09} & 6.50{\scriptsize$\pm$2.48} & 22.33{\scriptsize$\pm$1.05} & 32.17{\scriptsize$\pm$2.38} & 5.16{\scriptsize$\pm$1.07} & 19 \\
    & DistMult & 64.90{\scriptsize$\pm$1.59} & 73.12{\scriptsize$\pm$0.28} & 19.28{\scriptsize$\pm$2.11} & 39.09{\scriptsize$\pm$3.93} & 54.82{\scriptsize$\pm$2.95} & 6.90{\scriptsize$\pm$2.50} & 21.32{\scriptsize$\pm$0.45} & 32.37{\scriptsize$\pm$1.62} & 5.02{\scriptsize$\pm$1.24} & 17\\
    \midrule
    \multirow{3}{*}{Heuristic}  
    & CN & 50.36{\scriptsize$\pm$0.03} & 50.36{\scriptsize$\pm$0.03} & 32.88{\scriptsize$\pm$0.09} &  33.32{\scriptsize$\pm$0.02} & 33.32{\scriptsize$\pm$0.02} & 21.13{\scriptsize$\pm$0.02} & 27.20{\scriptsize$\pm$0.01} & 27.20{\scriptsize$\pm$0.01} & 14.66{\scriptsize$\pm$0.06} & 21\\
    & AA & 50.36{\scriptsize$\pm$0.03} & 50.36{\scriptsize$\pm$0.03} & 47.33{\scriptsize$\pm$0.09} & 33.32{\scriptsize$\pm$0.02} & 33.32{\scriptsize$\pm$0.02} & 24.61{\scriptsize$\pm$0.11} & 27.20{\scriptsize$\pm$0.01} & 27.20{\scriptsize$\pm$0.01} & 19.87{\scriptsize$\pm$0.30} & 18\\
    & RA & 50.36{\scriptsize$\pm$0.03} & 50.36{\scriptsize$\pm$0.03} & 47.17{\scriptsize$\pm$0.11} & 33.32{\scriptsize$\pm$0.02} & 33.32{\scriptsize$\pm$0.02} & 23.94{\scriptsize$\pm$0.16} & 27.20{\scriptsize$\pm$0.01} & 27.20{\scriptsize$\pm$0.01} & 19.16{\scriptsize$\pm$0.27} & 20\\
    \midrule          
    \multirow{4}{*}{Structure}  
    & PPR/sym & 84.74{\scriptsize$\pm$0.00} & 88.93{\scriptsize$\pm$0.00} & \cellcolor{Emerald!20}58.86{\scriptsize$\pm$0.98} & 69.81{\scriptsize$\pm$0.02} & 72.95{\scriptsize$\pm$0.01} & 28.04{\scriptsize$\pm$0.91} & 65.68{\scriptsize$\pm$0.02} & 67.86{\scriptsize$\pm$ 0.02} & 26.57{\scriptsize$\pm$0.82} & 6 \\
    & Katz & 69.25{\scriptsize$\pm$0.02} & 69.25{\scriptsize$\pm$0.02} & 38.17{\scriptsize$\pm$0.12} & 66.02{\scriptsize$\pm$0.02} & 66.02{\scriptsize$\pm$0.02} & 30.94{\scriptsize$\pm$0.08} & 55.39{\scriptsize$\pm$0.01} & 55.39{\scriptsize$\pm$ 0.01} & 21.76{\scriptsize$\pm$0.21} & 14\\
    & DeepWalk & 84.00{\scriptsize$\pm$0.07} &  89.31{\scriptsize$\pm$0.03} & 44.39{\scriptsize$\pm$0.96} & 64.35{\scriptsize$\pm$0.01} & 71.78{\scriptsize$\pm$0.01} & 19.66{\scriptsize$\pm$1.00} & 21.37{\scriptsize$\pm$0.11} & 33.80{\scriptsize$\pm$ 0.10} & 4.47{\scriptsize$\pm$0.03} & 16\\
    & Node2Vec & 83.08{\scriptsize$\pm$0.05} & 88.38{\scriptsize$\pm$0.03} & 39.94{\scriptsize$\pm$1.10} & 63.95{\scriptsize$\pm$0.02} & 70.95{\scriptsize$\pm$0.01} & 20.68{\scriptsize$\pm$0.24} & 30.19{\scriptsize$\pm$0.18} & 40.81{\scriptsize$\pm$ 0.20} & 5.60{\scriptsize$\pm$0.03} & 15 \\
    \midrule 

    \multirow{4}{*}{GCNs}      
    & GCN & 91.46{\scriptsize$\pm$2.36} & 96.20{\scriptsize$\pm$1.71} & 45.84{\scriptsize$\pm$8.40} & 83.11{\scriptsize$\pm$2.19} & 89.59{\scriptsize$\pm$1.73} & 24.55{\scriptsize$\pm$4.02} & 45.07{\scriptsize$\pm$0.87} & 52.46{\scriptsize$\pm$ 1.44} & 17.62{\scriptsize$\pm$3.34} & 10 \\
    & GAT & 89.80{\scriptsize$\pm$2.00} & 95.34{\scriptsize$\pm$1.61} & 49.82{\scriptsize$\pm$10.04} & 74.23{\scriptsize$\pm$2.54} & 84.03{\scriptsize$\pm$1.59} & 18.13{\scriptsize$\pm$5.81} & 43.09{\scriptsize$\pm$1.22} & 53.49{\scriptsize$\pm$ 1.06} & 13.58{\scriptsize$\pm$4.33} & 13\\
    & SAGE & 86.40{\scriptsize$\pm$3.73} & 95.10{\scriptsize$\pm$1.57} & 46.03{\scriptsize$\pm$6.70} & 86.55{\scriptsize$\pm$0.55} & 93.36{\scriptsize$\pm$0.70} & 31.66{\scriptsize$\pm$5.33}  & 45.42{\scriptsize$\pm$3.12} & 56.69{\scriptsize$\pm$ 3.13} & 11.52{\scriptsize$\pm$1.67} & 8 \\
    & GIN & 91.54{\scriptsize$\pm$2.91} & 96.05{\scriptsize$\pm$2.13}  & 51.90{\scriptsize$\pm$6.65} & \cellcolor{Emerald!20}86.92{\scriptsize$\pm$1.68} & 92.02{\scriptsize$\pm$0.91} & 24.63{\scriptsize$\pm$2.24} & 45.35{\scriptsize$\pm$2.58} & 53.22{\scriptsize$\pm$ 2.05} & 14.79{\scriptsize$\pm$4.53} & 9\\
    \midrule      
    \multirow{7}{*}{GCN4LP}            
    & SEAL & 87.38{\scriptsize$\pm$3.06} & 92.03{\scriptsize$\pm$2.96} & 37.81{\scriptsize$\pm$9.93} & 84.62{\scriptsize$\pm$3.53} & 89.52{\scriptsize$\pm$1.27} & \cellcolor{Emerald!40}49.02{\scriptsize$\pm$13.91} & 56.98{\scriptsize$\pm$1.89} &  67.34{\scriptsize$\pm$ 3.74}  & 22.47{\scriptsize$\pm$3.69} & 5\\
    & NeoGNN & 81.03{\scriptsize$\pm$3.11} & 90.04{\scriptsize$\pm$2.02} & 41.48{\scriptsize$\pm$5.11} & 73.17{\scriptsize$\pm$5.29} & 81.25{\scriptsize$\pm$8.14} & 31.44{\scriptsize$\pm$3.85} & 64.54{\scriptsize$\pm$11.14} & 69.34{\scriptsize$\pm$ 8.56}  & 28.07{\scriptsize$\pm$15.62} & 7 \\
    & ELPH & 87.30{\scriptsize$\pm$4.94} & 94.91{\scriptsize$\pm$2.17} & 39.86{\scriptsize$\pm$10.20} & 59.19{\scriptsize$\pm$5.58} & 74.62{\scriptsize$\pm$1.64} & 24.61{\scriptsize$\pm$3.17} & 57.66{\scriptsize$\pm$1.55} & 66.95{\scriptsize$\pm$ 3.62} & 29.22{\scriptsize$\pm$5.95} & 11\\
    & BUDDY & 87.82{\scriptsize$\pm$3.14} & 95.42{\scriptsize$\pm$2.26} & 30.78{\scriptsize$\pm$5.55} & 76.14{\scriptsize$\pm$3.46} & 89.25{\scriptsize$\pm$2.27} & 19.46{\scriptsize$\pm$2.42} & 52.25{\scriptsize$\pm$2.01} & 60.49{\scriptsize$\pm$ 0.94} & 18.75{\scriptsize$\pm$3.71} & 12\\
    & HL-GNN &
    90.59{\scriptsize$\pm$3.41} & 94.62{\scriptsize$\pm$1.87} & 50.35{\scriptsize$\pm$10.07} & 85.14{\scriptsize$\pm$1.83} & 91.87{\scriptsize$\pm$1.36} & 31.49{\scriptsize$\pm$7.84} & 76.51{\scriptsize$\pm$1.68} & 84.23{\scriptsize$\pm$0.82} & 24.21{\scriptsize$\pm$7.69} & 4\\
    & NCN & \cellcolor{Emerald!20}96.16{\scriptsize$\pm$1.62} & 
    \cellcolor{Emerald!20}98.74{\scriptsize$\pm$0.96} & 45.76{\scriptsize$\pm$16.39} & 86.44{\scriptsize$\pm$2.03} & 93.21{\scriptsize$\pm$1.10} & 25.92{\scriptsize$\pm$4.33} & \cellcolor{Emerald!20}82.34{\scriptsize$\pm$2.45} & 88.83{\scriptsize$\pm$ 1.43} & \cellcolor{Emerald!20}37.92{\scriptsize$\pm$13.21} & 2 \\
    & NCNC & 
    95.42{\scriptsize$\pm$2.41} & 98.67{\scriptsize$\pm$0.76}  & 48.68{\scriptsize$\pm$18.60} & 86.49{\scriptsize$\pm$0.99} &  \cellcolor{Emerald!20}93.74{\scriptsize$\pm$0.25} & 20.31{\scriptsize$\pm$6.51} & 81.86{\scriptsize$\pm$1.64} & \cellcolor{Emerald!20}89.13{\scriptsize$\pm$ 2.08} & 35.67{\scriptsize$\pm$12.30}& 3 \\
    \midrule                     
    Ours & mpnet-\method &  \cellcolor{Emerald!40}99.92{\scriptsize$\pm$0.18} & \cellcolor{Emerald!40} 99.92{\scriptsize$\pm$0.18}  & \cellcolor{Emerald!40} 68.43{\scriptsize$\pm$14.23} & \cellcolor{Emerald!40} 91.67{\scriptsize$\pm$4.96} & \cellcolor{Emerald!40} 97.13{\scriptsize$\pm$ 1.74} & \cellcolor{Emerald!20}35.63{\scriptsize$\pm$5.75} & \cellcolor{Emerald!40}89.17{\scriptsize$\pm$5.45} & \cellcolor{Emerald!40}94.85{\scriptsize$\pm$3.15} & \cellcolor{Emerald!40}45.70{\scriptsize$\pm$3.88} & 1 \\
    \bottomrule
  \end{tabular}
  \end{adjustbox}
\end{table*}

\noindent \textbf{Metrics \& Dataset Split.}  We utilize structure-sensitive metrics including Hits@50, Hits@100 and MRR (Mean Reciprocal Rank). Hits@K quantifies the ratio of positive edges ranked within the top K positions, while MRR evaluates the model’s ability to rank positive above negative ones.  We apply a uniform random split of 80\%, 15\% and 5\% across all graphs. All metrics are reported as the mean and standard deviation, averaged over five random seeds. We exclude the target link (link to be predicted) in each mini-batch.

\subsubsection{Results.} In  \Cref{tab:compare-gcn}, \method significantly outperforms established baselines, including the competitive NCNC method, with improvements of up to 5.72\% on Arxiv 2023 (Hits@100). On Pubmed, \method achieves a 4.75\% improvement in Hits@50 over the strongest prior model, GIN. On Cora, our method outperforms the second-best PageRank by 9.57\% in MRR. Across nine comparative experiments, \method surpasses eight different state-of-the-art approaches.  These results affirm the effectiveness of fusing text modalities with GCN within an iterative architecture. Although other GCN-based methods also show a certain level of competitiveness (NCNC, NCN, PageRank, SEAL), \method consistently provides performance improvements across all datasets. This demonstrates the necessity of integrating language embeddings with graph structure.

\subsection{RQ2: Comparison with PLM-based Approaches}
\label{subsec: RQ2}

Next, we compare \method’s performance with PLM-based approaches on the link prediction task, where the objective is to retrieve relevant conceptual embedding that enhances structural learning in TAGs. 

\subsubsection{Empirical Setup for RQ2} 
\noindent \textbf{Datasets \& Metrics.} We use the same datasets, data splits and evaluation metrics as described in \Cref{subsec: RQ2}. \\
\noindent \textbf{Benchmarked Approaches.} We consider two common frameworks for PLMs: \textit{Zero-Shot} and \textit{Fine-Tuning}. In the \textit{Zero-Shot} setting, the PLM is directly applied to downstream tasks without additional training, relying on its pretrained knowledge to generalize across domains. In contrast, \textit{Fine-Tuning} adapts the PLM to a specific task through gradient-based optimization on labeled data. 

We group the methods into four categories based on their choice of decoder ($D^*$) and training framework ($T^*$) in \Cref{tab:compare-plm}: 
1) \uline{ZS-MLP} treats the PLM as a frozen feature extractor, using the generated text embeddings as input for binary link prediction. 
2) \uline{FT-MLP} fine-tunes the final encoder layers of the PLM jointly with the MLP \citep{chen2024exploring}. 
3) \uline{ZS-GCN} leverages PLM-generated text embeddings as node features for a GCN, while keeping the PLM parameters frozen during training \citep{duan2024simteg, li2024tegdb}. 
4) \uline{FT-GCN} jointly optimizes both the PLM and GCN parameters within a cross-modal framework, following approaches similar to \citep{Xue2023EfficientEL, Zhu2024EfficientTA}.

\noindent \textbf{Choices and Implementation of PLM.} Our study incorporates a diverse set of PLM architectures, including encoder-only models BERT \citep{Devlin2019BERTPO} and sentence transformers such as MPNet \citep{Song2020MPNetMA}, e5-large-v2 \citep{Wang2022TextEB} and MiniLM-L6-v2 \citep{Reimers2019SentenceBERTSE}. Additionally, we include Meta-LLaMA-3-8B \citep{Dubey2024TheL3} as a case study for large-scale text embedding. \modify{These five backbones are chosen to span the three dominant PLM architecture families (encoder-only, sentence-transformer, decoder-only) across two orders of magnitude in parameter count, rather than to exhaustively benchmark the latest embedding models; \Cref{subfig:_params} confirms performance scales consistently with size within each family, isolating architecture-family effects from raw scale.} For consistency and comparability with prior work, we also evaluate shallow embedding methods, including bag-of-words \citep{Harris1954DistributionalS} and Word2Vec \citep{mikolov2013efficientestimationwordrepresentations}. We leverage the [EOS] token in LLaMA3 and the [CLS] token in sentence embedding models (MPNet \citep{Song2020MPNetMA}, e5-large-v2 \citep{Wang2022TextEB} and MiniLM-L6-v2 \citep{Reimers2019SentenceBERTSE}) as node features. We froze the layer up to the 6-th encoder layer in MiniLM \citep{Reimers2019SentenceBERTSE}; up to the 32-nd layer in Llama3 \citep{Dubey2024TheL3};  up to the 12-th encoder layer in BERT and up to the 23rd encoder layer in e5-large \citep{Wang2022TextEB}. 

\noindent \textbf{Hyperparameter Tuning.} We utilize hierarchical grid search for learning rate, weight decay, feature dimensions and MLP layers for the model listed in  \Cref{tab:compare-plm}. We provide more details about hyperparameter tuning in the \Cref{app:hyper_tune}. 

\setlength{\tabcolsep}{3pt}
\begin{table*}[b]
  \centering
  \tiny
  \caption{\textbf{Comparison with PLM-agnostic approaches.} Results of \method (mean $\pm$ standard deviation) on Hits@50, Hits@100, and MRR across five random splits of the Cora, PubMed, and Arxiv\_2023 datasets. The top 1–2 models are highlighted in emerald. \textbf{T:} Training framework — Zero-Shot (ZS) or Fine-Tune (FT). \textbf{D:} Decoder choice — GCN or MLP. Additional results are provided in ~\Cref{sec:app-full-result}. }
  \label{tab:compare-plm}
  \footnotesize
  \arrayrulecolor{black}
  \begin{adjustbox}{width=\textwidth}
  \begin{tabular}{lllccccccccc c}
    \toprule
     \multirow{2}{*}{\textbf{D}*} & \multirow{2}{*}{\textbf{T}*} & \multirow{2}{*}{Models}
     & \multicolumn{3}{c}{Cora}
     & \multicolumn{3}{c}{PubMed}
     & \multicolumn{3}{c}{Arxiv 2023}
     & \multirow{2}{*}{Rank} \\
     \cmidrule(lr){4-6}\cmidrule(lr){7-9}\cmidrule(lr){10-12}
     & & & Hits@50 & Hits@100 & MRR & Hits@50 & Hits@100 & MRR & Hits@50 & Hits@100 & MRR & \\
    \midrule
    \multirow{9}{*}{MLP} & \multirow{4}{*}{ZS} & BERT        & 35.79{\scriptsize$\pm$2.50} & 56.90{\scriptsize$\pm$3.26} & 3.42{\scriptsize$\pm$0.47} & 36.12{\scriptsize$\pm$0.37} & 48.73{\scriptsize$\pm$ 1.43} & 6.56{\scriptsize$\pm$0.70} & 37.66{\scriptsize$\pm$1.57} & 48.74{\scriptsize$\pm$ 1.15} & 10.04{\scriptsize$\pm$0.85} & 17\\
                         &                      & MiniLM      & 83.39{\scriptsize$\pm$0.00} & 92.99{\scriptsize$\pm$0.00} & 34.29{\scriptsize$\pm$4.10} & 66.35{\scriptsize$\pm$0.29} & 81.90{\scriptsize$\pm$ 0.03}& 21.54{\scriptsize$\pm$0.11} & 68.15{\scriptsize$\pm$0.09} & 77.62{\scriptsize$\pm$ 0.03} & 16.91{\scriptsize$\pm$0.18} & 10 \\
                         &                      & e5-large-v2 & 64.35{\scriptsize$\pm$1.56} & 83.10{\scriptsize$\pm$0.80}  & 24.40{\scriptsize$\pm$2.48} & 71.32{\scriptsize$\pm$0.86} & 82.59{\scriptsize$\pm$ 0.26}  & 21.79{\scriptsize$\pm$1.58} & 75.03{\scriptsize$\pm$0.28} &84.09{\scriptsize$\pm$ 0.24}  & 21.69{\scriptsize$\pm$0.03} & 11\\
                         &                      & Llama-3-8B  & 89.15{\scriptsize$\pm$0.72} & 95.64{\scriptsize$\pm$0.41} & 31.19{\scriptsize$\pm$3.49} & 79.87{\scriptsize$\pm$1.19} & 89.01{\scriptsize$\pm$ 0.53} & 22.87{\scriptsize$\pm$4.47} & 83.18{\scriptsize$\pm$1.19} & 89.91{\scriptsize$\pm$ 0.19} & 22.85{\scriptsize$\pm$1.12} & 7 \\
     \cmidrule(lr){2-12}
                         & \multirow{4}{*}{FT} & BERT        & 89.17{\scriptsize$\pm$2.86} & 96.99{\scriptsize$\pm$1.36} & 30.90{\scriptsize$\pm$4.33} & 73.70{\scriptsize$\pm$4.01} & 84.45{\scriptsize$\pm$ 2.92} & 17.11{\scriptsize$\pm$3.90} & 77.75{\scriptsize$\pm$3.46} & 87.56{\scriptsize$\pm$ 2.05} & 29.54{\scriptsize$\pm$3.98} & 8 \\
                         &                      & e5-large    & 92.09{\scriptsize$\pm$1.70} & 96.92{\scriptsize$\pm$1.35} & 38.63{\scriptsize$\pm$9.39} & 76.26{\scriptsize$\pm$2.55} & 87.23{\scriptsize$\pm$ 1.60} & 19.75{\scriptsize$\pm$5.81} & 80.48{\scriptsize$\pm$2.52} & 89.35{\scriptsize$\pm$1.33} & \cellcolor{Emerald!20}31.73{\scriptsize$\pm$6.62} & 4 \\
                         &                      & MiniLM      & 92.49{\scriptsize$\pm$2.13} & 96.68{\scriptsize$\pm$1.69} & 35.55{\scriptsize$\pm$5.82} & 75.87{\scriptsize$\pm$3.72} & 86.80{\scriptsize$\pm$ 1.98} & 20.79{\scriptsize$\pm$6.32} & 80.20{\scriptsize$\pm$2.62} & 88.38{\scriptsize$\pm$1.06} & 29.86{\scriptsize$\pm$5.82} & 6 \\
                         &                      & mpnet       & 93.44{\scriptsize$\pm$1.64} & 97.78{\scriptsize$\pm$0.66} & 22.55{\scriptsize$\pm$10.71} & 63.27{\scriptsize$\pm$31.76} & 90.69{\scriptsize$\pm$2.49} & 9.38{\scriptsize$\pm$3.12} &  82.72{\scriptsize$\pm$1.28} &  91.44{\scriptsize$\pm$0.75} & 8.42{\scriptsize$\pm$6.49} & 9 \\
    \midrule
    \multirow{8}{*}{GCN} & \multirow{4}{*}{ZS} & BERT-BUDDY  & 63.32{\scriptsize$\pm$5.74} & 81.42{\scriptsize$\pm$2.86} & 14.64{\scriptsize$\pm$3.70} & 62.03{\scriptsize$\pm$3.16} & 78.54{\scriptsize$\pm$1.26} & 13.15{\scriptsize$\pm$2.84} & 35.19{\scriptsize$\pm$1.54} & 43.30{\scriptsize$\pm$1.65} & 10.20{\scriptsize$\pm$2.88} & 15 \\
                         &                      & Llama-BUDDY & 91.23{\scriptsize$\pm$1.66} & 97.47{\scriptsize$\pm$1.44} & 29.10{\scriptsize$\pm$5.72} & 88.10{\scriptsize$\pm$2.08} & 94.60{\scriptsize$\pm$0.98} & 24.57{\scriptsize$\pm$3.66} & 75.15{\scriptsize$\pm$3.55} & 84.93{\scriptsize$\pm$1.68} & 28.74{\scriptsize$\pm$5.29} & 3 \\
                         &                      & MiniLM-NCNC &
    \cellcolor{Emerald!20}96.13{\scriptsize$\pm$1.20} & \cellcolor{Emerald!20}98.81{\scriptsize$\pm$0.49} & \cellcolor{Emerald!20}38.96{\scriptsize$\pm$13.20} & \cellcolor{Emerald!20}90.32{\scriptsize$\pm$1.52} & \cellcolor{Emerald!20}96.11{\scriptsize$\pm$ 0.60} & 22.56{\scriptsize$\pm$3.30} & 65.65{\scriptsize$\pm$1.80} & 70.61{\scriptsize$\pm$2.24} & 29.10{\scriptsize$\pm$3.83} & 5 \\
                         &                      & Llama-NCNC  &
    95.57{\scriptsize$\pm$1.02} & 98.73{\scriptsize$\pm$0.65} & 27.45{\scriptsize$\pm$7.86} & 84.65{\scriptsize$\pm$1.95} & 92.39{\scriptsize$\pm$ 1.46} & 20.51{\scriptsize$\pm$9.80} &
    \cellcolor{Emerald!20}84.68{\scriptsize$\pm$1.72} & \cellcolor{Emerald!20}91.90{\scriptsize$\pm$1.33} & 27.16{\scriptsize$\pm$11.48} & 2 \\
     \cmidrule(lr){2-12}
                         & \multirow{4}{*}{FT} & BERT-GIN    & 84.23{\scriptsize$\pm$4.08} & 86.99{\scriptsize$\pm$6.58} & 33.74{\scriptsize$\pm$4.12} & 29.44{\scriptsize$\pm$2.84} & 43.75{\scriptsize$\pm$5.46} & 24.26{\scriptsize$\pm$1.75} & 13.76{\scriptsize$\pm$2.22} & 25.18{\scriptsize$\pm$4.17} & 12.62{\scriptsize$\pm$0.55} & 16 \\
                         &                      & MiniLM-GIN  & 93.04{\scriptsize$\pm$3.54} & 92.01{\scriptsize$\pm$3.28} & 25.09{\scriptsize$\pm$0.47} & 45.31{\scriptsize$\pm$3.83} & 63.18{\scriptsize$\pm$1.34} & \cellcolor{Emerald!20}26.70{\scriptsize$\pm$3.61} & 49.11{\scriptsize$\pm$4.22} & 66.06{\scriptsize$\pm$2.60} & 21.48{\scriptsize$\pm$5.10} & 14 \\
                         &                      & mpnet-GIN   & 92.13{\scriptsize$\pm$4.12} & 97.55{\scriptsize$\pm$1.86} & 29.06{\scriptsize$\pm$7.96} & 46.82{\scriptsize$\pm$4.22} & 63.42{\scriptsize$\pm$2.77} & 21.86{\scriptsize$\pm$3.68} & 55.11{\scriptsize$\pm$3.70} & 64.38{\scriptsize$\pm$3.22} & 28.88{\scriptsize$\pm$5.89} & 12 \\
                         &                      & mpnet-SAGE  & 82.01{\scriptsize$\pm$4.19} & 93.88{\scriptsize$\pm$0.28} & 25.34{\scriptsize$\pm$8.06} & 57.58{\scriptsize$\pm$3.78} & 71.62{\scriptsize$\pm$2.78} & 21.91{\scriptsize$\pm$3.58} & 52.97{\scriptsize$\pm$5.05} & 66.28{\scriptsize$\pm$5.98} & 24.51{\scriptsize$\pm$4.54} & 13 \\
    \midrule
                         & Ours                & mpnet-\method &
      \cellcolor{Emerald!40}99.92{\scriptsize$\pm$0.18} &  \cellcolor{Emerald!40}99.92{\scriptsize$\pm$0.18}  &  \cellcolor{Emerald!40}68.43{\scriptsize$\pm$14.23} &  \cellcolor{Emerald!40}91.67{\scriptsize$\pm$4.96} &  \cellcolor{Emerald!40}97.13{\scriptsize$\pm$ 1.74} & \cellcolor{Emerald!40}31.66{\scriptsize$\pm$5.33} & \cellcolor{Emerald!40}89.17{\scriptsize$\pm$5.45} & \cellcolor{Emerald!40}94.85{\scriptsize$\pm$3.15} & \cellcolor{Emerald!40}45.70{\scriptsize$\pm$3.88} & 1 \\
    \bottomrule
  \end{tabular}
  \end{adjustbox}
\end{table*}

\subsubsection{Results} \method consistently outperforms all benchmarked approaches. On Cora, under the MRR metric, \method outperforms the best baseline, MiniLM-NCNC, by up to 43.06\%. For the Hits@K metric, Mpnet-\method surpasses the strongest baseline, ZS-GCN, by a maximum margin of 2\%. Furthermore, our \method consistently achieves state-of-the-art results, reaching nearly perfect Hits@50/100 on Cora and PubMed, and achieving substantially higher MRR on the large-scale Arxiv 2023 benchmark. We observe that existing zero-shot PLM-MLP baselines have lower MRR but high Hits@K across the three datasets, indicating that they fail to capture crucial structural properties. However, after fine-tuning, FT-MLP generally performs better, with BERT reaching 54.17\% on Cora. 
We also find that combining PLMs with GCNs (FT-GCN) does not yield stable or significant improvements over FT-MLP. This finding aligns with other benchmarks \citep{Chen2024TextspaceGF}, suggesting that simply combining PLMs and GNNs does not necessarily lead to improved performance. 
Our method, enhanced by three robust designs, underscores the effectiveness of integrating semantic understanding with structural reasoning, proving to be both principled and scalable for text-attributed link prediction.

\noindent \textbf{Impact of the LLM Scale.} Within the Zero-Shot - MLP category, we examine the effects of scaling the text encoder, from BERT embeddings to larger decoder-only models such as LLaMA-3-8B. Our findings reveal a clear correlation between PLM size and link prediction performance. Specifically, increasing the scale of the PLM backbone leads to a consistent improvement in average rank, advancing from 17th to seventh place. To further generalize this observation, we extend our evaluation across broader model categories, as shown in  \Cref{subfig:_params}. The observed trend remains consistent within specific model families, including E5~\citep{Wang2022TextEB}, BERT~\citep{Devlin2019BERTPO} and Sentence-Transformers~\citep{Reimers2019SentenceBERTSE, Song2020MPNetMA}. These results provide strong empirical evidence that the expressive capacity of PLMs directly influences link prediction performance, reinforcing the critical role of language modeling in enhancing structural learning.

\subsection{RQ3: Evaluation on Extended-Scale PLMs}
Building on the strong results obtained on three commonly used link prediction datasets, we further evaluate \method on a scaled dataset. The objective of this experiment is to assess \method’s ability to generalize beyond small-scale graphs and demonstrate its scalability for industrial-scale link prediction embedding tasks.

\subsubsection{Empirical Setup for RQ3}
\noindent \textbf{Dataset Setting.} 
\label{sec: proposed dataset} 
We collect nine datasets, including two generated TAGs from the PaperswithCode API \citep{Saier2023CoConAD} pwc small and pwc medium for link prediction. It has an expanded scale with a continuous spectrum of node sizes ranging from $10^2$ to $10^6$. We include Citationv8 \citep{yan2023a} and pwc medium to enable the study of large-scale settings. For all experiments, the results are reported on randomly sampled test edges for datasets larger than pwc\_medium in   \Cref{tab:llm_result}. We utilize metrics AUC due to its desirable property of \textbf{scale invariance}.  AUC assesses the model's ability to score positives higher than negatives, offering numerical stability and scale invariance.  We apply a uniform random split of 80\%, 15\% and 5\% across all graphs. All results are averaged over five random seeds. We exclude the target link (link to be predicted) in each mini-batch.

\noindent \textbf{Hyperparameter Ranges.} Due to the limited computation budget, we use the tuned parameters of GCN and PLM from the \Cref{subsec: RQ2}. We only tune the optimizer-related parameters such as learning rate, batch size and weight decay to adapt to our GPU. Further details on parameter tuning and experimental settings are provided in ~\Cref{subsec:app-hyperparameter-gcn2lp}. 

\noindent \textbf{Benchmarked Approaches.} We compare \method{} against the strongest baselines summarized in each category, as reported in \Cref{tab:compare-gcn}, \Cref{tab:compare-plm} and \Cref{tab:llm_result}, namely Llama3-MLP, NCNC and MiniLM-NCN.

\begin{table*}[t]
  \caption{\textbf{Evaluation on large-scale graphs.} We compare \method against the strongest baseline in each category using AUC. Reported values are mean accuracy $\pm$ standard deviation across five random data splits. The top 1–2 models are highlighted in emerald. }
    \label{tab:llm_result}
    \centering
    \small
    \begin{adjustbox}{width=\textwidth}
    \begin{tabular}{lcccccccccc}
        \toprule
        & \multicolumn{3}{c}{\textsc{Small}} & \multicolumn{4}{c}{\textsc{Medium}} & \multicolumn{2}{c}{\textsc{Large}} & \multirow{2}{*}{Rank}  \\ 
        \cmidrule(lr){2-4}\cmidrule(lr){5-8}\cmidrule(lr){9-10}
        & PwC small & Cora & Arxiv 2023 & PubMed & PwC_\text{medium} & History & Photo & Ogbn-arxiv & Citationv8 & \\
        \cmidrule(lr){1-11}
        \multicolumn{7}{l}{\textbf{Embedding-MLP: Non-contextualized Shallow Embeddings}} \\
        TF-IDF & $63.50 \pm 8.59$ & $68.27 \pm 2.52$ & $76.65 \pm 1.95$ & $67.06 \pm 2.34$ & $70.94 \pm 0.94$ & $60.94 \pm 0.48$ & $62.80 \pm 0.62$ & $62.63 \pm 2.88$ & $57.18 \pm 1.19$ & \textbf{14} \\
        Word2Vec & $51.00 \pm 2.24$ & $60.15 \pm 1.77$ & $85.22 \pm 0.92$ & $83.88 \pm 0.91$ & $81.79 \pm 0.55$ & $65.54 \pm 0.21$ & $64.71 \pm 0.12$ & $85.57 \pm 0.28$ & $80.50 \pm 0.35$ & \textbf{13} \\ \midrule
        \multicolumn{7}{l}{\textbf{ZS-MLP: Local Sentence Embedding Models}} \\
        MiniLM-L6-v2 & $51.90 \pm 11.54$ & $91.22 \pm 0.04$ & $95.22 \pm 0.00$ & $96.20 \pm 0.00$ &  $98.39 \pm 0.04$ & $94.64 \pm 0.01$ & $84.14 \pm 0.03$ & $98.22 \pm 0.01$ & $97.86 \pm 0.01$ & \textbf{11} \\
        e5-large-v2 & $80.60 \pm 2.57$ & $83.87 \pm 0.23$ & $95.72 \pm 0.01$ & $96.73 \pm 0.03$ & $97.83 \pm 0.01$ & $95.97 \pm 0.00$ & $85.04 \pm 0.44$ & $97.92 \pm 0.01$ & $98.05 \pm 0.01$ & \textbf{10} \\
        BERT & $69.85 \pm 2.40$ & $65.09 \pm 1.41$ & $86.37 \pm 0.27$ & $88.96 \pm 0.31$ & $83.85 \pm 0.33$ & $90.26 \pm 0.36$ & $73.12 \pm 0.76$ & $86.89 \pm 0.26$ & $86.22 \pm 0.48$ & \textbf{12} \\ \midrule

        \multicolumn{7}{l}{\textbf{FT-MLP}} \\
        mpnet-FT & $85.93 \pm 5.86$ & $94.71 \pm 1.16$ & $97.36 \pm 0.33$ &  $98.06 \pm 0.19$ &  $97.72 \pm 0.54$ & $93.91 \pm 0.64$ & $82.26 \pm 0.92$ & $98.21 \pm 0.26$ &  $98.17 \pm 0.81$ & \textbf{8}\\ 
        e5-large-v2-FT &  $86.95 \pm 4.93$ & $94.27 \pm 0.85$ &  $97.39 \pm 0.33$ & $97.64 \pm 0.36$ & $94.06 \pm 0.84$ & $94.82 \pm 1.18$ & $86.26 \pm 1.16$ & $97.68 \pm 0.17$ & $97.08 \pm 1.42$ & \textbf{9}\\
        MiniLM-L6-v2-FT &  $87.09 \pm 2.51$ & $93.98 \pm 0.85$ & $97.25 \pm 0.36$ & $97.79 \pm 0.14$ & $97.95 \pm 0.44$ & $95.58 \pm 0.51$ & $88.34 \pm 0.75$ &  $98.80 \pm 0.16$ & $97.85 \pm 1.83$ & \textbf{7}\\ \midrule

        \multicolumn{7}{l}{\textbf{ZS-MLP}} \\
        Llama-3-8B & \cellcolor{Emerald!40} $94.65 \pm 1.23$ & $92.60 \pm 0.12$ &   $97.62 \pm 0.03$ &  $98.09 \pm 0.10$ & $97.74 \pm 0.03$ & $97.28 \pm 0.08$ & $88.62 \pm 0.26$ & $99.06 \pm 0.05$ & $99.05 \pm 0.01$ & \textbf{6} \\ \midrule

        \multicolumn{7}{l}{\textbf{Pairwise GCN baseline}} \\
        NCN & $86.65 \pm 5.37$ &  $96.66 \pm 1.14$ & $97.30 \pm 0.26$ & $98.66 \pm 0.18$ &  $98.46 \pm 0.19$ & $97.77 \pm 0.30$ & $96.58 \pm 0.2$ & $98.96 \pm 0.07$ &  $98.18 \pm 0.20$ & \textbf{5} \\
        NCNC & $86.87 \pm 7.99$ & $96.56 \pm 1.04$ &   $97.42 \pm 0.37$ & $98.66 \pm 0.12$ & 
 $98.45 \pm 0.21$ &  $97.79 \pm 0.25$ & $96.79 \pm 0.25$ & $98.93 \pm 0.13$ &   $98.68 \pm 0.06$ & \textbf{4} \\ \midrule

        \multicolumn{7}{l}{\textbf{ZS-GCN}} \\
         MiniLM-NCN &   $87.15 \pm 6.84$ &  $96.93 \pm 0.54$ & $86.69 \pm 0.28$ & \cellcolor{Emerald!20} $98.97 \pm 0.10$ &  $98.99 \pm 0.16$ &   $99.12 \pm 0.11$ &  $99.57 \pm 0.03$ &  $99.58 \pm 0.07$ & $98.17 \pm 0.42$  & \textbf{3} \\
         mpnet-NCN &    $88.31 \pm 4.99$ &  $96.72 \pm 0.67 $ &  $97.82 \pm 0.22$ & $97.24 \pm 0.20$ & \cellcolor{Emerald!20}  $99.03 \pm 0.18$ & \cellcolor{Emerald!20}  $99.50 \pm 0.13$ &  $99.56 \pm 0.04$ &  $99.52 \pm 0.07$ & $98.15 \pm 0.42$ & \textbf{2}\\
         \midrule 

        \multicolumn{7}{l}{\textbf{Ours}} \\
        mpnet-\method & \cellcolor{Emerald!20} $89.36 \pm 5.37$ & \cellcolor{Emerald!40}$98.78 \pm 1.02$ & \cellcolor{Emerald!40}$98.79 \pm 0.49$ & \cellcolor{Emerald!40}$99.34 \pm 0.22$ & \cellcolor{Emerald!40}$99.34 \pm 0.09$ & \cellcolor{Emerald!40}  $99.54 \pm 0.01$ & \cellcolor{Emerald!40} $99.63 \pm 0.02$  & \cellcolor{Emerald!40}$99.72 \pm 0.04$ & \cellcolor{Emerald!40}$99.76 \pm 0.09$ & \textbf{1} \\ 
        \bottomrule
    \end{tabular}
    \end{adjustbox}
\end{table*}

\subsubsection{Results}  As shown in \Cref{tab:llm_result}, \method significantly outperforms all baseline models by at least 4.72\%, achieving a test accuracy of 89.36\% and consistently exceeding 99\% accuracy on large-scale graphs, thereby establishing new state-of-the-art results. While models such as MiniLM-NCN and MiniLM-NCNC perform competitively by incorporating pairwise structural embeddings, they lack an efficient design to fuse deeper contextual representations for link prediction.

We further investigate the impact of language models (LMs) across different graph scales, reporting results for shallow embeddings, PLMs, graph baselines, and our proposed methods across all datasets. Shallow embeddings, namely TF-IDF and Word2Vec \citep{mikolov2013efficientestimationwordrepresentations}, exhibit the weakest performance, highlighting their inability to capture complex semantic information. Local sentence embedding models (e.g., MiniLM, e5-large-v2) and fine-tuned PLMs substantially improve performance, particularly on smaller datasets; however, their performance gains tend to saturate on medium and large graphs. While Llama-3-8B achieves competitive results in the zero-shot setting on smaller datasets, it lags behind graph-based approaches as the graph scale increases.

Pairwise GCN baselines (NCN, NCNC) remain robust, consistently surpassing most PLM-based models. Notably, hybrid models (e.g., ZS-GCN) further boost accuracy, demonstrating the clear benefit of integrating PLM-derived representations with GCN-based structural reasoning. Our proposed \method (LMGJoint) achieves the best overall performance, ranking first on average across all datasets. This confirms the effectiveness of unifying semantic signals from PLMs with structural signals from GCNs in a joint framework.

\subsection{RQ4: Ablations on Graph Signals, Language Model Choice and Scale}
\label{sec:ablation study}

Next, we present our ablation studies that aim to investigate the impact of the data signals, the LM choice and scale, as well as the number of GCN layers.

\subsubsection{Impact of the Graph Signals}
First, we analyze the necessity of individual components via the ablation study in our proposed method (MiniLM-\method) on the Cora, Pubmed and Arxiv\_2013 datasets. Overall, the results, presented in \Cref{tab:ablation_study}, highlight the impact of key components on overall performance. 

\begin{table}[t]
    \centering
    \tiny
    \caption{Effect of \textbf{three graph signals} on Hits@50, MRR and AUC for three graphs.
    \modify{S1: Structural Proximity (Thresholded Common Neighbor Indicator, TCN, Eq.~\ref{equ:C2}); S2: Semantic Proximity (PLM text embeddings); S3: Propagated Proximity (GCN).}}
    \begin{adjustbox}{width=\textwidth}
    \begin{tabular}{lcccccccccc}
        \toprule
        \multirow{2}{*}{Method} 
            & \multicolumn{3}{c}{Cora}  
            & \multicolumn{3}{c}{PubMed} 
            & \multicolumn{3}{c}{Arxiv 2013}
            & \multirow{2}{*}{Rank} \\
        \cmidrule(lr){2-4}\cmidrule(lr){5-7}\cmidrule(lr){8-10}
        & Hits@50 & MRR & AUC  
        & Hits@50 & MRR & AUC  
        & Hits@50 & MRR & AUC & \\
        \midrule
        \method w/o S1 
            & 50.98{\tiny$\pm$0.32} 
            & 48.52{\tiny$\pm$7.88} 
            & 75.27{\tiny$\pm$0.52}
            & 35.06{\tiny$\pm$0.08} 
            & 15.04{\tiny$\pm$2.88} 
            & 67.39{\tiny$\pm$0.34}
            & 55.11{\tiny$\pm$3.70} 
            & 18.88{\tiny$\pm$5.89} 
            & 92.63{\tiny$\pm$0.87} & 4 \\
        \method w/o S2 
            & \cellcolor{Emerald!20}95.26{\tiny$\pm$0.19} 
            & 39.17{\tiny$\pm$3.88} 
            & \cellcolor{Emerald!20}96.16{\tiny$\pm$0.04}
            & 72.16{\tiny$\pm$4.29} 
            & 14.12{\tiny$\pm$4.28} 
            & \cellcolor{Emerald!20}97.93{\tiny$\pm$0.20}
            & 65.65{\tiny$\pm$1.80} 
            & 29.10{\tiny$\pm$3.83} 
            & 91.75{\tiny$\pm$1.47} & 3 \\
        \method w/o S3 
            & 88.14{\tiny$\pm$0.13} 
            & 38.53{\tiny$\pm$5.79} 
            & 92.35{\tiny$\pm$0.23}
            & \cellcolor{Emerald!20}78.20{\tiny$\pm$1.33} 
            & \cellcolor{Emerald!20}23.36{\tiny$\pm$2.79} 
            & 97.36{\tiny$\pm$0.12}
            & \cellcolor{Emerald!20}86.47{\tiny$\pm$3.40} 
            & \cellcolor{Emerald!20}27.76{\tiny$\pm$7.53} 
            & 92.50{\tiny$\pm$3.28} & 2 \\
        \method        
            & \cellcolor{Emerald!40}96.44{\tiny$\pm$0.18} 
            & \cellcolor{Emerald!40}65.99{\tiny$\pm$4.38} 
            & \cellcolor{Emerald!40}97.38{\tiny$\pm$0.69}
            & \cellcolor{Emerald!40}78.84{\tiny$\pm$1.38} 
            & \cellcolor{Emerald!40}26.56{\tiny$\pm$5.38} 
            & \cellcolor{Emerald!40}98.26{\tiny$\pm$0.49}
            & \cellcolor{Emerald!40}90.61{\tiny$\pm$2.25} 
            & \cellcolor{Emerald!40}35.47{\tiny$\pm$10.91} 
            & \cellcolor{Emerald!40}99.05{\tiny$\pm$0.25} & 1 \\
        \bottomrule
    \end{tabular}
    \label{tab:ablation_study}
    \end{adjustbox}
\end{table}

\noindent $\bullet$ \textbf{Effect of \textcolor{royalgreen}{\Cone} (S1).}
\textcolor{royalgreen}{Removing \Cone from \method leads to a substantial performance drop, with AUC falling to 75.27\% on Cora and 67.39\% on Pubmed.
This sharp decline highlights the essential role of structural proximity in enabling the model to identify strongly connected node pairs via shared neighborhood information.
It also shows that \Cone alone is sufficient to improve AUC by effectively distinguishing structurally dissimilar samples.
Such randomly generated negative samples exhibit large contrasts in common-neighbor overlap and occur in considerable quantities.
Therefore, incorporating \Cone into the full model yields a significant performance improvement, particularly in terms of AUC.}
\begin{figure}[t!]
    \centering
    \begin{minipage}{0.8\linewidth}
        \centering
        \begin{subfigure}{\linewidth}
            \centering
            \includegraphics[width=\linewidth]{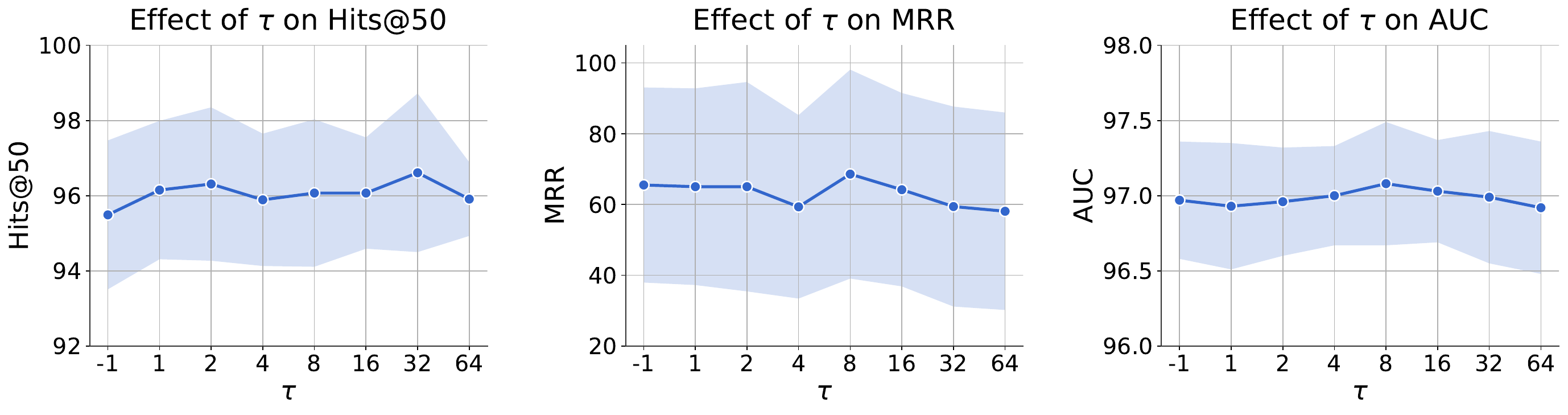}
            \caption{}
            \label{subfig:ablation_d}
        \end{subfigure}

        \vspace{0.6em}

        \begin{subfigure}{\linewidth}
            \centering
            \includegraphics[width=\linewidth]{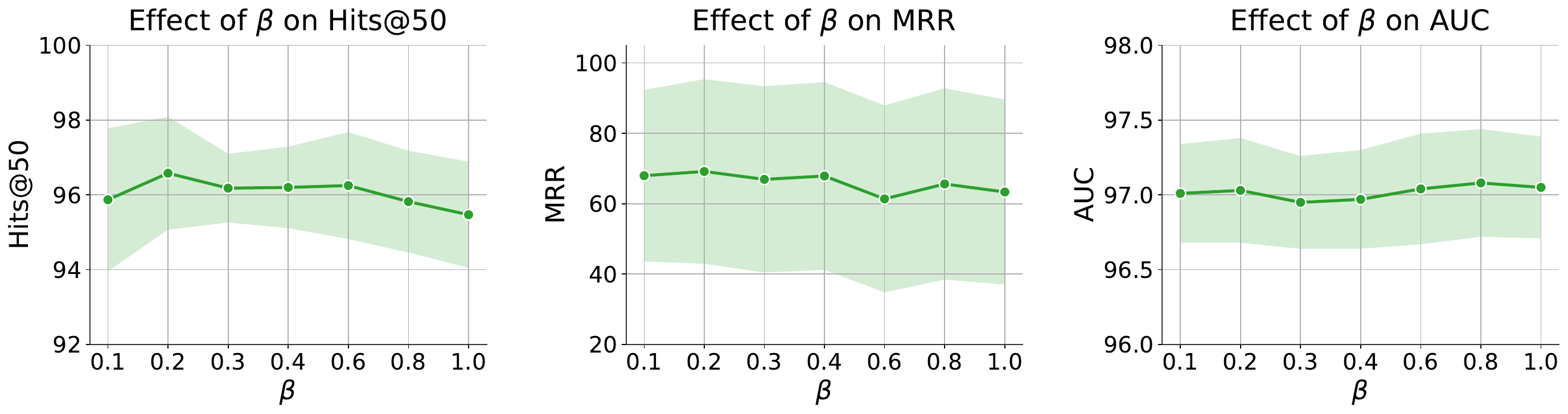}
            \caption{}
            \label{subfig:ablation_beta}
        \end{subfigure}
    \end{minipage}
    \textcolor{royalgreen}{\caption{(a) Sensitivity of \method{} to the common-neighbor degree cutoff $\tau$ used in the TCN threshold, on Hits@50, MRR, and AUC. (b) Sensitivity of \method{} to the structural-proximity mixing weight $\beta$ used in D2, on Hits@50, MRR, and AUC. Both sweeps are on Cora, averaged over five seeds; shaded bands show standard deviation.}}
    \label{fig:s1s2_sensitivity}
\end{figure}

\noindent $\bullet$ \textbf{Effect of \textcolor{royalgreen}{\Ctwo} (S2).}
\textcolor{royalgreen}{Eliminating \Ctwo leads to severe MRR drops, reaching 39.17\% on Cora and 14.12\% on Pubmed, as well as broader declines across other metrics. This sharp decline reinforces our earlier observation that semantic/textual features are critical for uncovering strong ties governed by shared topical or contextual meaning. By incorporating rich PLM-derived textual knowledge, \Ctwo complements structural similarity with insights into node content. This is especially beneficial for node pairs whose relatedness depends more on topic similarity than on adjacency-driven connectivity.}

\noindent \textbf{Additional Discussion of S2.} \textcolor{royalgreen}{We additionally study how \method{}'s performance depends on the common-neighbor degree cutoff $\tau$ used to threshold S1 (\Cref{equ:C2}) and the mixing weight $\beta$ applied to S1 when it is additively combined with S2 in D2. \Cref{fig:s1s2_sensitivity} reports Hits@50, MRR, and AUC on Cora across $\tau \in \{-1, 1, 2, 4, 8, 16, 32, 64\}$ ($\tau=-1$ denotes the unthresholded raw CN count) and $\beta \in \{0.1, 0.2, 0.3, 0.4, 0.6, 0.8, 1.0\}$, each averaged over five seeds. Hits@50 is remarkably stable across both sweeps (within 1.1 points), indicating that \method{} does not require careful tuning of either hyperparameter to remain competitive. MRR is more sensitive (up to 10.5 points across the $\tau$ sweep and 7.8 points across the $\beta$ sweep) but varies non-monotonically with both parameters. AUC is essentially insensitive to either hyperparameter, varying by at most 0.16 points across the $\tau$ sweep and 0.13 points across the $\beta$ sweep, consistent with its role as a coarser ranking metric relative to Hits@50 and MRR. In the $\tau$ sweep, Hits@50 peaks at $\tau=32$ while MRR peaks at $\tau=8$; in the $\beta$ sweep, both Hits@50 and MRR peak at $\beta=0.2$. \modify{In our main experiments we fix $\tau=2$ and $\beta=0.1$ for \emph{all} datasets; although the per-metric sweep optima above differ slightly, Hits@50 (our primary ranking metric) and AUC remain within roughly one point across the entire swept range, so this single global setting is sufficient while keeping the protocol simple and reproducible.}}

\noindent $\bullet$ \textbf{Effect of \Cthree (S3).} The impact of removing GCN (\Cthree) from \method varies by dataset. On Cora, omitting GCN drives Hits@50 down to 88.14\% and MRR to 38.53\%. Since Cora is homophilic (clustering coefficient: 0.51, edge homophily: 0.24), GCN plays a crucial role by fusing both semantic and structural information to enhance representation learning. On Pubmed, which has a low clustering coefficient (0.06), the drop is less visible, though MRR still declines by 3.2\%. These findings align with \Cref{tab:compare-gcn}, showing that networks with lower clustering reduce GCN's impact.
\modify{ Nevertheless, S3 remains consistently beneficial across all datasets in \Cref{tab:ablation_study}, every Hits@50 / MRR / AUC cell improves when S3 is included,  so we recommend it by default. A rigorous characterization of when propagated proximity helps most is difficult to establish empirically within this work, as few publicly available TAG benchmarks with rich text features exhibit strong heterophily; we point readers to \citet{Zhu2024OnTI} for a broader empirical study of how feature heterophily affects GNN-based link prediction.}

\begin{table}[t]
    \centering
    {\color{blue}
    \caption{Effect of \textbf{D1} on Cora and PubMed, Full (\method)
     matches the \method row of \Cref{tab:ablation_study}. }
    \begin{adjustbox}{max width=0.72\textwidth}
    \begin{tabular}{lcccccc}
        \toprule
        \multirow{2}{*}{Method}
            & \multicolumn{3}{c}{Cora}
            & \multicolumn{3}{c}{PubMed} \\
        \cmidrule(lr){2-4}\cmidrule(lr){5-7}
        & Hits@50 & MRR & AUC
        & Hits@50 & MRR & AUC \\
        \midrule
        Base
            & 72.64{\tiny$\pm$6.45}
            & 21.28{\tiny$\pm$6.74}
            & 84.96{\tiny$\pm$3.15}
            & 42.99{\tiny$\pm$3.70}
            & 8.02{\tiny$\pm$2.86}
            & 93.08{\tiny$\pm$0.78} \\
        Base + D1
            & 78.05{\tiny$\pm$1.93}
            & 21.25{\tiny$\pm$9.76}
            & 91.90{\tiny$\pm$1.25}
            & 42.85{\tiny$\pm$1.78}
            & 11.46{\tiny$\pm$4.92}
            & 95.32{\tiny$\pm$0.13} \\
        Base + D2
            & 92.13{\tiny$\pm$1.12}
            & 50.00{\tiny$\pm$3.45}
            & 92.02{\tiny$\pm$0.88}
            & 76.23{\tiny$\pm$2.11}
            & 25.75{\tiny$\pm$3.21}
            & 97.48{\tiny$\pm$0.32} \\  
        Full (\method)
            & 96.44{\tiny$\pm$0.18}
            & 65.99{\tiny$\pm$4.38}
            & 97.38{\tiny$\pm$0.69}
            & 78.84{\tiny$\pm$1.38}
            & 26.56{\tiny$\pm$5.38}
            & 98.26{\tiny$\pm$0.49} \\
        \bottomrule
    \end{tabular}
    \end{adjustbox}
    }
    \label{tab:design_ablation}
\end{table}

\subsubsection{Impact of the Design Components (D1--D3)} To assess the individual contributions of our designs and their effects on both
predictive performance and computational complexity, we conduct three ablation studies.
First, we compare the Base model, defined in \Cref{equ:final embed}, with Base+D1. We then compare the Base model
with Base+D2, i.e., adjacency matrix as feature embedding vs. thresholded common-neighbor embedding for link prediction.
\modify{Finally, since D3 is designed to reduce the memory consumption and training cost associated with the LLM
component, we compare memory usage and training time with and without D3 (Full vs. Full w/o D3), providing
empirical evidence that the cache-embedding strategy reduces computational complexity while maintaining
competitive performance.}

\noindent \textbf{Impact of Jump Connection (D1).}
\modify{
The results in \Cref{tab:design_ablation} demonstrate the contribution of the proposed
jump-connection design to link-prediction performance.} \modify{We show that D1 yields a modest, metric-dependent improvement over Base rather than a uniform gain:
AUC improves from 84.96\% to 91.90\% on Cora and from 93.08\% to 95.32\% on PubMed, while
Hits@50 moves only marginally (72.64\%$\to$78.05\% on Cora, 42.99\%$\to$42.85\% on PubMed)
and MRR is essentially unchanged on Cora (21.28\%$\to$21.25\%) though it improves on PubMed
(8.02\%$\to$11.46\%). These gains are consistent with D1's bypass connection strengthening
gradient flow to the PLM text embeddings, which most directly affects AUC's pairwise ranking,
while Hits@50 and MRR benefit more from the structural and compression refinements introduced
by D2 and D3.}

\noindent \textbf{Impact of Compressed Embedding (D2).}
\modify{\noindent As \Cref{tab:design_ablation} shows, this compression is not merely dimension-neutral: 
Base+D2 improves Hits@50 from 72.64\% to 92.13\% on Cora and from 42.99\% to 76.23\% on PubMed, with MRR 
improving similarly (21.28\%$\to$50.00\% and 8.02\%$\to$25.75\%, respectively). AUC improves too, but far 
less sharply (84.96\%$\to$92.02\% and 93.08\%$\to$97.48\%), consistent with AUC's known insensitivity once 
ranking quality is already high, Hits@50 and MRR are the more discriminative metrics for this comparison. 
This is because the thresholded common-neighbor indicator compresses the embeddings of node pairs with no 
shared neighbors to a zero vector, which not only reduces the feature dimensionality but also suppresses 
interference from these uninformative negative pairs, thereby improving MRR and Hits@50.}

\color{blue}
\begin{table}[t]
    \centering
    \caption{Training time and memory footprint of \method{} without D3 (\method{} w.o.\ D3)
     compared to Full \method{}, on Cora and PubMed. Training time is sec/epoch; Mem is total
     memory footprint (GB); vRAM is GPU memory footprint (GB). }
    \begin{adjustbox}{max width=\textwidth}
    \begin{tabular}{lcccccc}
        \toprule
        \multirow{2}{*}{Category}
            & \multicolumn{3}{c}{Cora}
            & \multicolumn{3}{c}{PubMed} \\
        \cmidrule(lr){2-4}\cmidrule(lr){5-7}
        & Training & Mem & vRAM
        & Training & Mem & vRAM \\
        \midrule
        \method{} w.o.\ D3
            & 17.4 & 57 & 40
            & 139.2 & 76 & 64 \\
        \method{}-Full
            & 4.7  & 25 & 25
            & 37.6 & 35 & 37 \\
        \midrule
        & Hits@50 & MRR & AUC
        & Hits@50 & MRR & AUC \\
        \cmidrule(lr){2-4}\cmidrule(lr){5-7}
        \method w.o.\ D3
            & 96.44{\tiny$\pm$0.18}
            & 65.99{\tiny$\pm$4.38}
            & 97.38{\tiny$\pm$0.69}
            & 98.84{\tiny$\pm$1.38}
            & 29.56{\tiny$\pm$5.38}
            & 99.06{\tiny$\pm$0.49} \\
        \method - Full 
            & 99.92{\scriptsize$\pm$0.18}  
            & 68.43{\scriptsize$\pm$4.23}
            & 98.78{\scriptsize$\pm$0.22}  
            & 97.13{\scriptsize$\pm$1.74} 
            & 31.66{\scriptsize$\pm$5.33} 
            & 99.09{\scriptsize$\pm$1.74} \\
        \bottomrule
    \end{tabular}
    \end{adjustbox}
    \label{tab:complexity_updated}
\end{table}
\color{black}
\noindent \textbf{Impact of Cache Embedding Strategy (D3).} \modify{Finally, we evaluate the impact of the cache-embedding strategy D3 by comparing the full model with its variant without D3 on Cora and PubMed. As shown in \Cref{tab:complexity_updated}, D3 substantially improves training efficiency: the full model is approximately $3.7\times$ faster on both datasets, reducing the per-epoch training time from $17.4$ to $4.7$ seconds on Cora and from $139.2$ to $37.6$ seconds on PubMed. It also reduces the total memory footprint by $56.1\%$ on Cora and $53.9\%$ on PubMed, while lowering GPU memory usage by $37.5\%$ and $42.2\%$, respectively. Importantly, these efficiency gains do not compromise predictive performance. The full model consistently matches or outperforms the variant without D3 across Hits@50, MRR, and AUC, including a slight degradation in Hits@50 on PubMed. These results demonstrate that D3 effectively removes the repeated computational and memory overhead associated with the language-model component while preserving, and in several cases improving, the quality of the learned representations.}

\subsubsection{Effect of GCN layers.} \Cref{subfig:_layer} depicts the test AUC and standard deviation (shaded regions) for different GCN depths (1 to 5 layers). We observe that in both datasets, test AUC exhibits slight fluctuations as the number of GCN layers increases from 1 to 5. For Pubmed, AUC remains relatively stable across different GCN depths, hovering around 98\%, with slight drops at four layers. These observations suggest that \method is generally robust to changes in GCN depth, tolerating moderate architectural adjustments without major performance losses.
\subsubsection{Impact of the LLM Choices and Scale}

\modify{
\method exhibits flexibility in adopting different LLMs as its backbone. To assess the impact of backbone choice, we compare three series of models: the E5 series \citep{Wang2022TextEB}, BERT \citep{Devlin2019BERTPO} and Sentence-Bert~\citep{Dubey2024TheL3}. As shown in ~\Cref{subfig:_params}, increasing the model size within each series consistently improves performance. This trend suggests that larger models, with more parameters, better capture complex relationships in multi-modal data. For instance, E5-large substantially outperforms E5-small and the performance of mpnet (a sentence transformer) is comparable to that of E5-large. While mpnet-based models generally yield higher MRR, MiniLM and E5-large variants demonstrate stronger stability on Hits@K. These findings indicate that the choice of LLM backbone is not only task-dependent but also metric-sensitive. }
\begin{figure}[h]
    \centering
    \begin{minipage}{0.95\linewidth}
        \centering
        \begin{subfigure}{0.48\linewidth}
            \centering
    \includegraphics[width=\linewidth]{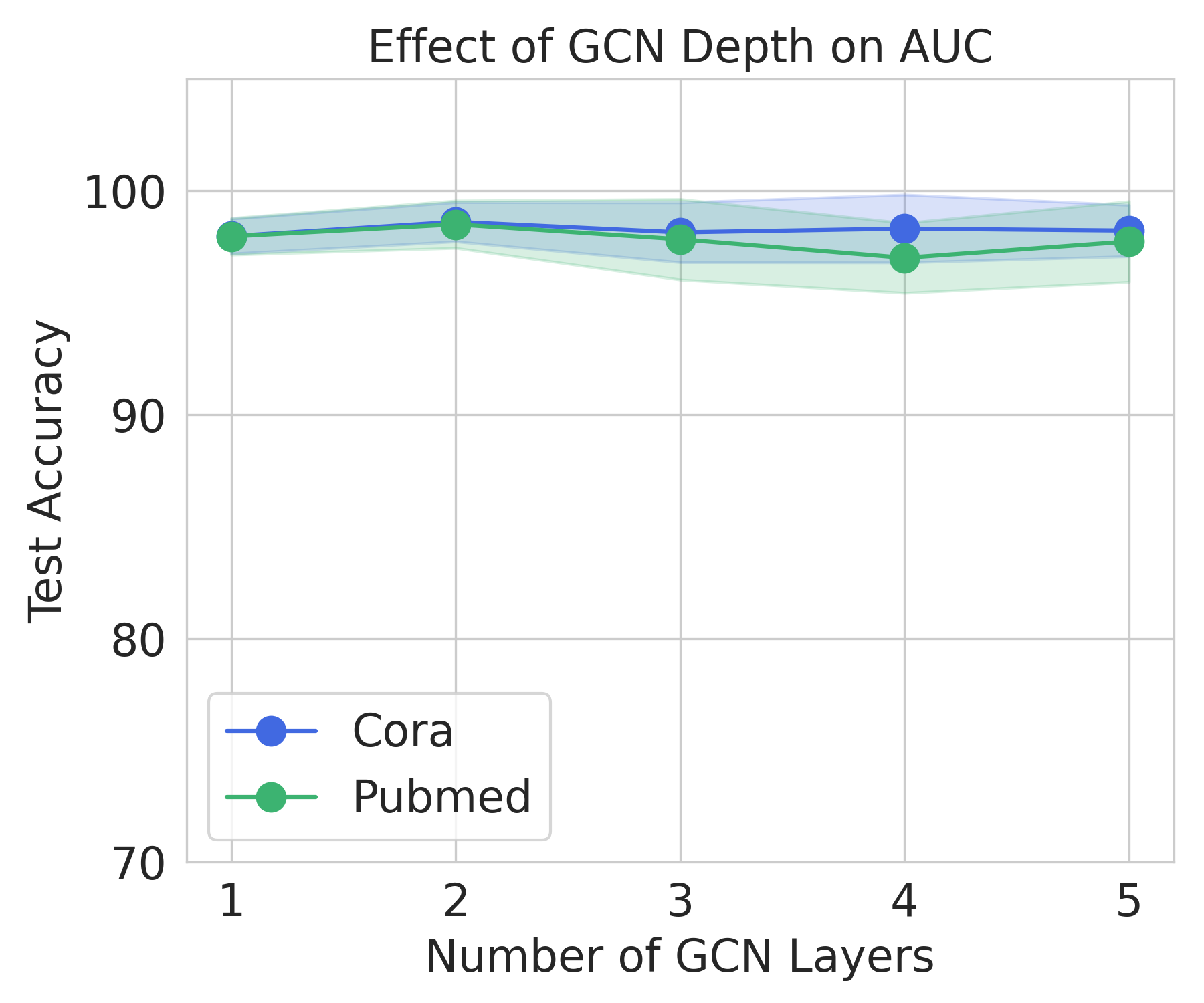}
            \caption{}
            \label{subfig:_layer}
        \end{subfigure}
        \hfill
        \begin{subfigure}{0.48\linewidth}
            \centering
            \includegraphics[width=\linewidth]{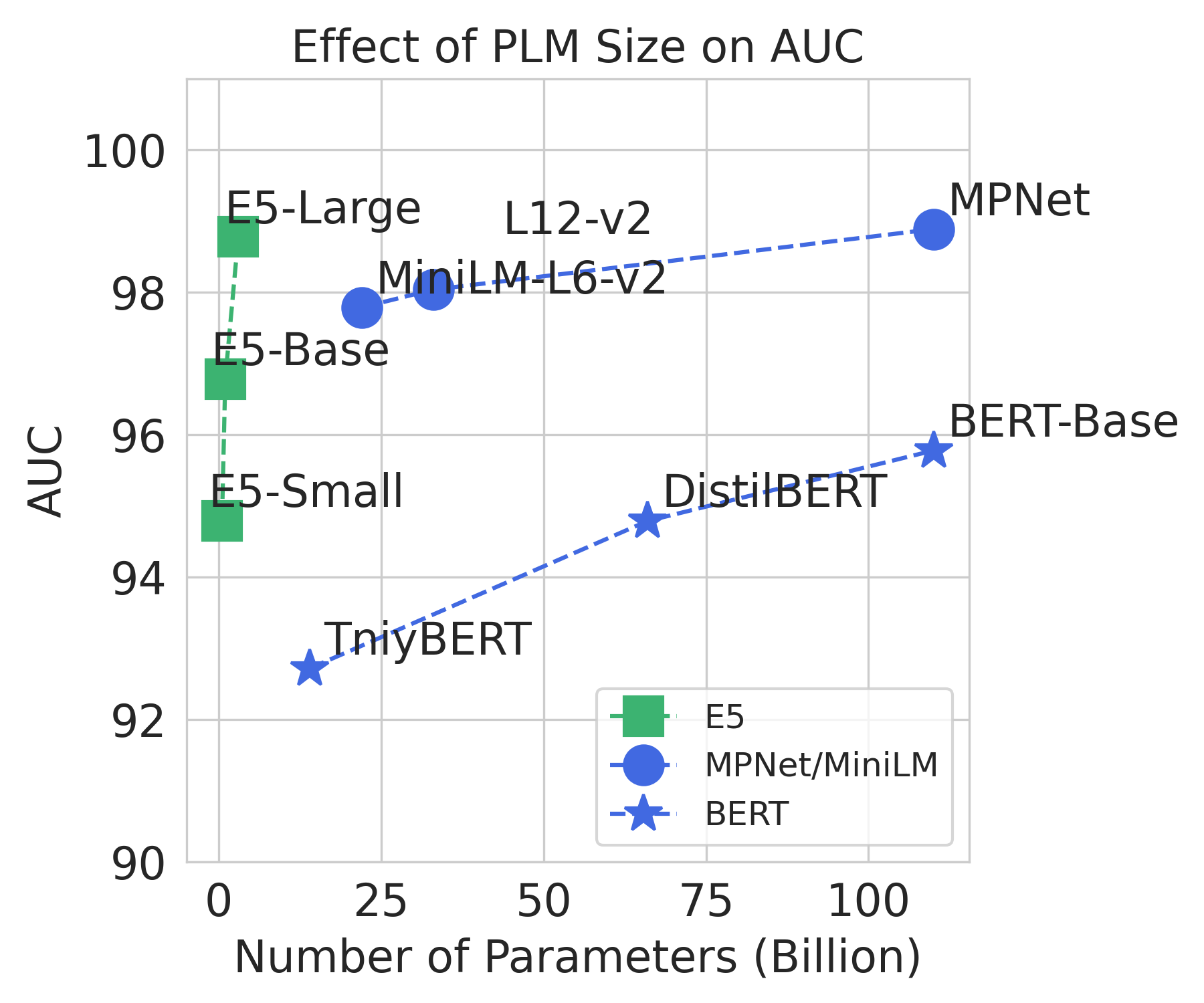}  
            \caption{}
            \label{subfig:_params}
        \end{subfigure}
    \end{minipage}
    \caption{(a) The effect of graph encoder depth (number of GNN layers) on test AUC for Cora and PubMed datasets. The shaded areas represent the standard deviation, indicating the variance in performance across different trials.  (b) The effect of LLM backbone choice on AUC for the Cora dataset. The figure shows three series: E5, BERT along with Sentence-Transformers. 
    }
    \label{fig:layers_plm}
\end{figure}

\section{Conclusion}
\label{sec: conclusion}
\color{black}
In this paper, we systematically addressed the challenge of leveraging textual and structural information for link prediction on TAGs. We propose \method, a lightweight yet powerful framework that integrates PLMs as learnable text encoders within an end-to-end GCN architecture.
It consists of necessary graph signals, spanning the predominant formalisms of links from latent graph theory. We concatenated these three signals into a baseline framework and then introduced three designs to further optimize gradient flow while reducing signal redundancy as well as the associated computational and memory costs. 
\method achieves significant performance gains. We evaluated the proposed method against 40 existing link prediction models across 9 diverse TAG datasets, ranging from small academic networks to billion-scale graphs. Empirically, our method outperforms both GCN- and PLM-based baselines by up to 20\% in MRR. Beyond link prediction, our approach opens new avenues for tackling other graph-related tasks, such as node classification and regression, where fully end-to-end methods can capitalize on the complementary strengths of language models and structural learners.
\color{black}

\textbf{Limitations and Future Work.} \method{} is designed for \emph{static} TAGs, and we make this scope explicit here. Extending it to dynamic TAGs, where node text and graph structure evolve over time \citep{Yu2023DyGLib, Zhang2024DTGB, Xu2025DynamicTAGMultiModal}, would require revisiting each of its three signals: (i) S1 Threshold Common Neighbor currently assumes a fixed adjacency matrix $\mathbf{A}$ and would need incremental maintenance of the thresholded common-neighbor indicator as edges arrive or expire; (ii) S3 Propagated proximity currently aggregates over a static $K$-hop neighborhood and would require a graph-time-aware aggregation scheme that respects the chronological order of edges, e.g., only propagating through edges that existed at or before the timestamp of the target link; (iii) by contrast, D3's cache-embedding strategy is the design most naturally compatible with a dynamic setting, since it already amortizes PLM encoding cost across training steps and would extend to streaming updates by re-encoding only nodes whose text changes rather than the full graph. We leave a full dynamic-TAG instantiation and evaluation of \method{} to future work.

\begin{acks}
This work was supported in part by the
\grantsponsor{nsf}{National Science Foundation}{}
under Grants No.~\grantnum{nsf}{IIS-2212143} and \grantnum{nsf}{IIS-2504090},
and in part by the
\grantsponsor{bmftr}{Federal Ministry of Research, Technology and Space (BMFTR)}{},
Germany, under award number~\grantnum{bmftr}{01IS23066}. 
Any opinions, findings, and conclusions or recommendations expressed in this material are those of the author(s) and do not necessarily reflect the views of the National Science Foundation.
\end{acks}
\color{black}
\bibliographystyle{ACM-Reference-Format}
\bibliography{sample-base}

\appendix
\newpage
\section{Proof of \Cref{theorem:d1}}
\label{app:d1}

\noindent\textbf{Influence score (setup, without D1).}
Adopted from \citep{Xu2018RepresentationLO}, the influence score \( I(x, y) \) of node \( x \) for any final embedding \( y \in \mathbb{R}^{|\mathcal{V}| \times d}\) is defined as the sum of the absolute values of the entries of the Jacobian matrix \( \frac{\partial h^{(k)}_x}{\partial h^{(0)}_y} \). We define the influence distribution \( I_x \) of \( x \in V \) by normalizing the influence scores:

\[
I_x(y) = \frac{I(x, y)}{\sum_{z} I(x, z)},
\]

or equivalently,
\[
I_x(y) = \frac{e^T \left| \frac{\partial h^{(k)}_x}{\partial h^{(0)}_y} \right| e}{\sum_{z \in V} e^T \left| \frac{\partial h^{(k)}_x}{\partial h^{(0)}_z} \right| e},
\]
where \( e \) is the all-ones vector. We calculate the influence score in \method without D1:

Denote by \( f^{(l)}_x \) the pre-activated feature of \( h^{(l)}_x \), i.e.,
\[
f^{(l)}_x = \frac{1}{\text{deg}_f(x)} \sum_{z \in \text{Ne}(x)} W_l h^{(l-1)}_z.
\]
For any \( l = 1, \dots, k \), we have
\[
\frac{\partial h^{(l)}_x}{\partial h^{(0)}_y} = \frac{1}{\text{deg}_f(x)} \cdot \text{diag}\left(\mathbf{1}_{f^{(l)}_x > 0}\right) \cdot W_l \cdot \sum_{z \in \text{Ne}(x)} \frac{\partial h^{(l-1)}_z}{\partial h^{(0)}_y}.
\]
By the chain rule, we obtain
\[
\frac{\partial h^{(k)}_x}{\partial h^{(0)}_y} = \sum_{\Psi} \left[ \frac{\partial h^{(k)}_x}{\partial h^{(0)}_y} \right]_p
= \sum_{\Psi} \prod_{l=1}^k \frac{1}{\text{deg}_f(v^l_p)} \cdot \text{diag}\left(\mathbf{1}_{f^{(l)}_{v^l_p} > 0}\right) \cdot W_l.
\]
Here, \( \Psi \) is the set of all random walks \( v^k_p, v^{k-1}_p, \dots, v^1_p, v^0_p \) of length \( k + 1 \) from node \( x \) to node \( y \). For any path \( p \), \( v^k_p = x \), \( v^0_p = y \), and for \( l = 1, \dots, k-1 \), \( v^{l-1}_p \in \text{Ne}(v^l_p) \).

For each path \( p \), an entry of the derivative can be expressed as
\[
\left[ \frac{\partial h^{(k)}_x}{\partial h^{(0)}_y} \right]^{(i,j)}_p = \prod_{l=1}^k \frac{1}{\text{deg}_f(v^l_p)} \sum_{\Phi} Z_q \prod_{l=1}^k w^{(l)}_q,
\]
where \( \Phi \) is the set of paths \( q \) from the input neurons to the output neuron \( (i,j) \) in the computation graph of \( \frac{\partial h^{(k)}_x}{\partial h^{(0)}_y}_p \). For each layer \( l \), \( w^{(l)}_q \) is the entry of \( W_l \) used in the \( q \)-th path. Finally, \( Z_q \in \{0,1\} \) indicates whether the \( q \)-th path is active (\( Z_q = 1 \)) or not (\( Z_q = 0 \)) due to the ReLU activation of entries in \( f^{(l)}_{v^l_p} \) along the \( q \)-th path. ReLU is not differentiable at 0; for simplicity, we assume the (sub)gradient at 0 is 0.

\medskip
\noindent\textbf{Effect of the D1 bypass.}
Design D1 adds a bypass that routes the PLM embedding $\mathbf{h}^{(0)}_x$ directly to the final representation, $\tilde{\mathbf{h}}^{(k)}_x = \mathbf{h}^{(k)}_x + B\,\mathbf{h}^{(0)}_x$, with a trainable bypass matrix $B$. Since the bypass is linear, differentiating with respect to $\mathbf{h}^{(0)}_y$ requires no product rule:
\[
\frac{\partial \tilde{\mathbf{h}}^{(k)}_x}{\partial \mathbf{h}^{(0)}_y} = \frac{\partial \mathbf{h}^{(k)}_x}{\partial \mathbf{h}^{(0)}_y} + B\,\mathbb{1}[x=y].
\]
The bypass leaves every off-diagonal block ($y\neq x$) unchanged and adds the block $B$ to the self term ($y=x$). Writing the influence score as the entrywise $\ell_1$ norm of the Jacobian block, and flattening $J:=\partial\mathbf{h}^{(k)}_x/\partial\mathbf{h}^{(0)}_x$ and $B$ to vectors, we obtain the following.

\begin{proposition}[Influence non-decrease under D1]
\label{prop:d1-influence}
For $y\neq x$, $I_{\mathrm{with}\,D_1}(x,y)=I_{\mathrm{without}\,D_1}(x,y)$. For the self term, if $J$ and $B$ are entrywise sign-compatible i.e. $J_i B_i\ge 0$ for every coordinate $i$, then
\[
I_{\mathrm{with}\,D_1}(x,x) = \|J+B\|_1 = \|J\|_1+\|B\|_1 = I_{\mathrm{without}\,D_1}(x,x) + \|B\|_1 .
\]
Hence $I_{\mathrm{with}\,D_1}(x,y)\ge I_{\mathrm{without}\,D_1}(x,y)$ for all $x,y$, with a strict increase of $\|B\|_1>0$ on the PLM-direct (self) path whenever $B\neq 0$.
\end{proposition}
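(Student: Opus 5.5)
The plan is to work directly with the Jacobian identity displayed just before the statement,
\[
\frac{\partial \tilde{\mathbf{h}}^{(k)}_x}{\partial \mathbf{h}^{(0)}_y} = \frac{\partial \mathbf{h}^{(k)}_x}{\partial \mathbf{h}^{(0)}_y} + B\,\mathbb{1}[x=y],
\]
and to identify the (unnormalized) influence score $I(x,y)=e^\top|\partial \mathbf{h}^{(k)}_x/\partial\mathbf{h}^{(0)}_y|e$ with the entrywise $\ell_1$ norm of the flattened Jacobian block. Validity of the identity only needs $f$ to be continuously differentiable, as in \Cref{theorem:d1}. It also uses that the bypass term $B\mathbf{h}^{(0)}_x$ is linear in $\mathbf{h}^{(0)}$ and depends on no other node's input. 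With that identification, the whole argument reduces to elementary facts about $\ell_1$ norms, and I would split into the two cases $y\neq x$ and $y=x$.

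For $y\neq x$, the indicator vanishes, so the Jacobian blocks with and without D1 coincide entrywise. Hence their absolute values and $e^\top|\cdot|e$ sums coincide, giving $I_{\mathrm{with}\,D_1}(x,y)=I_{\mathrm{without}\,D_1}(x,y)$ immediately.

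For $y=x$, write $J$ and $B$ as flattened vectors and compute $\|J+B\|_1=\sum_i|J_i+B_i|$. Under the sign-compatibility hypothesis $J_iB_i\ge 0$, each pair $J_i,B_i$ lies in a common closed half-line. Therefore $|J_i+B_i|=|J_i|+|B_i|$ coordinatewise (check the cases $J_i,B_i\ge0$ and $J_i,B_i\le 0$). Summing gives $\|J+B\|_1=\|J\|_1+\|B\|_1$, which is the claimed equality. Combining both cases yields $I_{\mathrm{with}}\ge I_{\mathrm{without}}$ for all $x,y$, and strictness $\|B\|_1>0$ holds exactly when $B\neq0$.

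The main obstacle is bookkeeping rather than mathematics. The statement mixes the unnormalized score $I(x,y)$ with the normalized distribution $I_x(y)$ of \Cref{def:influence_score}. The additive identity and the monotonicity hold for the unnormalized score. After normalization, adding $\|B\|_1$ to the self term increases the denominator. The self term's normalized share still weakly increases, since $(a+c)/(S+c)\ge a/S$ for $c\ge0$ and $a\le S$, but the off-diagonal shares strictly decrease. I would therefore state explicitly that the equalities and the $+\|B\|_1$ gain refer to the unnormalized influence $I(x,y)$. I would then add the normalized self-share inequality as a remark, rather than claiming equality of normalized off-diagonal scores. A secondary point is that without sign-compatibility only the triangle-type bound $\|J+B\|_1\ge\big|\|J\|_1-\|B\|_1\big|$ is available, so the hypothesis is genuinely needed and I would note this.
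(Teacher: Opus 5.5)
Your proposal is correct and matches the paper's proof essentially line for line. You read the off-diagonal equality directly from the Jacobian identity, and you sum the coordinatewise identity $|J_i+B_i|=|J_i|+|B_i|$ under $J_iB_i\ge 0$ to get $\|J+B\|_1=\|J\|_1+\|B\|_1$; your caveat that the claim concerns the unnormalized score, with the normalized self share rising via $(a+c)/(S+c)\ge a/S$ while off-diagonal shares fall, is the paper's own ``Scope of the claim'' remark.
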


\begin{proof}
The off-diagonal equality is immediate from the Jacobian identity above. For the self term, $|J_i+B_i|=|J_i|+|B_i|$ whenever $J_i B_i\ge0$; summing over coordinates gives $\|J+B\|_1=\|J\|_1+\|B\|_1$, and $\|B\|_1\ge0$, with equality iff $B=0$.
\end{proof}

\noindent\textbf{Scope of the claim.} The proposition concerns the \emph{unnormalized} influence. Under the normalization $I_x(y)=I(x,y)/\sum_z I(x,z)$, the bypass raises both the self numerator and the denominator by $\|B\|_1$, so the self (PLM-direct) normalized influence still increases, since $\tfrac{a+c}{b+c}\ge\tfrac{a}{b}$ for $0\le a\le b$ and $c\ge0$, while the off-diagonal normalized influence correspondingly decreases. Thus D1 \emph{concentrates} influence on the PLM-direct path, which is the intended effect.

\section{Proof of \Cref{prop:compact}}
\label{app:compact}

We justify design D2, replacing the concatenation $[\mathbf{H}_C;\mathbf{S}_T]$ by the sum $\mathbf{H}_C+\mathbf{S}_T$, by bounding the distortion this introduces in the inner-product geometry used for link prediction, and then showing that the quantity controlling this distortion, the overlap between the two signals, is small in practice.

\begin{proposition}[Lossless additive compression]
\label{prop:compact-lossless}
For a node pair $a$, write $\mathbf{h}^a:=[\mathbf{H}_C]_a$ and $\mathbf{s}^a:=[\mathbf{S}_T]_a$ in $\mathbb{R}^d$, with concatenated representation $z^a_{\mathrm{cat}}=(\mathbf{h}^a,\mathbf{s}^a)$ and additive representation $z^a_{\mathrm{sum}}=\mathbf{h}^a+\mathbf{s}^a$, so that $\langle z^a_{\mathrm{cat}},z^b_{\mathrm{cat}}\rangle=\langle \mathbf{h}^a,\mathbf{h}^b\rangle+\langle \mathbf{s}^a,\mathbf{s}^b\rangle$. Then for all node pairs $a,b$,
\[
\big|\langle z^a_{\mathrm{sum}},z^b_{\mathrm{sum}}\rangle-\langle z^a_{\mathrm{cat}},z^b_{\mathrm{cat}}\rangle\big|
\;\le\;\big|\langle \mathbf{h}^a,\mathbf{s}^b\rangle\big|+\big|\langle \mathbf{s}^a,\mathbf{h}^b\rangle\big|.
\]
In particular, if $\mathbf{H}_C$ and $\mathbf{S}_T$ have disjoint coordinate support the cross terms vanish and $\langle z^a_{\mathrm{sum}},z^b_{\mathrm{sum}}\rangle=\langle z^a_{\mathrm{cat}},z^b_{\mathrm{cat}}\rangle$ for all $a,b$: the additive and concatenated representations induce identical Gram matrices, so summation is lossless for any inner-product- or distance-based link predictor.
\end{proposition}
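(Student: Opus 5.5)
The plan is to expand the inner product of the summed representations by bilinearity and compare it term by term with the concatenated Gram entry. For node pairs $a,b$,
\[
\langle z^a_{\mathrm{sum}},z^b_{\mathrm{sum}}\rangle=\langle \mathbf{h}^a+\mathbf{s}^a,\mathbf{h}^b+\mathbf{s}^b\rangle
=\langle \mathbf{h}^a,\mathbf{h}^b\rangle+\langle \mathbf{s}^a,\mathbf{s}^b\rangle+\langle \mathbf{h}^a,\mathbf{s}^b\rangle+\langle \mathbf{s}^a,\mathbf{h}^b\rangle .
\]
On the other side, the inner product in $\mathbb{R}^{2d}$ of concatenated vectors splits blockwise. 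This gives $\langle z^a_{\mathrm{cat}},z^b_{\mathrm{cat}}\rangle=\langle \mathbf{h}^a,\mathbf{h}^b\rangle+\langle \mathbf{s}^a,\mathbf{s}^b\rangle$, which is the identity recorded in the statement.

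Subtracting, the difference is exactly the cross term $\langle \mathbf{h}^a,\mathbf{s}^b\rangle+\langle \mathbf{s}^a,\mathbf{h}^b\rangle$. The triangle inequality on the real line then yields the claimed bound.

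For the \emph{in particular} clause, I will make the hypothesis precise. Disjoint coordinate support means there is a partition of the index set $\{1,\dots,d\}$ into $P\cup Q$ such that every row of $\mathbf{H}_C$ vanishes off $P$ and every row of $\mathbf{S}_T$ vanishes off $Q$. Each cross inner product is then a sum over coordinates in which every term has a zero factor, so both cross terms vanish. The two Gram matrices therefore agree entrywise.

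Finally, I will note that distances are determined by Gram entries through $\|z^a-z^b\|^2=\langle z^a,z^a\rangle-2\langle z^a,z^b\rangle+\langle z^b,z^b\rangle$. Equal Gram matrices thus give equal pairwise distances, which justifies the phrase about distance-based predictors.

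There is no real obstacle here; the argument is pure bilinearity. The only point needing care is the interpretation of the support hypothesis. It has to be uniform across all pairs, with a common coordinate split, and not merely per pair. If the supports of $\mathbf{h}^a$ and $\mathbf{s}^a$ were disjoint only for the same $a$, the terms $\langle \mathbf{h}^a,\mathbf{s}^b\rangle$ could still be nonzero. I will therefore state the hypothesis in the global form above.
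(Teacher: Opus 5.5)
Your proposal is correct and follows the paper's proof essentially verbatim. Both expand $\langle \mathbf{h}^a+\mathbf{s}^a,\mathbf{h}^b+\mathbf{s}^b\rangle$ by bilinearity, subtract the blockwise concatenated inner product, apply the triangle inequality, and note that under disjoint support every coordinate kills at least one factor of each cross term. Your explicit global-partition reading of the support hypothesis and the norm identity for distance-based predictors are useful clarifications that the paper leaves implicit.
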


\begin{proof}
By bilinearity of the inner product,
\[
\langle \mathbf{h}^a+\mathbf{s}^a,\;\mathbf{h}^b+\mathbf{s}^b\rangle
=\big(\langle \mathbf{h}^a,\mathbf{h}^b\rangle+\langle \mathbf{s}^a,\mathbf{s}^b\rangle\big)
+\langle \mathbf{h}^a,\mathbf{s}^b\rangle+\langle \mathbf{s}^a,\mathbf{h}^b\rangle .
\]
The parenthesised term equals $\langle z^a_{\mathrm{cat}},z^b_{\mathrm{cat}}\rangle$; subtracting it and applying the triangle inequality gives the stated bound. If the supports are disjoint then $\langle \mathbf{h}^a,\mathbf{s}^b\rangle=\sum_k h^a_k s^b_k=0$, since every coordinate annihilates at least one factor, and likewise $\langle \mathbf{s}^a,\mathbf{h}^b\rangle=0$; hence the two Gram entries coincide.
\end{proof}

\noindent\textbf{Small overlap (empirical).} \Cref{prop:compact-lossless} reduces the validity of D2 to a single quantity, the overlap (cross inner products) of $\mathbf{H}_C$ and $\mathbf{S}_T$,which is small in practice. \Cref{fig:d2_clarification} (left) shows the overlapping rate between the non-zero supports of $\mathbf{H}_C$ and $\mathbf{S}_T$ decays toward zero as $|\mathcal{V}_s|$ grows, and (right) the AUC gap between the additive and concatenated variants vanishes accordingly. This complementarity is consistent with the latent-space view of \citet{Sarkar2011theoretical,mao2023revisiting}, in which structural and semantic proximity act as complementary link-formation mechanisms.

\section{Proof of \Cref{thm:d3}}
\label{app:d3}

Let $\mathcal{G}=(\mathcal{V},\mathcal{E})$ be a fixed graph and $G_\phi$ a $K$-layer GCN whose per-layer propagation operator has norm at most $\rho<\infty$. We make two Lipschitz assumptions: \textbf{(A1)} the PLM encoder $E_\theta$ is $L_E$-Lipschitz \emph{in its parameters} $\theta$, producing node features $\mathbf{X}^{(t)}=E_{\theta_t}(\text{text})$; and \textbf{(A2)} the feature-to-gradient map $\mathbf{X}\mapsto\nabla_\phi\,\ell\big(G_\phi(\mathbf{X});(i,j)\big)$ is $L_J$-Lipschitz, uniformly over training pairs $(i,j)$. Assumption~(A2) is a standard smoothness hypothesis; it is implied, for instance, by an $L_\ell$-Lipschitz loss with $L_g$-Lipschitz gradient composed with the $K$-layer GCN, which yields $L_J\le L_g\,L_\ell\,\rho^{K}$ , but we do not derive this constant and simply assume (A2). In D3 a node $v$ is re-encoded only when sampled as an endpoint; let $r_v(t)\le t$ be the most recent such iteration (its \emph{last-refresh time}) and $a_v(t):=t-r_v(t)$ its \emph{staleness age}. The cache thus holds $\widehat{\mathbf{X}}^{(t)}_v=E_{\theta_{r_v(t)}}(\text{text}_v)$, which differs from the fresh feature $\mathbf{X}^{(t)}_v=E_{\theta_t}(\text{text}_v)$ by the parameter drift accumulated since $r_v(t)$:
\[
\|\mathbf{X}^{(t)}_v-\widehat{\mathbf{X}}^{(t)}_v\|=\|E_{\theta_t}(\text{text}_v)-E_{\theta_{r_v(t)}}(\text{text}_v)\|\le L_E\,\|\theta_t-\theta_{r_v(t)}\|.
\]
We bound the gradient with respect to the GCN/predictor parameters $\phi$; the freshly re-encoded endpoints are exact, so error enters only through the stale rows. (Caching non-endpoint features also perturbs the PLM gradient $\nabla_\theta$; controlling that approximation is a deliberate design choice of D3 and lies outside the scope of the present bound.)

\begin{theorem}[Cache-staleness bound for D3]
\label{thm:d3-staleness}
Let $g^{\mathrm{full}}_t:=\nabla_\phi\,\ell\big(G_\phi(\mathbf{X}^{(t)});(i_t,j_t)\big)$ and $g^{\mathrm{cache}}_t:=\nabla_\phi\,\ell\big(G_\phi(\widehat{\mathbf{X}}^{(t)});(i_t,j_t)\big)$. Then
\[
\|g^{\mathrm{full}}_t-g^{\mathrm{cache}}_t\|\;\le\; C\sum_{v\in\mathcal{V}}\|\theta_t-\theta_{r_v(t)}\|,\qquad C:=L_J\,L_E .
\]
Consequently: \emph{(i)}~if every PLM update satisfies $\|\theta_{s+1}-\theta_s\|\le\eta$, then $\|\theta_t-\theta_{r_v(t)}\|\le a_v(t)\,\eta$, so $\|g^{\mathrm{full}}_t-g^{\mathrm{cache}}_t\|\le C\eta\sum_{v} a_v(t)$; with a periodic full refresh every $T$ iterations ($a_v(t)\le T$) this is at most $C\eta\,|\mathcal{V}|\,T$, i.e.\ uniformly bounded. \emph{(ii)}~if the updates are summable ($\sum_s\|\theta_{s+1}-\theta_s\|<\infty$, as under a convergent training schedule) \emph{and} every node is refreshed infinitely often ($r_v(t)\to\infty$ as $t\to\infty$,which holds, e.g., almost surely whenever each node has sampling probability bounded below by some $\pi_{\min}>0$), then $\|\theta_t-\theta_{r_v(t)}\|\to0$ for every $v$ and the cache error vanishes as $t\to\infty$.
\end{theorem}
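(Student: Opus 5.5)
The proof composes the two Lipschitz assumptions and then bounds the parameter drift per node. The main inequality is obtained first, and the consequences (i) and (ii) follow from it.

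\textbf{Main inequality.} Fix $t$ and the sampled pair $(i_t,j_t)$. By (A2), applied with the same pair to the two feature matrices $\mathbf{X}^{(t)}$ and $\widehat{\mathbf{X}}^{(t)}$,
\[
\|g^{\mathrm{full}}_t-g^{\mathrm{cache}}_t\|\le L_J\,\|\mathbf{X}^{(t)}-\widehat{\mathbf{X}}^{(t)}\|.
\]
Next we decompose the matrix difference row by row. We use the triangle inequality over rows, $\|\mathbf{X}^{(t)}-\widehat{\mathbf{X}}^{(t)}\|\le\sum_{v}\|\mathbf{X}^{(t)}_v-\widehat{\mathbf{X}}^{(t)}_v\|$. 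This is valid for the Frobenius norm and for any norm dominated by the sum of row norms. If (A2) is stated with a different matrix norm, only the constant changes; I would fix the Frobenius norm to avoid ambiguity. The endpoint rows $i_t,j_t$ are freshly encoded, so $r_v(t)=t$ for them and they contribute zero. This is consistent with summing over all of $\mathcal V$. For every other row, the displayed consequence of (A1) gives $\|\mathbf{X}^{(t)}_v-\widehat{\mathbf{X}}^{(t)}_v\|\le L_E\|\theta_t-\theta_{r_v(t)}\|$. Combining the two steps yields the bound with $C=L_JL_E$.

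\textbf{Consequence (i).} Telescope $\theta_t-\theta_{r_v(t)}=\sum_{s=r_v(t)}^{t-1}(\theta_{s+1}-\theta_s)$. With steps bounded by $\eta$, this gives $\|\theta_t-\theta_{r_v(t)}\|\le a_v(t)\eta$. Summing over $v$ gives $C\eta\sum_v a_v(t)$. A periodic full refresh every $T$ iterations forces $a_v(t)\le T$, which gives the uniform bound $C\eta|\mathcal V|T$.

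\textbf{Consequence (ii).} The same telescoping gives $\|\theta_t-\theta_{r_v(t)}\|\le\sum_{s\ge r_v(t)}\|\theta_{s+1}-\theta_s\|$. This is a tail of a convergent series, so it tends to $0$ whenever $r_v(t)\to\infty$. Because $\mathcal V$ is finite, the finite sum of vanishing terms also vanishes.

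For the parenthetical sufficient condition, the argument depends on how sampling is modelled. If each node is sampled independently across iterations with probability at least $\pi_{\min}$, the second Borel--Cantelli lemma gives infinitely many refreshes almost surely. If sampling is only conditionally bounded below given the past, Lévy's extension of Borel--Cantelli does the same job. Since $r_v(t)$ is nondecreasing and unbounded, $r_v(t)\to\infty$.

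\textbf{Main obstacle.} The chief difficulty is interpretive rather than technical: (A2) must be used with a matrix norm compatible with the row-wise sum, and $\phi$ must be the same in both gradients. I would state explicitly that both gradients are evaluated at the current $\phi_t$, so that the only discrepancy is in the features. The probabilistic refresh claim is the only step needing care beyond the triangle inequality.
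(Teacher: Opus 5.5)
Your proposal is correct and follows the paper's proof essentially step for step: (A2) to pass from the gradient gap to the feature gap, the row-wise triangle inequality (Frobenius or $\ell_{2,1}$-type norm), the per-node drift bound from (A1), telescoping for (i), and the tail of a convergent series for (ii). The only addition is your Borel--Cantelli and L\'evy-extension argument for the parenthetical claim that $r_v(t)\to\infty$ almost surely when every node has sampling probability at least $\pi_{\min}>0$; the paper asserts this claim without proof.
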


\begin{proof}
By assumption~(A2), the $L_J$-Lipschitzness of the feature-to-gradient map,
\[
\|g^{\mathrm{full}}_t-g^{\mathrm{cache}}_t\|\le L_J\,\|\mathbf{X}^{(t)}-\widehat{\mathbf{X}}^{(t)}\|\le L_J\sum_{v\in\mathcal{V}}\|\mathbf{X}^{(t)}_v-\widehat{\mathbf{X}}^{(t)}_v\|,
\]
the last step bounding the matrix norm (Frobenius, or any $\ell_{2,1}$-type norm) by the sum of its row norms. Applying the per-node encoder-drift bound $\|\mathbf{X}^{(t)}_v-\widehat{\mathbf{X}}^{(t)}_v\|\le L_E\|\theta_t-\theta_{r_v(t)}\|$ from~(A1) gives the stated inequality with $C=L_J L_E$. For~(i), the telescoping identity $\theta_t-\theta_{r_v}=\sum_{s=r_v}^{t-1}(\theta_{s+1}-\theta_s)$ with the triangle inequality gives $\|\theta_t-\theta_{r_v}\|\le\sum_{s=r_v}^{t-1}\|\theta_{s+1}-\theta_s\|\le a_v(t)\,\eta$; summing over $v$ and using $a_v(t)\le T$ yields the uniform bound. For~(ii), $\|\theta_t-\theta_{r_v(t)}\|\le\sum_{s=r_v(t)}^{t-1}\|\theta_{s+1}-\theta_s\|\le\sum_{s\ge r_v(t)}\|\theta_{s+1}-\theta_s\|$, the tail of a convergent series; since $r_v(t)\to\infty$ its starting index diverges, so the tail,and hence the drift, tends to $0$.
\end{proof}

\noindent\textbf{Relation to coverage.} This bound does not vanish merely because every node has been seen once: it is governed by the staleness age $a_v(t)$ and the step size. It is controlled precisely when training stabilizes (shrinking updates) or the cache is periodically refreshed, matching practice, where late-training updates are small and refreshes are inexpensive.

\section{Complexity Analysis: Details}
\label{sec:app-complexity} 
\textbf{LLM Complexity Analysis}
Transformer-based LLM's time complexity is $O(l(|\mathcal{T}|d^2 + |\mathcal{T}|^2d))$, where $|\mathcal{T}|$, $h$, $l$ denotes token length per node, token hidden dimension, number of transformer layers.

\noindent \textbf{GNN Complexity Analysis:}
Let $\mathbf{A} \in \mathbb{R}^{N \times N}$ be the sparse adjacency matrix of a graph $\mathcal{G} = (\mathcal{V}, \mathcal{E})$, where $N = |\mathcal{V}|$ is the number of nodes and $E = |\mathcal{E}|$ is the number of edges. Let $\mathbf{X} \in \mathbb{R}^{N \times F}$ be a dense feature matrix with d-dimensional node embeddings.  The total computational complexity is given by  $O(d |\mathcal{E}|)$.
Thus overall complexity is $O(d^2 |\mathcal{T}| + d|\mathcal{T}|^2 + d|\mathcal{E}|)$

\section{Hardware Specification}
We run experiments on our benchmarks with a Horeka Cluster, which features a 20-core CPU, 140 GB Memory and a Nvidia A100 GPU with 40 GB GPU Memory. 

\section{Experiment Setting \& Hyper parameter Range}
\label{subsec:app-hyperparameter-gcn2lp}
\label{app:hyper_tune}

\subsection{LLM Fine-Tune details}
\label{subsec:app-lm-details}
We leverage the [EOS] token in LLaMA3 and the [CLS] token in sentence embedding models as node features. We have frozen the layer up to 6-th encoder layer in MiniLM \citep{Reimers2019SentenceBERTSE}; up to 32-th layer in Llama3 \citep{Dubey2024TheL3};  up to 12-th encoder layer in BERT and up to 23-th encoder layer in e5-large \citep{Wang2022TextEB}. We utilize full parameter fine-tuning in the proposed design. In summary, the proposed design ensures that the method is both memory-efficient, enabling training on a single A100 GPU with 40GB VRAM across all proposed datasets.

\subsection{GCN Fine-Tune Details}
We present the hyperparameter searching range in \Cref{tab:compare-gcn}, \Cref{tab:compare-plm} and \Cref{tab:llm_result}.

For the smaller graphs, Cora, Pubmed and Arxiv\_2023, we utilize the following hyper-parameter search space. However, it’s not feasible to tune over such large space for larger datasets. We utilize the average value of each optimized value from Cora, Pubmed, Arxiv\_2023. 

\noindent \textbf{GCN \citep{Kipf2016SemiSupervisedCW}, GAT \citep{Velickovic2017GraphAN}, and GraphSAGE \citep{Hamilton2017InductiveRL}:} The search space for these models includes learning rate: $\{10^{-2}, 10^{-3}, 10^{-4}\}$, batch size: $\{2^7, 2^8, 2^9, 2^{10}\}$, hidden layer dropout: $\{0.1, 0.3\}$, weight decay: $\{10^{-4}, 10^{-6}\}$, number of GCN layers: $\{1, 2, 3\}$, number of MLP layers: $\{1, 2, 3\}$, hidden dimension: $\{2^6, 2^7, 2^8\}$, and GCN output dimension: $\{2^6, 2^7, 2^8\}$. Following \citet{chamberlain_graph_2023}, we utilize a full adjacency matrix and remove the target link during training.

\noindent \textbf{NCNC \citep{Wang2023NeuralCN}:} The hyperparameters for NCNC are configured as follows: feature dropout: $\{0.0, 0.3, 0.7\}$, GNN learning rate: $\{10^{-3}, 10^{-4}\}$, predictor learning rate: $\{10^{-3}, 10^{-4}\}$, hidden dimension: $\{64, 256\}$, GNN dropout: $\{0.0, 0.2, 0.5\}$, and batch size: $\{2^7, 2^8, 2^9, 2^{10}\}$. We adopted the experimental settings from \citet{Wang2023NeuralCN}, including the removal of validation edges during training, jumping knowledge (JK) connections, and default settings for all remaining parameters.

\noindent \textbf{NeoGNN \citep{yun2021neognns}}: hidden channels: [64, 128, 256, 512, 1024, 2048, 4096, 8192], num of layers: [1,  2, 3, 4], number of mlp layers: [1, 2, 3, 4],  dropout': [0, 0.1, 0.2, 0.3, 0.4, 0.5], batch size: [128, 256, 512, 1024], learning rate: [0.01, 0.001, 0.0001]. We transferred the experiment setting from the original paper. 

\noindent \textbf{SEAL \citep{Zhang2020LabelingTA}}: hidden channels: [32, 64, 128, 256], batch size: [32, 64, 128, 256], lr: [0.001, 0.0001].  We transferred the experiment setting from the original paper, including text, structure embedding and 

\noindent \textbf{BUDDY} \citep{chamberlain_graph_2023}: hidden channels: [128, 256, 512], batch size: [512, 1024], learning rate: [0.01, 0.001, 0.0001], maximum of hash hops: [1, 2, 3], dropout for label: [0.1, 0.5], dropout for feature: [0.1, 0.3, 0.5], dropout of sign: [0.1, 0.3, 0.5, 0.7]. We transferred the experiment setting from the original paper, including text, structure embedding, and utilizing valid edge during training.  

\noindent \textbf{HLGNN \citep{zhang2024heuristiclearninggraphneural}:} {\sloppy The hyperparameter search space includes hidden channels: $\{128, 256, \dots, 8192\}$, batch size: $\{128, 256, 512, 1024\}$, and learning rate: $\{10^{-2}, 10^{-3}, 10^{-4}\}$. The initial parameter is selected from $\{0.1, 0.2, 0.3, 0.4, 0.5\}$, $\beta$ initialization from $\{RWR, KI\}$, number of MLP layers from $\{2, 3, 4\}$, and dropout from $\{0.1, \dots, 0.6\}$. Most experimental settings are adopted from the original work, with the exception of the data split strategy. While the original paper splits edges based on local structure using a 5\%/10\%/85\% (test/valid/train) ratio, we employ a random split with a 5\%/15\%/80\% ratio.\par}

\subsection{LLM Fine-Tune Details}
\noindent \textbf{MiniLM \citep{Reimers2019SentenceBERTSE}:} We utilize the [EOS] token for Llama-3 and the [CLS] token for sentence embedding models to extract node features. We froze the first six encoder layers of MiniLM-L6-v2 \citep{Reimers2019SentenceBERTSE} and set the feature dimension (\textit{feat\_shrink}) to 768. Training hyperparameters include an attention dropout of 0.1, a batch size of 128, and a learning rate of $10^{-4}$. The model is trained for 250 epochs with a warmup period of 0.6 epochs and a weight decay of 0.0. Evaluation patience is set to 1, and gradient accumulation is fixed at 1 step.

\noindent \textbf{e5-large \citep{Wang2022TextEB}:} We froze the first 23 encoder layers of e5-large and fine-tuned the remaining layers. The feature dimension (\textit{feat\_shrink}) is configured to 768. Training parameters include an attention dropout of 0.1, a batch size of 128, a classification dropout of 0.4, and a general dropout of 0.3. The model is trained for 250 epochs with a learning rate of $10^{-4}$ and a warmup period of 0.6 epochs. Weight decay is set to 0.0, and evaluation patience is set to a sufficiently large value to effectively disable early stopping.

\noindent \textbf{MPNet \citep{Song2020MPNetMA}}: Same as Minilm, we use CLS token embedding, we frozen up to 32-th layer in Mpnet and fine-tune the rest layers. Feat shrink, is 768. The model name is `sentence-transformers/all-mpnet-base-v2` from \citet{Reimers2019SentenceBERTSE}. For training, the parameters include an attention dropout rate of 0.1, a batch size of 256, a classification dropout rate of 0.4, and a general dropout rate of 0.3. The model is trained for 250 epochs, with evaluation patience set to a very large number, effectively disabling early stopping. Gradient accumulation steps are set to 4, and the learning rate is 0.0001. The warmup period is set to 0.6 epochs, and weight decay is set to 0.0.

\noindent \textbf{ZO-MLP}: We utilize BERT, MiniLM, e5-large and Llama 3 8B as inference models to only embed text without training. 

\noindent \textbf{FT-MLP}: We use the above parameters to find these sentence transformers followed by an MLP. 

\noindent \textbf{ZO-GCN}: This setting requires two separate steps. First, we embed the raw text with the above sentence transformers with given parameters and save the embedded text as original node features. In the second step, we load this embedded text as node features to aggregated features using different GCNs with the same setting as above. 

\noindent \textbf{FT-GCN}: We leverage the same fine-tuned setting, the same parameters of sentence transformers and GCNs as above. We use the same split and full adjacency matrix during the training as category GCNs. 

\noindent \textbf{\method}: We utilize the same fine-tuned setting of Mpnet, Minilm, and e5-large in our proposed method without any additional parameter tuning, We leverage the same data split and full adjacency matrix for common neighbor and GCN, validation edge during training. In addition,  the same parameters for optimizer and GCNs from \citep{Kipf2016SemiSupervisedCW}. The parameters for GCN are listed below: an input channel size of 1433, a hidden channel size of 256, and an output channel size of 32. It consists of 1 multi-layer perceptron (MLP) layer and 3 such graph convolution layers. We only tune the parameters for the optimizer including learning rate, batch size and weight decay. The common-neighbor threshold is fixed to $\tau=2$ and the mixing weight of the thresholded common neighbor to $\beta=0.1$; both are held constant across all datasets rather than tuned per-dataset.

\newpage
\section{Benchmark Result of all Models}

\label{sec:app-full-result}
\begin{table}[htbp]
  \centering
  \caption{Benchmark result of local and global heuristic, graph embedding, vanilla GCNs, GCN2LPs, LLM-Inf, LLM-FT. }
  \label{tab:app-tab00}
  \resizebox{0.95\linewidth}{!}{
  \begin{tabular}{lccccccc}
    \toprule
    \multicolumn{1}{c}{Models} & \multicolumn{2}{c}{Cora} & \multicolumn{2}{c}{PubMed} & \multicolumn{2}{c}{Arxiv 2023} \\
    \multicolumn{1}{c}{} & Hits@100 & AUC & Hits@100 & AUC & Hits@100 & AUC \\
    \midrule     
    RotatE & 56.60 $\pm$ 6.29 & 61.57 $\pm$ 3.88 & 38.58 $\pm$ 2.59 & 80.89 $\pm$ 0.40 & 34.57 $\pm$ 0.84 & 76.73 $\pm$ 0.44 \\
    TransE & 57.00 $\pm$ 7.06 & 64.19 $\pm$ 3.83 & 50.04 $\pm$ 2.63 & 85.29 $\pm$ 0.97 & 30.22 $\pm$ 1.96 & 73.03 $\pm$ 1.04 \\
    ComplEx & 73.59 $\pm$ 0.90 & 75.71 $\pm$ 1.09 & 53.87 $\pm$ 3.09 & 89.60 $\pm$ 0.51 & 32.17 $\pm$ 2.38 & 73.31 $\pm$ 0.61\\
    DistMult & 73.12 $\pm$ 0.28 & 76.53 $\pm$ 0.80 & 54.82 $\pm$ 2.95 & 89.91 $\pm$ 0.53 & 32.37 $\pm$ 1.62 & 73.25 $\pm$ 0.39\\
    BERT-Inf & 56.90 $\pm$ 3.26 & 65.09 $\pm$ 1.41 & 48.73 $\pm$ 1.43 & 88.96 $\pm$ 0.31 & 48.74 $\pm$ 1.15 & 86.37 $\pm$ 0.27 \\
    e5-large-v2 & 83.10 $\pm$ 0.80 & 83.87 $\pm$ 0.23 & 82.59 $\pm$ 0.26 & 96.73 $\pm$ 0.03 & 84.09 $\pm$ 0.24 & 95.72 $\pm$ 0.01 \\
    MiniLM-L6-v2 & 92.99 $\pm$ 0.00 & 91.22 $\pm$ 0.04 & 81.90 $\pm$ 0.03 & 96.20 $\pm$ 0.00 & 77.62 $\pm$ 0.03 & 95.22 $\pm$ 0.00 \\
    Llama-3-8B & 95.64 $\pm$ 0.41 & 92.60 $\pm$ 0.12 & 89.01 $\pm$ 0.53 & 98.09 $\pm$ 0.10 & 89.91 $\pm$ 0.19 & 97.62 $\pm$ 0.04 \\
    \midrule
    Llama-BUDDY & 97.47 $\pm$ 1.44 & 93.97 $\pm$ 0.88 & 94.60 $\pm$ 0.98 & 98.75 $\pm$ 0.18 & 84.93 $\pm$ 1.68 & 96.67 $\pm$ 0.38 \\
    e5-large-BUDDY & 97.86 $\pm$ 0.66 & 94.03 $\pm$ 0.77 & 93.95 $\pm$ 0.34 & 98.72 $\pm$ 0.06 & 77.43 $\pm$ 3.01 & 95.27 $\pm$ 0.31 \\
    BERT-BUDDY & 81.42 $\pm$ 2.86 & 79.62 $\pm$ 2.44 & 78.54 $\pm$ 1.26 & 96.25 $\pm$ 0.17 & 43.30 $\pm$ 1.65 & 77.93 $\pm$ 0.62 \\
    MiniLM-BUDDY & 97.55 $\pm$ 1.32 & 94.74 $\pm$ 0.99 & 93.56 $\pm$ 0.66 & 98.68 $\pm$ 0.07 & 47.12 $\pm$ 2.93 & 77.59 $\pm$ 0.35 \\
    Llama-NCN & 98.26 $\pm$ 0.77 & 96.23 $\pm$ 0.28 & 96.30 $\pm$ 0.62 & \cellcolor{gray!5}99.11 $\pm$ 0.07 & \cellcolor{gray!10}92.64 $\pm$ 0.52 & \cellcolor{gray!10}98.24 $\pm$ 0.17 \\
    e5-large-NCN & 98.81 $\pm$ 0.62 & 96.72 $\pm$ 0.67 & 96.33 $\pm$ 0.70 & 99.10 $\pm$ 0.09 & 90.79 $\pm$ 1.38 & 97.82 $\pm$ 0.22 \\
    BERT-NCN & 82.93 $\pm$ 3.65 & 84.91 $\pm$ 2.53 & 81.09 $\pm$ 2.46 & 97.24 $\pm$ 0.20 & 66.20 $\pm$ 0.62 & 86.94 $\pm$ 0.27 \\
    MiniLM-NCN & 98.58 $\pm$ 0.60 & 96.93 $\pm$ 0.54 & 95.49 $\pm$ 0.66 & 98.97 $\pm$ 0.10 & 66.47 $\pm$ 0.84 & 86.69 $\pm$ 0.28 \\
    Llama-NCNC & 98.73 $\pm$ 0.65 & 95.63 $\pm$ 0.77 & 92.39 $\pm$ 1.46 & 98.54 $\pm$ 0.18 & \cellcolor{gray!5}91.90 $\pm$ 1.33 & 97.97 $\pm$ 0.36 \\
    e5-large-NCNC & 98.81 $\pm$ 0.74 & 96.31 $\pm$ 0.68 & \cellcolor{gray!5}96.69 $\pm$ 0.56 & 99.14 $\pm$ 0.10 & 90.46 $\pm$ 1.14 & 97.72 $\pm$ 0.16 \\
    BERT-NCNC & 84.11 $\pm$ 2.70 & 84.05 $\pm$ 2.85 & 82.48 $\pm$ 1.43 & 97.40 $\pm$ 0.20 & 68.50 $\pm$ 3.49 & 89.20 $\pm$ 0.30 \\
    MiniLM-NCNC & 98.81 $\pm$ 0.49 & 96.82 $\pm$ 0.62 & 96.11 $\pm$ 0.60 & 99.07 $\pm$ 0.07 & 70.61 $\pm$ 2.24 & 88.42 $\pm$ 0.72 \\
    Llama-NeoGNN & 83.00 $\pm$ 8.24 & 87.08 $\pm$ 3.07 & 69.42 $\pm$ 3.09 & 94.90 $\pm$ 0.52 & 63.67 $\pm$ 8.29 & 86.94 $\pm$ 1.22 \\
    e5-large-NeoGNN & 82.06 $\pm$ 4.54 & 87.06 $\pm$ 1.69 & 71.13 $\pm$ 2.19 & 92.21 $\pm$ 5.85 & 69.09 $\pm$ 9.97 & 88.32 $\pm$ 1.81 \\
    BERT-NeoGNN & 87.59 $\pm$ 3.79 & 87.77 $\pm$ 2.30 & 70.85 $\pm$ 1.71 & 95.08 $\pm$ 0.38 & 65.43 $\pm$ 8.49 & 86.22 $\pm$ 1.93 \\
    MiniLM-NeoGNN & 87.59 $\pm$ 6.28 & 88.66 $\pm$ 2.63 & 70.62 $\pm$ 1.71 & 95.08 $\pm$ 0.38 & 68.60 $\pm$ 11.86 & 86.10 $\pm$ 0.75 \\
    \midrule
    BERT-FT & 96.99 $\pm$ 1.36  & 92.88 $\pm$ 0.99  & 84.45 $\pm$ 2.92  & 97.32 $\pm$ 0.44  & 87.56 $\pm$ 2.05 & 96.98 $\pm$ 0.31  \\
    e5-large-v2-FT & 96.92 $\pm$ 1.35 & 94.27 $\pm$ 0.85  & 87.23 $\pm$ 1.60  & 97.79 $\pm$ 0.14  & 89.35 $\pm$ 1.33 & 97.39 $\pm$ 0.33  \\
    MiniLM-L6-v2-FT & 96.68 $\pm$ 1.69  & 93.98 $\pm$ 0.85  & 86.80 $\pm$ 1.98  & 97.64 $\pm$ 0.36  & 88.38 $\pm$ 1.06 & 97.25 $\pm$ 0.36  \\
    all-mpnet-base-v2  & 97.78 $\pm$ 0.66 & 94.71 $\pm$ 1.16 & 90.69 $\pm$ 2.49 & 98.06 $\pm$ 0.19 & 91.44 $\pm$ 0.75 & 97.36 $\pm$ 0.33 \\
     \midrule
    MiniLM-GAT-CT  & 76.99 $\pm$ 6.58  & 76.09 $\pm$ 4.18  & 43.75 $\pm$ 5.46  & 90.11 $\pm$ 1.81  & 25.18 $\pm$ 4.17 & 80.31 $\pm$ 3.08 \\
    MiniLM-SAGE-CT & 82.01 $\pm$ 4.19  & 80.15 $\pm$ 2.74  & 63.18 $\pm$ 1.34  & 94.10 $\pm$ 0.21  & 66.06 $\pm$ 2.60 & 92.63 $\pm$ 0.51 \\
    mpnet-GIN-CT & 97.55 $\pm$ 1.86  & 92.30 $\pm$ 2.28  & 63.42 $\pm$ 2.77  & 94.82 $\pm$ 0.42  & 64.38 $\pm$ 3.22 & 91.97 $\pm$ 1.07 \\
    mpnet-GAT-CT  & 86.96 $\pm$ 6.71  & 86.64 $\pm$ 5.61  & 52.43 $\pm$ 4.18  & 91.75 $\pm$ 0.86  & 29.43 $\pm$ 3.60 & 82.50 $\pm$ 3.28 \\
    mpnet-SAGE-CT & 93.88 $\pm$ 0.28  & 89.64 $\pm$ 0.96  & 71.62 $\pm$ 2.78  & 94.60 $\pm$ 0.58  & 66.28 $\pm$ 4.50 & 92.63 $\pm$ 0.87 \\
     \midrule
    MiniLM-\method-C & \cellcolor{gray!30}99.92 $\pm$ 0.18  & \cellcolor{gray!20}98.24 $\pm$ 0.95  & \cellcolor{gray!20}99.94 $\pm$ 0.08  & \cellcolor{gray!40}99.73 $\pm$ 0.03  & \cellcolor{gray!40}98.16 $\pm$ 1.73 & \cellcolor{gray!40}99.05 $\pm$ 0.25 \\
    MiniLM-\method & \cellcolor{gray!10}99.84 $\pm$ 0.35  & \cellcolor{gray!10}97.79 $\pm$ 0.66  & 89.22 $\pm$ 4.96  & 98.30 $\pm$ 0.51  & 91.44 $\pm$ 1.34 & \cellcolor{gray!5}98.22 $\pm$ 0.30 \\
    mpnet-\method-C  & 96.28 $\pm$ 8.31  & 94.68 $\pm$ 6.81  & \cellcolor{gray!30}99.95 $\pm$ 0.08  & \cellcolor{gray!20}99.42 $\pm$ 0.38  & 77.99 $\pm$ 17.20 & 92.88 $\pm$ 6.53 \\
    mpnet-\method  & \cellcolor{gray!40}100.00 $\pm$ 0.00  & \cellcolor{gray!40}98.78 $\pm$ 1.02  & \cellcolor{gray!10}97.13 $\pm$ 1.74  & \cellcolor{gray!10}99.34 $\pm$ 0.22  & \cellcolor{gray!20}94.85 $\pm$ 3.15 & \cellcolor{gray!30}98.79 $\pm$ 0.49 \\
    e5-large-\method-C  & \cellcolor{gray!30}99.92 $\pm$ 0.18  & \cellcolor{gray!30}98.36 $\pm$ 1.15  & \cellcolor{gray!40}100.00 $\pm$ 0.00  & \cellcolor{gray!30}99.49 $\pm$ 0.18  & \cellcolor{gray!30}98.01 $\pm$ 0.72 & \cellcolor{gray!20}98.51 $\pm$ 0.12 \\
    e5-large-\method & \cellcolor{gray!5}98.89 $\pm$ 1.02  & \cellcolor{gray!5}97.13 $\pm$ 1.22  & 88.41 $\pm$ 1.14  & 98.18 $\pm$ 0.12  & 87.71 $\pm$ 1.48 & 97.51 $\pm$ 0.29 \\
    \bottomrule
  \end{tabular}
  }
\end{table}

\begin{table}[htbp]
  \centering
  \caption{Benchmark result of local and global heuristic, graph embedding, vanilla GCNs, GCN2LPs, ZO, FT. }
  \label{tab:app-tab01}
  \resizebox{0.95\linewidth}{!}{
  \begin{tabular}{lccccccc}
    \toprule
    \multicolumn{1}{c}{Models} & \multicolumn{2}{c}{Cora} & \multicolumn{2}{c}{PubMed} & \multicolumn{2}{c}{Arxiv 2023} \\
    \multicolumn{1}{c}{} & Hits@50 & MRR & Hits@50 & MRR & Hits@50 & MRR \\
    \midrule     
    MiniLM-GAT-CT & 54.23 $\pm$ 4.08  & 13.74 $\pm$ 5.21  & 29.44 $\pm$ 2.84  & 4.26 $\pm$ 1.75  & 13.76 $\pm$ 2.22 & 2.62 $\pm$ 0.55 \\
    MiniLM-SAGE-CT  & 63.04 $\pm$ 9.23  & 15.09 $\pm$ 1.35  & 45.31 $\pm$ 3.83  & 6.70 $\pm$ 3.61  & 49.11 $\pm$ 4.22 & 11.48 $\pm$ 3.41 \\
    mpnet-GIN-CT & 89.01 $\pm$ 5.54  & 29.06 $\pm$ 7.96  & 46.82 $\pm$ 4.22  & 11.86 $\pm$ 3.68  & 55.11 $\pm$ 3.70 & 18.88 $\pm$ 5.89 \\
    mpnet-GAT-CT & 74.90 $\pm$ 9.22  & 23.16 $\pm$ 11.10  & 35.82 $\pm$ 3.81  & 5.36 $\pm$ 1.69  & 19.50 $\pm$ 1.91 & 4.49 $\pm$ 0.91 \\
    mpnet-SAGE-CT  & 82.01 $\pm$ 4.19  & 25.34 $\pm$ 8.06  & 57.58 $\pm$ 3.78  & 11.91 $\pm$ 3.58  & 52.97 $\pm$ 5.05 & 14.51 $\pm$ 3.37 \\
    \midrule
    RotatE & 27.51 $\pm$ 6.39 & 2.98 $\pm$ 0.41 & 26.10 $\pm$ 2.89 & 4.41 $\pm$ 1.24 & 24.76 $\pm$ 1.31 & 5.65 $\pm$ 1.20 \\
    TransE &  32.17 $\pm$ 6.52 & 3.97 $\pm$ 0.90 & 38.58 $\pm$ 3.64 & 7.74 $\pm$ 1.71 & 21.27 $\pm$ 1.81 & 5.11 $\pm$ 1.16\\
    ComplEx & 62.53 $\pm$ 1.71 & 17.82 $\pm$ 2.58 & 38.14 $\pm$ 3.55 & 6.50 $\pm$ 2.48 & 22.33 $\pm$ 1.05 & 5.16 $\pm$ 1.07\\
    DistMult & 64.90 $\pm$ 1.59 & 19.28 $\pm$ 2.11 & 39.09 $\pm$ 3.93 & 6.90 $\pm$ 2.50 & 21.32 $\pm$ 0.45 & 5.02 $\pm$ 1.24\\
    \midrule
    Bert & 35.79$\pm$2.50  & 3.42$\pm$0.47  & 36.12$\pm$0.37  & 6.56$\pm$0.70  & 37.66$\pm$1.57  & 10.04$\pm$0.85  \\
    MiniLM        & 83.39$\pm$0.00 & 34.29$\pm$4.10 & 66.35$\pm$0.29 & 21.54$\pm$0.11 & 68.15$\pm$0.09 & 16.91$\pm$0.18 \\
    e5-large-v2   & 64.35$\pm$1.56 & 24.40$\pm$2.48 & 71.32$\pm$0.86 & 21.79$\pm$1.58 & 75.03$\pm$0.28 & 21.69$\pm$0.03 \\
    Llama-3-8B    & 89.15$\pm$0.72  & 31.19$\pm$3.49 & 79.87$\pm$1.19 & 22.87$\pm$4.47
    & 83.18$\pm$1.19  & 22.85$\pm$1.12 \\
    \midrule
    Llama-BUDDY & 91.23 $\pm$ 1.66 & 29.10 $\pm$ 5.72 & 88.10 $\pm$ 2.08 & 24.57 $\pm$ 3.66 & 75.15 $\pm$ 3.55 & 28.74 $\pm$ 5.29 \\
    e5-large-BUDDY & 92.17 $\pm$ 1.32 & 34.19 $\pm$ 10.77 & 88.07 $\pm$ 0.57 & 22.90 $\pm$ 5.86 & 64.52 $\pm$ 3.00 & 20.79 $\pm$ 4.67 \\
    BERT-BUDDY & 63.32 $\pm$ 5.74 & 14.64 $\pm$ 3.70 & 62.03 $\pm$ 3.16 & 13.15 $\pm$ 2.84 & 35.19 $\pm$ 1.54 & 10.20 $\pm$ 2.88 \\
    MiniLM-BUDDY & 92.41 $\pm$ 1.62 & 41.05 $\pm$ 11.41 & 85.98 $\pm$ 1.19 & 25.45 $\pm$ 4.86 & 38.33 $\pm$ 2.89 & 11.31 $\pm$ 1.78 \\
    Llama-NCN & 94.94 $\pm$ 0.85 & 49.62 $\pm$ 10.03 & 90.05 $\pm$ 1.66 & 30.73 $\pm$ 10.11 & \cellcolor{gray!20}86.43 $\pm$ 0.24 & 32.48 $\pm$ 12.29 \\
    e5-large-NCN & 95.57 $\pm$ 1.02 & 49.65 $\pm$ 14.61 & \cellcolor{gray!5}90.97 $\pm$ 1.77 & 32.19 $\pm$ 4.80 & 84.36 $\pm$ 0.56 & 32.23 $\pm$ 14.04 \\
    BERT-NCN & 73.04 $\pm$ 4.08 & 43.37 $\pm$ 8.81 & 67.95 $\pm$ 3.42 & 23.16 $\pm$ 6.75 & 61.93 $\pm$ 0.78 & \cellcolor{gray!5}37.05 $\pm$ 8.86 \\
    MiniLM-NCN & \cellcolor{gray!10}98.58 $\pm$ 0.60 & 46.59 $\pm$ 11.01 & 88.85 $\pm$ 0.89 & 33.60 $\pm$ 7.51 & 61.95 $\pm$ 0.54 & 31.86 $\pm$ 13.16 \\
    Llama-NCNC & 95.57 $\pm$ 1.02 & 27.45 $\pm$ 7.86 & 84.65 $\pm$ 1.95 & 20.51 $\pm$ 9.80 & \cellcolor{gray!10}84.68 $\pm$ 1.72 & 27.16 $\pm$ 11.48 \\
    e5-large-NCNC & 96.13 $\pm$ 1.13 & 39.23 $\pm$ 12.99 & 90.86 $\pm$ 1.95 & 27.02 $\pm$ 5.96 & 83.24 $\pm$ 1.20 & 25.14 $\pm$ 9.39 \\
    BERT-NCNC & 77.47 $\pm$ 1.77 & 25.39 $\pm$ 12.42 & 72.80 $\pm$ 1.78 & 23.49 $\pm$ 3.07 & 58.83 $\pm$ 3.91 & 22.80 $\pm$ 2.55 \\
    MiniLM-NCNC & 96.13 $\pm$ 1.20 & 38.96 $\pm$ 13.20 & 90.32 $\pm$ 1.52 & 22.56 $\pm$ 3.30 & 65.65 $\pm$ 1.80 & 29.10 $\pm$ 3.83 \\
    Llama-NeoGNN & 77.63 $\pm$ 4.33 & 47.11 $\pm$ 8.51 & 64.00 $\pm$ 0.87 & \cellcolor{gray!5}33.82 $\pm$ 3.81 & 56.20 $\pm$ 10.94 & 22.39 $\pm$ 13.38 \\
    e5-large-NeoGNN & 78.91 $\pm$ 2.49 & 47.04 $\pm$ 8.63 & 64.74 $\pm$ 2.03 & 32.65 $\pm$ 5.49 & 63.56 $\pm$ 11.23 & 24.70 $\pm$ 11.08 \\
    BERT-NeoGNN & 80.08 $\pm$ 4.22 & 45.50 $\pm$ 8.99 & 64.10 $\pm$ 1.50 & 33.01 $\pm$ 5.49 & 57.40 $\pm$ 9.79 & 23.92 $\pm$ 8.63 \\
    MiniLM-NeoGNN & 81.35 $\pm$ 5.27 & 43.83 $\pm$ 6.82 & 65.13 $\pm$ 0.26 & \cellcolor{gray!10}33.89 $\pm$ 6.18 & 62.53 $\pm$ 14.04 & 21.49 $\pm$ 18.99 \\
    \midrule 
    BERT-FT & 89.17$\pm$2.86  & 30.90$\pm$4.33  & 73.70$\pm$4.01  & 17.11$\pm$3.90  & 77.75$\pm$3.46  & 29.54$\pm$3.98  \\
    e5-large-v2-FT & 92.09$\pm$1.70  & 38.63$\pm$9.39  & 76.26$\pm$2.55  & 19.75$\pm$5.81  & 80.48$\pm$2.52  & 31.73$\pm$6.62  \\
    MiniLM-FT & 92.49$\pm$2.13  & 35.55$\pm$5.82  & 75.87$\pm$3.72  & 20.79$\pm$6.32  & 80.20$\pm$2.62  & 29.86$\pm$5.82  \\
    Mpnet-FT  & 93.44 $\pm$ 1.64 & 22.55 $\pm$ 10.71 & 63.27 $\pm$ 31.76 & 9.38 $\pm$ 3.12 & 82.72 $\pm$ 1.28 & 8.42 $\pm$ 6.49 \\

     \midrule
    MiniLM-\method-C & \cellcolor{gray!30}99.92$\pm$0.18  & 41.52$\pm$19.50  & \cellcolor{gray!40}99.91 $\pm$ 0.09  & \cellcolor{gray!30}44.99 $\pm$ 10.82  & \cellcolor{gray!40}90.61 $\pm$ 2.25 & 35.47 $\pm$ 10.91 \\
    MiniLM-\method & \cellcolor{gray!5}98.34$\pm$0.59  & \cellcolor{gray!20}60.84$\pm$7.75  & 78.56 $\pm$ 6.32  & 22.72 $\pm$ 1.51  & \cellcolor{gray!05}84.57 $\pm$ 1.93 & \cellcolor{gray!20}40.27 $\pm$ 11.91 \\
    mpnet-\method-C  & 93.28$\pm$14.16  & 28.92$\pm$7.14  & \cellcolor{gray!30}99.27 $\pm$ 1.19  & 23.99 $\pm$ 11.63  & 73.09 $\pm$ 16.32 & 14.68 $\pm$ 6.17 \\
    mpnet-\method  & \cellcolor{gray!40}100.00$\pm$0.00  & \cellcolor{gray!40}68.43$\pm$14.23  & \cellcolor{gray!10}91.67 $\pm$ 4.96  & 31.66 $\pm$ 5.33  & \cellcolor{gray!30}89.17 $\pm$ 5.45 & \cellcolor{gray!40}45.70 $\pm$ 3.88 \\
    e5-large-\method-C  & \cellcolor{gray!30}99.92$\pm$0.18  & 41.46$\pm$25.49  & \cellcolor{gray!20}99.11 $\pm$ 1.54  & 21.66 $\pm$ 9.66  & 83.97 $\pm$ 4.23 & 12.66 $\pm$ 2.66 \\
    e5-large-\method & 96.29$\pm$2.08  & \cellcolor{gray!30}65.26$\pm$11.52  & 77.34 $\pm$ 2.19  & 23.80 $\pm$ 3.29  & 80.01 $\pm$ 2.53 & \cellcolor{gray!30}42.02 $\pm$ 5.56 \\
    \bottomrule
  \end{tabular}
  }
\end{table}

\newpage
\section{Pseudocode \& Pipeline}

\begin{algorithm*}[!ht]
\caption{method: Framework for Link Prediction in Text Attributed Graph}
\label{algo:method}
\SetNoFillComment
{\footnotesize 
\SetKwInput{Input}{Input}\SetKwInput{Output}{Output}
\SetKwInput{HyperParams}{Hyper-parameters}\SetKwInput{TrainParams}{Model Parameters}
\Input{( $\mathbf{A}, \mathbf{X}, \mathbf{T}, \mathcal{Y}, \mathcal{E}_{\text{train}}$)\\
\quad \quad \quad Graph Adjacency Matrix $\mathbf{A} \in \{0,1\}^{n\times n}$, Init Node Feature $\mathbf{X}$; \\
\quad \quad \quad Node Feature In Raw Text $\mathbf{T}$; \\
\quad \quad \quad Set of positive and negative edge Labels $\mathcal{Y} \in \{1, 0\}$; \\
\quad \quad \quad Labeled positive and negative Edges $\mathcal{E}_{\text{train}}^+, \mathcal{E}_{\text{train}}^- \in \mathcal{E}= \{(v_i , v_j) \subseteq (\mathcal{V} \times \mathcal{V})\}$; \\
}

\HyperParams{
Dropout Rate; 
Dimension of Feature Embedding $p$; %
}
\TrainParams{
PLM: $\mathbf{W}_{\mathcal{L}} \in \mathbb{R}^{F\times p}$; GCN: $\mathbf{W}_{\mathcal{G}} \in \mathbb{R}^{ p \times |\mathcal{Y}|}$ }
\Output{Edge class label vector $\mathbf{Y}_{i, j}$}
\Begin{
\tcc{All new variables defined below are initialized as zero}
\vspace{0.15cm}
\tcc{{Step 1: Text Embedding $\rightarrow$ Semantic Proximity (S2)}}
\For{$(i\;, j) \in \mathcal{E}_{\texttt{train}}^+ \cup \mathcal{E}_{\texttt{train}}^-$}{
    $\mathbf{x}^{T}_{i}  \leftarrow \sigma\left(\mathbf{W}_{\mathcal{L}}\mathbf{t}_{i}\right)$ 
    $ \mathbf{x}^{T}_{j} \leftarrow \sigma \left(\mathbf{W}_{\mathcal{L}} \mathbf{t}_{j}\right)$
    \tcc{Textual Embeddings are stored in the matrix $\mathbf{X}^T$}
\vspace{0.15cm}

\tcc{{Step 2: Neighborhood Aggregation $\rightarrow$ Propagated Proximity (S3)}}
\tcc{Calculate Embedding for $v_i$ and $v_j$ and their Structural Proximity}
\vspace{0.15cm}
    $\mathbf{x}_i \leftarrow \mathbf{x}_i^{T}$ \hfill
    \vspace{0.15cm}
    $\mathbf{x}_j \leftarrow \mathbf{x}_j^{T}$ \hfill \tcc{Replace  $\mathbf{x}_i, \mathbf{x}_j$ of current link with Textual Embedding $\mathbf{x}_i^T, \mathbf{x}_j^T$}
    \For{$k \leftarrow 1$ \KwTo $K$}{\tcc{symmetric degree-normalization of matrices $\mathbf{S} = \tilde{\mathbf{D}}^{-\frac{1}{2}} \tilde{\mathbf{A}} \tilde{\mathbf{D}}^{-\frac{1}{2}} $}
    \vspace{0.15cm}
    $\mathbf{H}^{k}\leftarrow \mathbf{S} \mathbf{H}^{k-1}\mathbf{W}_{\mathcal{G}}$ \hfill  \tcc{Final node embedding is $\mathbf{H}^K$}
  }
$\mathbf{H}^{K} = f \left( \tilde{\mathbf{A}}_{\text{sym}}^K\mathbf{X} \mathbf{W}_{\mathcal{G}}\right) $
\vspace{0.15cm}

\tcc{{Step 3: Structural Feature Embedding $\rightarrow$ Structural Proximity (S1)}}
    $\mathbf{A} \leftarrow \mathrm{dropout}(\mathbf{A})$ \hfill 
    
    \tcc{Calculate Common Neighbor for $v_i$ and $v_j$}
    \vspace{0.15cm}
    $\mathbf{H}_C \leftarrow  \mathbb{I}\left[\texttt{SUM}(\mathbf{A}_i \odot \mathbf{A}_j) > {\color{blue}\tau} \right] > 0 $
   
   \tcc{$\mathbb{I}$ is a element-wise indicator function for matrix}
    \vspace{0.15cm}
    \tcc{Pairwise Text Feature Proximity in Dot Product}
    $\mathbf{S}_T = \mathbf{W}(\mathbf{X}^T_i \odot \mathbf{X}^T_j)$
    
    \vspace{0.15cm}

}
\vspace{0.15cm}
\tcc{Structural and Textual Feature Concatenation}
$\mathbf{R}  \leftarrow \left(\mathbf{H}^K \; \Vert \; \mathbf{S}_T + \beta \mathbf{H}_C\right)
$ \hfill
\tcc{$\beta$: weight of Structural Proximity}

$\mathbf{R}  \leftarrow \left(\mathbf{R}_{-}, \mathbf{R}_{+} \right)$    \vspace{0.15cm}

\tcc{Step 4: Binary Classification}
$\mathbf{R} \leftarrow \mathrm{dropout}(\mathbf{R})$ \hfill 

\For{$(i\;, j) \in \mathcal{E}_{\texttt{test}}$}{
    $\mathbf{p}_{ij} \leftarrow \mathrm{softmax}(\mathbf{r}_{ij} \mathbf{W})$\;
    $\mathbf{Y}_{i, j} \leftarrow \arg \max(\mathbf{p}_{ij})$ \hfill \tcc{edge label}
    }
  }
}
\end{algorithm*}

\section{Proposed Dataset: Details}
\label{sec:app-dataset}
\begin{table}[ht]
    \centering
    \caption{Graph statistics of the benchmark datasets across
    different scales. We report dataset size and density, connectivity and hierarchy, scale-free characteristics.
      We additionally include the mean and minimum--maximum
      lengths of the textual descriptors associated with each node,
       together with the total number of textual data points.
}
    \begin{adjustbox}{width=\textwidth}
    \begin{tabular}{lccccccccc}
        \toprule
        & \textbf{Pwc\_small} & \textbf{Cora} & \textbf{PubMed} & \textbf{Arxiv\_2023} & \textbf{History} & \textbf{Photo} & \textbf{Pwc\_medium} & \textbf{Ogbn\_arxiv} & \textbf{Citationv8}\\
        \midrule
        \textbf{\#Nodes $|\mathcal{V}|$} & 140 & 2708 & 19717 & 46198 & 36655 & 47420 & 86795 & 169343 & 1106759\\
        \textbf{\#Edges $|\mathcal{E}|$} & 798 & 10858 & 88648 & 77726 & 496248 & 872602 & 933411 & 2315598 & 12227452 \\
        \textbf{File Size (Mb)} & 0.139 & 2.57 & 31.01 & 57.26 & 57.12 & 37.44 & 99.58 & 190.27 & 1034.18 \\
        \textbf{\#Split} & R & R & R & R & R & R & R & R & R\\
        \midrule
        \textbf{Avg Deg (G)} & 8.51 & 11.63 & 19.27 & 44.38 & 47.26 & 108.95 & 614.64 & 330.02 & 74.28\\
        \textbf{Avg Deg (G2)} & 8.01 & 8.15 & 10.56 & 26.33 & 43.83 & 66.13 & 206.03 & 153.62 & 51.49\\
        \textbf{Clustering} & 0.22 & 0.24 & 0.06 & 0.13 & 0.28 & 0.38 & 0.13 & 0.20 & 0.13 \\
        \textbf{Shortest Paths} & 3.47 & 6.31 & 6.34 & 6.06 & 5.38 & 4.76 & 4.23 & 5.58 & 5.92\\
        \textbf{Transitivity} & 0.22 & 0.09 & 0.05 & 0.04 & 0.17 & 0.12 & 0.01 & 0.02 & 0.03\\
        \midrule
        \textbf{Deg Gini} & 0.41 & 0.41 & 0.60 & 0.82 & 0.70 & 0.64 & 0.62 & 0.63 & 0.58\\
        \textbf{Coreness Gini} & 0.26 & 0.21 & 0.39 & 0.75 & 0.60 & 0.47 & 0.40 & 0.48 & 0.40\\
        \textbf{Heterogeneity} & -0.08 & 0.13 & 0.22 & 0.70 & 0.38 & 0.30 & 0.78 & 0.70 & 0.36\\
        \textbf{Power Law} $\alpha$ & 1.99 & 2.39 & 2.66 & 2.90 & 1.83 & 1.64 & 1.83 & 1.73 & 1.77\\
        \midrule
        \textbf{Edge Homophily} & 0.61 & 0.51 & 0.27 & 0.88 & 0.44 & 0.36 & 0.59 & 0.50 & 0.64\\
        \textbf{Jensen-Shannon} & 0.27 & 0.23 & 0.18 & 0.30 & 0.18 & 0.24 & 0.29 & 0.30 & 0.27\\
        \textbf{Mean Length} & 162 & 165 & 282 & 209 & 264 & 168 & 190 & 198 & 161\\
        \midrule
        \textbf{\textcolor{blue}{Min-Max Length}} & \textcolor{blue}{79--293} & \textcolor{blue}{8--929} & \textcolor{blue}{4--994} & \textcolor{blue}{7--1617} & \textcolor{blue}{42--329} & \textcolor{blue}{4--7281} & \textcolor{blue}{16--610} & \textcolor{blue}{20--2214} & \textcolor{blue}{4--774}\\
        \textbf{\textcolor{blue}{Data Points (M)}} & \textcolor{blue}{0.139} & \textcolor{blue}{2.57} & \textcolor{blue}{31.01} & \textcolor{blue}{57.26} & \textcolor{blue}{42.16} & \textcolor{blue}{37.44} & \textcolor{blue}{99.58} & \textcolor{blue}{190.27} & \textcolor{blue}{1034.18}\\
        \bottomrule
    \end{tabular}
    \end{adjustbox}

    \label{tab:graph-stats}

\end{table}

%

\subsection{Structure Statistics}
\label{subsec:app-dataset_statistics}
Considerable work has demonstrated that local and global structural characterizations are more effective for LP. To translate the homophily assumption, local and global graph heuristics, small-world phenomenon, and scale-free network properties into task-specific statistics, we provided the following graph metrics. Text length and diversity are critical factors influencing the performance of a model. In this context, we present the distribution of text lengths (\textbf{Min-Max Length} row) and per-dataset data volume (\textbf{Data Points} row) within our dataset in Table~\ref{tab:graph-stats}.

\begin{enumerate}[left=0pt, labelsep=0.5em, itemsep=0em]
    \item \textbf{Graph Density}: Number of Nodes, Edges, Arithmetic Deg are used to measure the graph's size, density and sparsity. Average degree of each central node $v \in \mathcal{V}$ and its of 2-order neighborhood $\mathcal{N}_{v}$' average degree measures the graph's local connectivity.
    \item \textbf{Graph Connectivity}: Clustering refers to fraction of possible triangles through one node exists:
    \begin{equation}
        C_i = \frac{2 \times T(i)}{\text{deg}(i)(\text{deg}(i) - 1)} \quad T = \frac{3 \times \text{\# triangles}}{\text{\# triads}}
    \end{equation}
    where $T(i)$ is the number of triangles through node $deg(i)$ and $deg(i)$ is the degree of node $i$. Transitivity measures the fraction of all possible triangles in the graph. 
    \item \textbf{Hierarchical level}: We leverage k-Core graph's fraction and degree distribution to calculate Gini and Coreness Gini. 
{\sloppy
\item \textbf{Scale-free}: If its node degree distribution $P(d)$ follows a power law $P(d) \sim d^{-\gamma}$, where $\gamma$ typically lies within the range $2 < \gamma < 3$. We approximate the power law $\alpha$ based on the following estimator. The datasets \texttt{pwc\_small}, \texttt{cora}, \texttt{pubmed}, \texttt{arxiv\_2023}, and \texttt{pwc\_large} are scale-free networks.\par}
    \begin{equation}
        \hat{\alpha} = 1 + N\left( \sum_{i=1}^{n} \log\left(\frac{d_i + 1}{d_{\min} + 1}\right) \right)^{-1}
    \end{equation}
\end{enumerate}

\subsection{Detailed Information about Datasets}
\label{subsec:detail-dataset}
In this work, we evaluate the performance of the models on the Link Prediction task using a diverse collection of reference datasets. Our analysis includes nine datasets obtained from three main sources: Planetoid \citep{Sen2008CollectiveClassification}, Amazon \citep{Shchur2018Pitfalls} and OGB \citep{Hu2020OGB}. The Planetoid datasets include \textit{Cora} and \textit{PubMed}. The Amazon datasets include \textit{Photo}. The OGB datasets include \textit{ogbn-arxiv}, \textit{Citationv8} and \textit{Arxiv\_2023}. And datasets without a group are \textit{Pwc\_small}, \textit{Pwc\_medium} and \textit{History}. The detailed statistics for each dataset are presented in Table~\ref{tab:graph-stats}. Below are descriptions of each dataset that were used in the experiments:
\begin{itemize}
    \item \textbf{Cora \citep{McCallum2000}}: It consists of 2,708 scientific publications, it contains 5,429 links and each paper is either cited or referenced by at least one other paper. Each publication in the dataset is described by a 0/1-value vector indicating the absence/presence of the corresponding word in the dictionary. The dictionary consists of 1,433 unique words.
    \item \textbf{Pubmed \citep{Sen2008CollectiveClassification}}: It contains 19,717 scientific publications from the PubMed database about diabetes research. It includes a citation network with 44,338 links, where nodes represent publications and edges denote citation relationships. Each publication is characterized by a TF/IDF weighted word vector derived from a dictionary of 500 unique terms.
    \item \textbf{Photo \citep{Shchur2018Pitfalls}}: The Amazon Photos dataset represents a collaborative shopping network, where nodes correspond to one of eight product categories, and edges indicate co-purchase relationships between products. Each node is characterized by a fixed-size object vector with 745 dimensions, which captures the relevant attributes of the corresponding product. 
    \item \textbf{Ogbn-arxiv \citep{Hu2020OGB}}: The ogbn-arxiv dataset is a directed graph representing a network of citations of computer science articles from arXiv indexed by MAG \citep{Wang2020MAG}. The nodes correspond to individual papers, each of which is described by a 128-dimensional vector of characteristics obtained from the embeddings of words in their titles and annotations, while the directed edges indicate the citation ratio.
    \item \textbf{Arxiv 2023 \citep{he2023harnessing}}: The Arxiv 2023 dataset is a citation network in computer science from arXiv, in particular those published in 2023 or later. The nodes correspond to individual papers, and the edges represent citation relationships. 
    \item \textbf{Pwc Medium \citep{Saier2023CoConAD}}: The Papers With Code (PWC) medium dataset is a comprehensive resource for exploring the use of research artifacts such as datasets, methods, models, and tasks. It includes rich meta-information, such as paper descriptions, categories, and links to the corresponding code and articles. In this work, we only utilize the title and abstract as node features. 
    \item \textbf{Pwc Small}: In order to fill the dataset scale gap in a hundred nodes, we extracted one connected component from PWC medium dataset as a small graph. 
    \item \textbf{History \citep{li2024tegdb}}: The Goodreads-History is a dataset in book recommendations. The Goodreads datasets are the main source. Nodes represent meta information of nodes such as types of books and reviewers, while edges indicate book reviews. Node labels are assigned based on the book categories. The descriptions of books and user information are used as book and user node textual information. The task is to predict the preference of users for products.
    \item \textbf{Citationv8 \citep{yan2023a}}:  The CitationV8 dataset is an extracted graph extracted from DBLP \citep{10.1145/1401890.1402008}. Node textual attributes in CitationV8 are derived from the titles and abstracts. Each edge represents a citation relationship between two papers.

\end{itemize}

\end{document}